\pdfoutput=1 % arXiv: compile with pdfLaTeX
\documentclass[11pt]{article}

\usepackage[a4paper,textwidth=6in,textheight=9.2in,headsep=0.3in]{geometry}
\usepackage[T1]{fontenc}
\usepackage{lmodern}
\usepackage{microtype}
\usepackage[authoryear,round]{natbib}
\setcitestyle{authoryear,round,citesep={;},aysep={,},yysep={;}}
\usepackage{titlesec}
\titleformat*{\section}{\large\bfseries}
\titleformat*{\subsection}{\normalsize\bfseries}
\titleformat*{\subsubsection}{\normalsize\bfseries\itshape}
\titlespacing*{\section}{0pt}{2.2ex plus .5ex minus .2ex}{1.1ex plus .2ex}
\titlespacing*{\subsection}{0pt}{1.8ex plus .5ex minus .2ex}{0.8ex plus .2ex}
\usepackage{fancyhdr}
\renewcommand{\headrulewidth}{0pt}
\usepackage{caption}
\usepackage{amsmath,amsfonts,bm}

\def\eqref#1{equation~\ref{#1}}
\def\1{\bm{1}}

\DeclareMathAlphabet{\mathsfit}{\encodingdefault}{\sfdefault}{m}{sl}
\SetMathAlphabet{\mathsfit}{bold}{\encodingdefault}{\sfdefault}{bx}{n}

\usepackage[colorlinks,linkcolor=blue!55!black,citecolor=blue!55!black,urlcolor=blue!55!black]{hyperref}
\usepackage{booktabs}
\usepackage{xcolor}
\usepackage{graphicx}
\usepackage{amsmath,amssymb}
\usepackage{amsthm}
\usepackage{placeins}   % \FloatBarrier, used in the appendix
\usepackage{url}

\newtheorem{theorem}{Theorem}
\newtheorem{proposition}{Proposition}
\newtheorem{lemma}{Lemma}
\newtheorem{corollary}{Corollary}
\newtheorem{definition}{Definition}

\newcommand{\baselinev}{\textsc{Baseline}}
\newcommand{\notoken}{\textsc{NoTokenAttn}}
\newcommand{\mlponly}{\textsc{MlpOnly}}
\newcommand{\mlpwide}{\textsc{MlpOnly-Wide}}
\newcommand{\untied}{\textsc{Untied}}
\newcommand{\untiedln}{\textsc{Untied-LN}}
\newcommand{\frozensl}{\textsc{FrozenSlice}}
\newcommand{\sliceonce}{\textsc{SliceOnce}}
\newcommand{\flashslice}{FlashSlice}

\title{\bfseries Does Transolver really need a Transformer?}

\author{Shizheng Wen \qquad Siddhartha Mishra\\[4pt]
\normalsize Seminar for Applied Mathematics, ETH Zurich, Switzerland\\[1pt]
\normalsize\texttt{\{shizheng.wen, siddhartha.mishra\}@sam.math.ethz.ch}\\[6pt]
\normalsize Code: \url{https://github.com/Shizheng-Wen/flashslice}}
\date{}

\begin{document}
% only the appendix is listed in the table of contents
\addtocontents{toc}{\protect\setcounter{tocdepth}{-1}}

\maketitle
\thispagestyle{plain}

\begin{abstract}
The widely used Transolver family of neural operators is based on physics-attention, which softly assigns the points of an unstructured mesh to a small number of
slices, applies self-attention among the resulting tokens, and broadcasts
the result back to the points. We provide a comprehensive empirical and theoretical analysis to elucidate the mechanisms which are responsible for model performance. To this end, we perform careful ablations on a challenging suite of nine 3D fluid dynamics benchmarks to find that replacing token attention with a constant linear map does not affect the accuracy. Thus, Transolver does not need a Transformer at all. However, removing the global mixing (slicing/deslicing) or doing it only once leads to performance collapse. We leverage the theory of averaging neural operators to explain and corroborate our findings by showing that just slicing/deslicing, in conjunction with pointwise MLPs, already suffices for universal approximation of continuous operators and attention is redundant in this context. Finally, we provide a novel FlashAttention-style efficient implementation of the key slicing/deslicing module of Transolver. This 
\flashslice{} kernel streams slice and deslice over the points without
materializing the heavy slice-weight tensor, while reproducing the best available implementation to floating point error. At the same time, it leads to very significant memory and compute savings, particularly at large slice counts. 

\end{abstract}

\section{Introduction}
\label{sec:intro}
Partial differential equations (PDEs) describe diverse phenomena across the
physical sciences and engineering \citep{evans2010pde}, and simulating them accurately is
a central task of computational science. Classical numerical methods \citep{quarteroni1994numerical} do so
reliably, but a high-fidelity simulation of a flow around an industrial
geometry can occupy an HPC cluster for hours to days. This
cost becomes prohibitive in the \emph{many-query} settings of design
optimization, uncertainty quantification and inverse problems, where the
same equation is solved for thousands of geometries and operating
conditions. Scientific machine learning \citep{NAMLbook}, in the specific form of \emph{operator learning}, addresses this bottleneck by learning the \emph{solution
operator} of a PDE from a dataset of simulations, so that a new instance
costs a very cheap single forward model pass.

The first generation of operator learning algorithms was designed for structured
grids: the Fourier neural operator \citep{li2021fourier}, DeepONet
\citep{lu2021learning}, convolutional neural operators
\citep{raonic2023cno} and operator vision transformers \citep{pos1} are based on Cartesian grids.
Industrial engineering simulations, however, are based on unstructured meshes and point
clouds that can resolve localized features such as boundary layers, shocks and vortices at widely varying
point densities. A growing literature targets this setting with graph
neural networks \citep{pfaff2021meshgraphnets,rigno}, geometry-informed hybrids
\citep{li2023gino,wen2025gaot} and transformers whose tokens are mesh
point features \citep{cao2021transformer,hao2023gnot,li2023transformer}. Among the
latter, Transolver \citep{wu2024transolver} has become a widely used choice.
Its \emph{physics-attention} layer softly assigns each of the $N$ mesh
points  to a small number $G$ of \emph{slices}, computes self-attention
among the $G$ resulting tokens, and broadcasts the result back to the
original points, at a cost linear in $N$. It has been extended by Transolver++ \citep{luo2025transolverpp},
Transolver-3 \citep{zhou2026transolver3} and GeoTransolver
\citep{adams2025geotransolver}, among others, and adopted in industrial frameworks such
as NVIDIA PhysicsNeMo.

Despite its popularity, we argue that the Transolver model itself has not been properly analyzed and understood. In particular, which of its three ingredient operations: slicing/deslicing, attention among the tokens and persistent pointwise MLPs, is responsible for the accuracy shown by the model is completely unclear. Although the focus of successive works on Transolver has been to modify the attention mechanism, it is not even clear whether \emph{Transolver really needs a Transformer inside it or not?} or whether memory-heavy operations such as slicing/deslicing contribute significantly to its accuracy. Our main motivation for this paper is to perform a comprehensive empirical and rigorous mathematical analysis of the Transolver model to precisely answer such questions. In this regard, our contributions in this paper are, 
\begin{figure}[t]
\centering
\includegraphics[width=\linewidth]{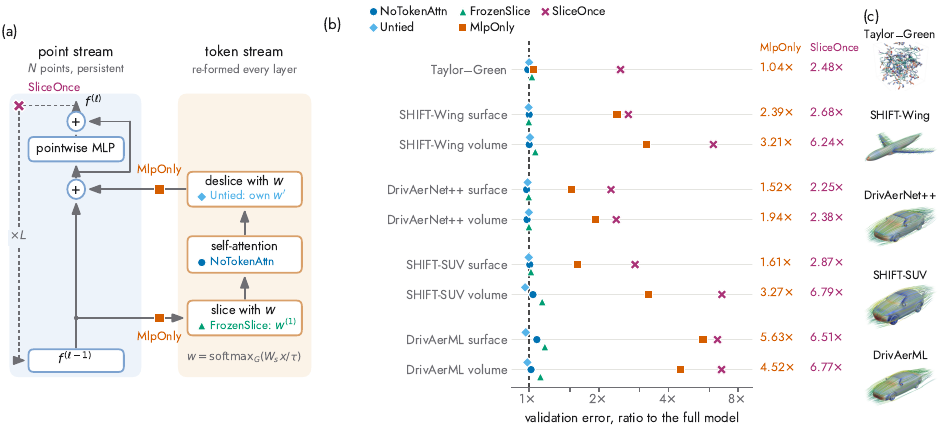}
\caption{\textbf{What Transolver's accuracy does, and does not, need.}
(a)~A Transolver layer: a persistent full-resolution point stream, coupled
through slice and deslice with shared per-point slice weights $w$ to
$H{\times}G$ tokens that are re-formed at every layer. Each ablation is
drawn on the element it changes; $w^{(1)}$ denotes the layer-$1$ weights
reused by \frozensl{} and $w'$ the separate deslice weights of \untied{}.
(b)~Validation-error ratio to the full model for the five ablations on
nine benchmarks, markers as in (a). The variants that keep the coupling
(\notoken{}, \untied{}, \frozensl{}) sit at $\approx1\times$; removing it
(\mlponly{}) costs $1.04$--$5.6\times$ and collapsing the point stream
(\sliceonce{}) up to $6.8\times$. (c)~One reference solution per benchmark
family, each beside the rows of (b) it belongs to: Taylor--Green
vorticity, and surface pressure with streamlines on SHIFT-Wing,
DrivAerNet++, SHIFT-SUV and DrivAerML.}
\label{fig:money}
\end{figure}
\begin{itemize}
\item \textbf{A careful empirical analysis of physics-attention across nine
benchmarks} (\S\ref{sec:anatomy}--\ref{sec:experiments}). We perform controlled
ablations of every design decision within the Transolver model, trained under an  identical protocol on nine
challenging 3D benchmarks (see Fig.~\ref{fig:money}) to demonstrate the \emph{surprising fact that replacing token
self-attention by a constant linear map matches the full model on all nine ($-2.6\%$ to $+7.9\%$)}, at every slice count from $8$ to $256$. Moreover, we identify that the slice-deslice operations are essential and removing them leads to significant loss of accuracy, increasing the error by $1.04$--$5.6\times$. Finally, we also show that slicing and deslicing have to be repeated every layer. Not doing so and simply running a deep transformer on the tokens leads to very large drops in accuracy, with errors $2.2$--$6.8\times$ those of the baseline model.
\item \textbf{A mathematical explanation from the theory of neural
operators} (\S\ref{sec:theory}). By interpreting slicing/deslicing as finite-rank nonlocal integral operators with a learned
partition of unity and leveraging the universal approximation theorem of
\citet{lanthaler2023nonlocality}, we show that
the transformer-free Transolver is already universal, that a purely pointwise stack is
bounded away from every nonlocal target at any width and depth, and that
slicing/deslicing once is not as efficient as doing it each layer. 
\item \textbf{\flashslice{}} (\S\ref{sec:flashslice}). We implement the key slicing/deslicing module as fused Triton kernels that stream over the points and never write
the slice-weight tensor, with a single-tile path for small slice counts
and a $G$-blocked path with saved softmax statistics for arbitrary ones.
They are exact and bit-deterministic, and train $1.35$--$1.7\times$ faster in up to $25\%$
less memory at the standard configuration over the current implementation and $4$--$6.5\times$ faster in
$85$--$96\%$ less memory at large slice counts.
\end{itemize}
Thus, we provide a comprehensive empirical analysis that identifies the key causal mechanism underlying Transolver's accuracy, explain it with mathematical theory and provide a highly efficient implementation of this core layer. 
\section{Physics-Attention as a Low-Rank Nonlocal Operator}
\label{sec:anatomy}
\paragraph{Problem formulation.} We consider PDEs $\mathcal{L}(u;a)=0$ in $\Omega\subset\mathbb{R}^{d}$
with boundary conditions $\mathcal{B}(u;a)=0$ on $\partial\Omega$, where
$a\in X$ collects the geometry, coefficients, initial data and operating
conditions of an instance and $u\in Y$ is its solution and $X,Y$ are suitable function spaces. The solution
operator $\mathcal{S}:X\to Y$, $a\mapsto u$, is what a neural operator
$\Psi_\theta$ approximates, given training pairs $(a_k,\mathcal{S}(a_k))$
with $a_k$ drawn from a distribution $\mu$ on $X$. In practice $a$ and $u$
are given on a mesh $\{x_i\}_{i=1}^{N}\subset\Omega$, so $\Psi_\theta$ maps
per-point inputs to per-point outputs. 
\paragraph{The Transolver layer.} Transolver \citep{wu2024transolver}
lifts the per-point inputs to features $f_i^{(\ell)}\in\mathbb{R}^{c}$ and
applies $L$ layers, with $1 \leq \ell \leq L$, of two interleaved streams. The \emph{token stream} is
physics-attention: per head, a linear projection of the features with a
learned temperature $\tau$ produces \emph{slice weights}
$w_{i,g}=\mathrm{softmax}_{g}\big((W_{s}f_i+b)/\tau\big)$, a soft
assignment of each point to $G$ slices (with layer index $\ell$ suppressed for notational convenience); the \emph{slice} operator outputs tokens as
weighted averages $z_g=\sum_i w_{i,g}v_i/\sum_i w_{i,g}$ of projected
features $v_i$; self-attention is applied among the $G$ tokens; and
\emph{deslice} broadcasts the result back, $u_i=\sum_g w_{i,g}z'_g$,
added residually to the features. The \emph{point stream} is a pointwise
two-layer MLP with residual connection at full resolution after every
attention sublayer, forming a persistent point-level representation. Two properties are worth mentioning: the slice weights are recomputed from the
current features at every layer, and the same weights are used in the slice and
deslice operators, i.e.\ they are \emph{tied}.

\paragraph{A rank-$G$ integral operator.} In the continuum limit, the slice
weights define $G$ functions $s_g:\Omega\to[0,1]$, $w_{i,g}=s_g(x_i)$, with $\sum_g s_g\equiv1$,
forming a learnable, data-dependent partition of unity. Informally, one can think of slice as analysis against
this set of functions and deslice as synthesis in its span, so that one
physics-attention sublayer implements the rank-$G$ integral operator
\begin{equation}
(\mathcal{K}v)(x) \;=\; \sum_{g,h=1}^{G} s_g(x)\, A_{gh}\,
\frac{\int_\Omega s_h(y)\, v(y)\,\mathrm{d}\mu(y)}{\int_\Omega s_h(y)\,
\mathrm{d}\mu(y)}.
%K(x,y) \;=\; \sum_{g,h} s_g(x)\,A_{gh}\,\tilde{s}_h(y).
\label{eq:lowrank}
\end{equation}
Here the core $A\in\mathbb{R}^{G\times G}$ is the nonlinear map produced by the token
self-attention in \baselinev{} version of Transolver, and is therefore content-dependent,
whereas in the attention-free variant \notoken{} below the core is constant:
$A$ is the identity, so the tokens do not interact, and only a learned linear
map acts on the channels of each token; a full layer reads
$\bar f^{(\ell)}=f^{(\ell-1)}+\mathcal{K}_\ell f^{(\ell-1)}$,
$f^{(\ell)}=\bar f^{(\ell)}+\mathrm{MLP}\big(\bar f^{(\ell)}\big)$.
Equation~(\ref{eq:lowrank}) is clearly a low-rank instance of the nonlocal neural
operator layer of \citet{kovachki2023neural}, distinguished by the partition of unity $\{s^\ell_g\}$,
re-estimated from $f^{(\ell-1)}$ at every layer and by the interleaving
with a persistent pointwise stream. Its range is
$\mathrm{span}\{s^\ell_g\}$ regardless of $A$: thus, the attention re-parameterizes
the core but cannot enlarge what the slice/deslice pair, the
\emph{coupling}, expresses.

\paragraph{Ablation suite.} We isolate each design decision in Transolver with
variants of \baselinev{}. Each ablation changes exactly one ingredient in the full Transolver model,
except \untiedln{}, which is deliberately the conjunction of two ingredients
plus a normalization and reproduces the LinearNO layer, recently considered in 
\citet{hu2026linearno} (Appendix~\ref{app:linearno}).
\begin{center}\footnotesize
\setlength{\tabcolsep}{3pt}
\begin{tabular}{@{}l p{0.79\linewidth}@{}}
\toprule
\notoken{} & token self-attention $\to$ per-token linear map (no $QK^{\top}$ softmax; tokens do not interact)\\
\mlponly{} & attention sublayer removed entirely (pointwise residual MLP stack; no cross-point path)\\
\mlpwide{} & \mlponly{} widened to $112\%$ of \baselinev{} parameters (capacity control)\\
\untied{}  & deslice gets its own projection and temperature (tests weight tying)\\
\untiedln{} & \untied{} $+$ \notoken{} $+$ analysis-side normalization along $N$ (LinearNO)\\
\frozensl{} & slice weights from layer $1$, reused by all layers (tests per-layer re-slicing)\\
\sliceonce{} & slice once $\to$ transformer on $G$ tokens $\to$ deslice once (Perceiver; no point stream)\\
\bottomrule
\end{tabular}
\end{center}
\section{Experiments}
\label{sec:experiments}
\paragraph{Benchmarks.} We use nine dataset configurations from CFD, the three-dimensional Taylor--Green vortex which is the prototype of turbulent time-varying flows \citep{gencfd} and aerodynamics at industrial scale, namely DrivAerNet++
\citep{elrefaie2024drivaernet} surface and volume (fixed inflow),
SHIFT-SUV surface and volume (fixed inflow, geometry the only input),
the transonic SHIFT-Wing surface and volume (varying inflow and geometry),
and DrivAerML \citep{ashton2024drivaerml} surface and volume (fixed inflow;
up to $1.3\times10^{8}$ points per sample). The suite spans a very diverse set of operators, ranging from inputs that
contain the local field state to solutions set by a global interaction of
geometry and operating conditions (Appendix~\ref{app:datasets}).
\paragraph{Protocol.} For each benchmark, all variants of Transolver are trained to a common step budget on identical data pipelines with the Transolver backbone of
\citet{wu2024transolver} ($L{=}8$, width $256$, $8$ heads, $G{=}32$;
$3.87$M parameters); the headline pair \baselinev{}/\notoken{} uses three
seeds. We report the best validation relative $L^{1}$ error for each variant. Step budgets are checked rather than assumed  and every ablation is verified from the
trained artifacts through checkpoint contents and bit-level freeze tests, see Appendix~\ref{app:protocol} for details. 
\paragraph{The token self-attention is removable on all nine
benchmarks, i.e., Transolver does not really need a Transformer!} We observe from Table~\ref{tab:main} that \notoken{}, which
replaces the content-dependent attention among the $G$ tokens by a
learned linear map applied to each token separately, so that the tokens no
longer interact, matches \baselinev{} everywhere: it
lies within $-2.6\%$ to $+1.8\%$ on seven benchmarks and at $+3.8\%$ and
$+7.9\%$ on the remaining two (SHIFT-SUV volume and DrivAerML surface),
against the $1.04$--$6.8\times$ spread of the coupling ablations in the
same table, and on four benchmarks it is the best variant overall (tied
with \baselinev{} on SHIFT-Wing volume). 
\paragraph{Untying the projections is not what makes it removable.} A
possible explanation of the above result is that tying the slice and
deslice projections confines feature interaction to the same token, so
that the attention has to supply the missing cross-slice routing. On this reading, untying
these projections is what renders attention redundant \citep{hu2026linearno}. This explanation and ours (\S\ref{sec:theory})
disagree on the tied and attention-free
layer, which \citet{hu2026linearno} predict to perform poorly, whereas \notoken{} already
demonstrates that this is not the case. Moreover, in Fig.~\ref{fig:controls}(a), we present results for the $2\times2$ grid
of token attention $\times$ projection tying on SHIFT-SUV surface, the
untied and attention-free cell corresponds exactly to the LinearNO layer of \citet{hu2026linearno} ported
into our scaffold at matched latent width and capacity
(Appendix~\ref{app:linearno}). The grid clearly shows that there is no difference between these configurations: all four cells lie
within $1.05\%$ of one another, and both main effects ($+0.54\%$ for
removing the attention, $-0.51\%$ for untying) and their interaction
($-0.41\%$) are about one standard deviation of the baseline's own
three-seed spread of $\pm0.47\%$. The attention is redundant, and the
tying is clearly orthogonal to that redundancy. LinearNO also normalizes its
slice weights by a softmax over all points instead of per point, and at its
published latent width a training iteration is $6.8\times$ slower with no
change in accuracy (Appendix~\ref{app:linearno}).
\paragraph{The result does not depend on the slice count.} The core $A$
is $G\times G$, so a larger number of slices $G$ provides attention more information to mix.
In Fig.~\ref{fig:controls}(b), we present results with $G$ swept over a $32\times$ range at a
matched budget. \notoken{} stays within $1.5\%$ of \baselinev{} at every
$G$ and within $1\%$ for any $G\ge32$, the gap tightening as $G$ grows,
including at $G{=}256$, where the tokens per head outnumber the head
dimension eightfold. Moreover, model capacity is always matched
along the sweep ($G{:}\,32\to256$ adds $1.6\%$ to the parameter count). Notably, the
baseline itself moves by only $0.6\%$ over the sweep on this benchmark. 
% ============================================================================
% T1 -- Main results table: "Does Transolver need a Transformer?"
% Numbers FINAL as of 2026-08-13. Provenance: cameleon repo, wandb project
% `transolver-attention`;
% per-run best eval read from trainer_state.json (archived on capstor store).
% SHIFT-Wing surface row added 2026-08-06 (round 1 + headline-pair seeds 43/44;
% every run completed the full 250k cosine except untied, see app:budgets).
%
% TEAM-7 and TEAM-13 REMOVED 2026-08-19: the advisor is not releasing those two
% datasets yet. The rows, their footnote, and everything else that rested on them
% are preserved verbatim in tables/ARCHIVED_electromagnetics.tex, which is
% deliberately not \input by the paper.
%
% SHIFT-SUV surface and volume rows added 2026-08-18/19 (50k protocol).
% SHIFT-Wing volume row added 2026-09-11: reworked files, one 12h leg, iso-step
% at 140k (scripts/collect_isostep.py); three seeds each for the headline pair,
% s42 of the baseline resumed once after a node loss (see app:budgets).
% The step footnotes b-g moved to Appendix C.1 (app:budgets) on 2026-09-23.
%
% Requires in preamble:
%   \usepackage{booktabs}
%   \usepackage{xcolor}
% ============================================================================
\begin{table}[t]
\caption{\textbf{Ablating physics-attention across nine benchmarks.} Best
validation relative $L^{1}$ (lower is better), averaged over the predicted
quantities in normalized units; per-quantity errors in physical units are in
Tables~\ref{tab:pervar-l1} and~\ref{tab:pervar-l2}. Mean$\pm$std over
three seeds for \baselinev{} and \notoken{}, single seed otherwise; best per
row in \textbf{bold}. Three significant figures; standard
deviations to one.}
\label{tab:main}
\begin{center}
\footnotesize
\setlength{\tabcolsep}{4pt}
\resizebox{\textwidth}{!}{%
\begin{tabular}{lcccccc}
\toprule
Dataset & \baselinev{} & \notoken{} & \mlponly{} & \untied{} & \frozensl{} &
 \sliceonce{} \\
\midrule
Taylor--Green & $0.0756{\scriptstyle\pm.0001}$ & $\mathbf{0.0745}{\scriptstyle\pm.0001}$ &
  $0.0786$ & $0.0755$ & $0.0777$ & $0.188$ \\
\midrule
\multicolumn{7}{l}{\emph{SHIFT-Wing}}\\
~~surface & $0.0580{\scriptstyle\pm.0005}$ & $0.0579{\scriptstyle\pm.0005}$ &
  $0.139$ & $\mathbf{0.0576}$ & $0.0577$ & $0.155$ \\
~~volume  & $\mathbf{0.127}{\scriptstyle\pm.0007}$ & $\mathbf{0.127}{\scriptstyle\pm.0008}$ &
  $0.408$ & $0.128$ & $0.135$ & $0.793$ \\
\midrule
\multicolumn{7}{l}{\emph{DrivAerNet++}}\\
~~surface & $0.193{\scriptstyle\pm.005}$ & $\mathbf{0.188}{\scriptstyle\pm.002}$ &
  $0.294$ & $0.190$ & $0.192$ & $0.435$ \\
~~volume  & $0.163{\scriptstyle\pm.0003}$ & $\mathbf{0.159}{\scriptstyle\pm.001}$ &
  $0.315$ & $0.162$ & $0.162$ & $0.386$ \\
\midrule
\multicolumn{7}{l}{\emph{SHIFT-SUV}}\\
~~surface & $0.157{\scriptstyle\pm.0007}$ & $0.157{\scriptstyle\pm.00003}$ &
  $0.252$ & $\mathbf{0.156}$ & $0.159$ & $0.449$ \\
~~volume  & $0.0601{\scriptstyle\pm.0006}$ & $0.0624{\scriptstyle\pm.0009}$ &
  $0.197$ & $\mathbf{0.0577}$ & $0.0683$ & $0.408$ \\
\midrule
\multicolumn{7}{l}{\emph{DrivAerML}}\\
~~surface & $0.089{\scriptstyle\pm.001}$ & $0.096{\scriptstyle\pm.000}$ &
  $0.501$ & $\mathbf{0.086}$ & $0.104$ & $0.579$ \\
~~volume  & $0.110{\scriptstyle\pm.005}$ & $0.112{\scriptstyle\pm.004}$ &
  $0.497$ & $\mathbf{0.108}$ & $0.123$ & $0.745$ \\
\bottomrule
\end{tabular}}
\end{center}
\end{table}

\paragraph{The slice/deslice coupling and the point stream are not
removable.} Removing the entire attention sublayer (\mlponly{}) costs
$1.04\times$ to $5.6\times$ (Table~\ref{tab:main}). This is not explainable in terms of model capacity: widening the pointwise MLP to $112\%$ of the baseline's
parameters (\mlpwide{}, Fig.~\ref{fig:controls}(c)) recovers at most
$3\%$ of the gap and leaves the model $1.6\times$ (surface) and
$1.9\times$ (volume) less accurate on DrivAerNet++. Nor is this difference attributable to budget differences; on SHIFT-SUV the \mlponly{} ratio grows from $1.52\times$ to $1.62\times$ (surface) and from
$2.43\times$ to $3.24\times$ (volume) as the budget rises from $10$k to
$50$k steps (Appendix~\ref{app:protocol}). Conversely, \sliceonce{} keeps
the attention and has \emph{more} parameters than the baseline, but
slices once and confines all further depth to the $G$ tokens; it is the
worst variant on every benchmark, at $2.2$--$6.8\times$. On the other hand, re-using the layer-$1$ slice weights in every layer (\frozensl{}) costs
only $-0.5\%$ to $+17\%$, mostly on the volume tasks. Thus, cross-point coupling
through slice/deslice, re-estimated at every layer and interleaved at full
resolution with the pointwise stream, is clearly the most performant mechanism.
\paragraph{The coupling is needed where the physics is nonlocal.} The \mlponly{} penalty tracks the physics of each
task. It is negligible ($1.04\times$) on Taylor--Green, whose inputs
contain the current field state at every point so that short-horizon
prediction is largely local. On the aerodynamic benchmarks the field at a
point is set by a global interaction between the whole body and the flow
around it, which a function of local descriptors can absorb only in part
(Appendix~\ref{app:mechanism}), and the size of the penalty does not follow
the inflow: it is moderate ($1.5$--$3.3\times$) for DrivAerNet++ and
SHIFT-SUV and largest for DrivAerML (up to $5.6\times$), all three at fixed
inflow, while SHIFT-Wing, the only benchmark whose inflow varies, lies
between these ($2.4$--$3.2\times$). SHIFT-SUV isolates a second axis, the
kind of target: same cars, split, budget and fixed condition, differing
only in whether the target is a trace on the body or a field transported
away from it, and every coupling-sensitive number roughly doubles from
surface to volume (\mlponly{} $1.61\to3.27\times$, \sliceonce{}
$2.87\to6.79\times$); SHIFT-Wing shows the same changes under varying
inflow (\mlponly{} $2.39\to3.21\times$, \sliceonce{} $2.68\to6.24\times$). The
per-field decomposition makes the same point inside each task
(Figure~\ref{fig:controls}(d)): on every aerodynamic benchmark the
pressure, which can be recovered as the solution of a global elliptic problem, degrades more than
the velocity or wall shear beside it when the coupling is removed, by
$1.8$--$14.8\times$ against $1.3$--$5.0\times$. Figure~\ref{fig:fields}
(Appendix~\ref{app:results}) shows the corresponding fields, and
Appendix~\ref{app:fieldset} the same figure on every benchmark.
\begin{figure}[h!]
\centering
\includegraphics[width=\linewidth]{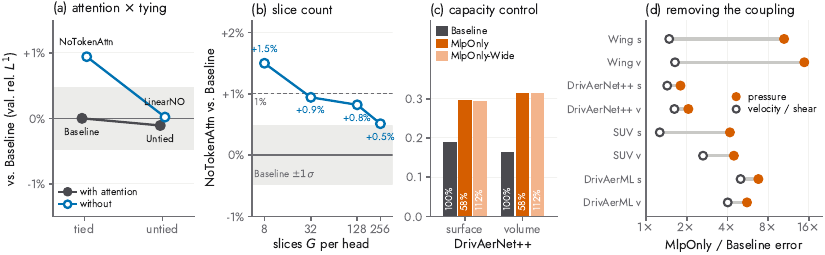}
\caption{\textbf{Four controls.} (a)~Token attention $\times$ projection
tying on SHIFT-SUV surface, change against the tied \baselinev{};
\untiedln{} is the released LinearNO layer \citep{hu2026linearno} in our
scaffold. (b)~\notoken{} against \baselinev{} across the slice count $G$,
same benchmark. Grey band in (a, b): the baseline's three-seed spread.
(c)~Capacity control on DrivAerNet++; bars labelled by parameter count
relative to \baselinev{}. (d)~Test error of \mlponly{} over \baselinev{}
per field, pressure against velocity or wall shear
(Table~\ref{tab:pervar-l1}).}
\label{fig:controls}
\end{figure}
\section{Why the Attention Is Removable and the Coupling Is Not}
\label{sec:theory}
Our aim in this section is to explain the results of Sec.~\ref{sec:experiments} using mathematical theory. Our starting point is the following proposition on the universality of the Transolver layer, Eq.~(\ref{eq:lowrank}),
\begin{proposition}[The constant-core class is universal]
\label{prop:p1}
Let $\Omega\subset\mathbb{R}^d$ be a bounded Lipschitz domain and
$\Psi^\dagger:\mathcal{X}\to\mathcal{Y}$ a continuous operator between
$C^{s}$ or $W^{s,p}$ spaces of functions on $\Omega$. Fix $G\ge1$ and let $\Psi_A$ be the class of operators realized by a
stack of layers of the form~(\ref{eq:lowrank}) with a constant core $A$
(\notoken{}), interleaved with pointwise MLPs of unbounded width. Then, for every compact $\mathsf{K}\subset\mathcal{X}$ and $\epsilon>0$ there exists an operator $\Psi\in\Psi_A$ satisfying
$\sup_{u\in\mathsf{K}}\|\Psi(u)-\Psi^\dagger(u)\|_{\mathcal{Y}}\le\epsilon$.
\end{proposition}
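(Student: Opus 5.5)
The plan is to reduce Proposition~\ref{prop:p1} to the universal approximation theorem for averaging neural operators (ANOs) of \citet{lanthaler2023nonlocality}. That theorem says a stack of layers
\[
f \mapsto \sigma\big(W f(x) + b + \textstyle\frac{1}{|\Omega|}\int_\Omega f(y)\,\mathrm{d}y\big),
\]
interleaved with pointwise lifting and projection networks, is universal for continuous operators between $C^s$ or $W^{s,p}$ spaces on bounded Lipschitz domains, uniformly on compact sets. The only nonlocal ingredient an ANO needs is the global mean. It therefore suffices to show that one constant-core layer~(\ref{eq:lowrank}), combined with pointwise MLPs, can realize (or approximate arbitrarily well, uniformly on the relevant compact set of features) the ANO layer. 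Layers can then be composed, and errors propagated through the stack using the uniform continuity of each layer on compact sets.

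\textbf{Step 1: exact mean via a degenerate partition.} Since the slice weights are $\mathrm{softmax}_g((W_s f+b)/\tau)$, I would choose $W_s=0$ and $b=0$. Then $s_g\equiv 1/G$ for every $g$, and the normalized analysis gives
\[
z_h=\frac{\int s_h v\,\mathrm{d}\mu}{\int s_h\,\mathrm{d}\mu}=\frac{1}{|\Omega|}\int_\Omega v\,\mathrm{d}\mu
\]
with $\mu$ the Lebesgue measure; the same formula holds for any finite measure after renormalization. Synthesis with $A=I$, composed with the per-token linear channel map $M$, yields $(\mathcal{K}v)(x)=\sum_g \tfrac1G M z_g = M\,\overline{v}$, a constant function. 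Hence $\mathcal{K}_\ell$ reproduces the global average exactly, for any fixed $G\ge1$. If degenerate weights are disallowed (e.g.\ $\tau$ bounded), a large bias on a single slice drives $s_1\to 1$ uniformly, and the resulting error tends to zero uniformly.

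\textbf{Step 2: assemble the residual block into an ANO layer.} A Transolver-style layer computes $\bar f=f+M\overline{f}$ followed by $f+\mathrm{MLP}(\bar f)$. ANO needs $\sigma(Wf+b+\overline f)$, which depends on $f$ and $\overline f$ separately. I would handle this by enlarging the channel dimension. The preceding pointwise MLP copies $f$ into two channel blocks, $(f,f)$. The projection $v$ selects the second block, and $M$ is chosen so that the sublayer writes $\overline f$ into a fresh zero block, giving the residual $(f,f,\overline f)$. The pointwise MLP then approximates $(f,\overline f)\mapsto \sigma(Wf+b+\overline f)-\text{(residual)}$ uniformly on the compact range, by the classical universal approximation theorem, since MLP width is unbounded. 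Residual connections are absorbed by subtracting the identity inside the MLP.

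\textbf{Step 3: control errors and check the function-space setting.} This is the main obstacle: showing that the uniform pointwise approximation errors translate into small errors in the $\mathcal{Y}$-norm, including derivatives when $s>0$. For this I would follow the proof structure of \citet{lanthaler2023nonlocality}, who build their universality argument from exactly such pointwise approximations plus averages, with the lifting and projection handling the Sobolev and H\"older regularity. Because Step 1 realizes the average exactly rather than approximately, their construction transfers verbatim once each ANO layer is replaced by its realization from Step 2. Composing the finitely many layers with induction on approximation error, using compactness of $\mathsf K$ and continuity of each layer, gives the bound $\sup_{u\in\mathsf K}\|\Psi(u)-\Psi^\dagger(u)\|_{\mathcal Y}\le\epsilon$.
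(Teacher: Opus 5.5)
Your central step is the paper's. Zero slice logits give $s_g\equiv1/G$ for every $G$, so every normalized token is the global mean, and a constant core turns~(\ref{eq:lowrank}) into the exact averaging coupling. Universality is then inherited from \citet{lanthaler2023nonlocality}.

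Where you differ is in what you emulate. You rebuild a multi-layer ANO block by block, with an MLP approximating $\sigma(Wf+b+\bar f)$ minus the residual. You then have to propagate those errors through the stack and into the $\mathcal{Y}$-norm, which you rightly call the hard part.

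The paper instead targets the single-average form~(\ref{eq:ano}). One uniform coupling supplies the only average, the MLP before it realizes $R$, the MLP after it realizes $Q$, and all other sublayers can be switched off. The only approximations are then those of $R$ and $Q$ by ordinary networks, which is exactly what the ANO theorem already covers. Because the encoding is constant in space, the output depends on $x$ only through $Q$, and no intermediate features enter the $\mathcal{Y}$-norm.

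Your route covers general multi-layer ANOs but is longer. If you keep it, realize each ANO layer exactly rather than approximately. Let the MLP's hidden layer compute $\sigma(Wf+b+\bar f)$, with $\sigma$ the MLP activation, and write the result into a fresh block of channels. Unbounded width allows this, and it removes the inter-layer error propagation.

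One point you must make explicit, as the paper does: $R$ and $Q$ take the coordinate as an argument. The point stream therefore has to carry $x$ from the input to the final projection (the $k$ reserved channels of Proposition~\ref{prop:p3}). As written, your Step~2 absorbs the residual by subtracting the identity, which would discard those channels before $Q$ needs them. Without them, a spatially constant input produces a spatially constant output, so even a constant operator $u\mapsto\eta$ with $\eta$ non-constant in $x$ could not be approximated.
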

The proof of this proposition is detailed in Appendix~\ref{app:theory} and is based on the universality of \emph{averaging neural operators} (ANOs) of \cite{lanthaler2023nonlocality}, where the authors show that neural operators of the form, 
\begin{equation}
\Psi(u)(x) \;=\; Q\Big(\sigma\big(\textstyle\int_\Omega R(u(y),y)\,\mathrm{d}y\big),\,x\Big),
\label{eq:ano}
\end{equation}
with pointwise MLPs $R,Q$ already suffice for universal approximation of continuous operators. Setting all slice logits to zero makes $s_g\equiv1/G$ a partition of
unity, every token becomes the global mean, and Eq.~(\ref{eq:lowrank}) provides the ANO
coupling \eqref{eq:ano}, while the pointwise MLPs supply $R$ and $Q$, and the construction uses
no explicit property of $G$.

The proposition leads to the following immediate conclusions i) to the best of the authors' knowledge, it provides the first universality result for the original Transolver architecture as it is straightforward to realize token attention weights to obtain a constant $A$ in \eqref{eq:lowrank}, making Transolver universal and ii) the result clearly suggests that using a transformer is not necessary for universal approximation and makes this observation of Sec.~\ref{sec:experiments} unsurprising. 

We can also explain why pointwise MLPs do not suffice for accuracy as reported in Sec.~\ref{sec:experiments}. To this end, we define
\begin{definition}[Locality gap]
\label{def:localitygap}
Call $\Phi$ pointwise if $\Phi(u)(x)$ depends on $u$ only through $u(x)$.
For a continuous operator $\Psi^\dagger:\mathcal{X}\to\mathcal{Y}$ and
compact $\mathsf{K}\subset\mathcal{X}$ let
$\delta_{\mathrm{loc}}(\Psi^\dagger):=\inf_{\Phi\ \mathrm{pointwise}}\
\sup_{u\in\mathsf{K}}\|\Psi^\dagger(u)-\Phi(u)\|_{\mathcal{Y}}$.
\end{definition}
\begin{corollary}[Depth does not repair \mlponly{}]
\label{cor:mlponly-depth}
A composition of pointwise maps is pointwise, and the pointwise class is
closed under uniform limits on $\mathsf{K}$. Hence for every $\Psi^\dagger$
with $\delta_{\mathrm{loc}}(\Psi^\dagger)>0$, the error of \mlponly{} is
at least $\delta_{\mathrm{loc}}(\Psi^\dagger)$ at every depth and every
width.
\end{corollary}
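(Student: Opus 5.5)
The argument has three parts: show that \mlponly{} is pointwise at any depth and width, show that the pointwise class is closed under the limits that matter, and conclude from Definition~\ref{def:localitygap}.

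\textbf{Step 1: every \mlponly{} network is pointwise.} A single residual MLP sublayer has the form $f\mapsto f+\mathrm{MLP}(f)$ applied at each point, so its output at $x$ depends only on $f(x)$; lifting and projection layers are of the same type. If $\Phi_1,\Phi_2$ are pointwise with $\Phi_j(u)(x)=\phi_j(u(x))$, then $(\Phi_2\circ\Phi_1)(u)(x)=\phi_2(\phi_1(u(x)))$. By induction on depth, every \mlponly{} network has the form $u\mapsto\phi(u(\cdot))$ for some continuous $\phi$. This holds independently of width and depth. In particular, every \mlponly{} network is admissible in the infimum defining $\delta_{\mathrm{loc}}(\Psi^\dagger)$, so for each fixed network
$$\sup_{u\in\mathsf K}\|\Psi^\dagger(u)-\Phi(u)\|_{\mathcal Y}\ge\delta_{\mathrm{loc}}(\Psi^\dagger).$$

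\textbf{Step 2: closure under uniform limits.} This is needed to state the bound for the best error over the whole family, for example a limit of networks as width or depth grows. Suppose $\Phi_n$ are pointwise and $\Phi_n\to\Phi$ uniformly on $\mathsf K$ in $\mathcal Y$. Fix $u,v\in\mathsf K$ and a point $x$ with $u(x)=v(x)$. Then $\Phi_n(u)(x)=\Phi_n(v)(x)$ for every $n$, and this identity should pass to the limit.

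This is the main obstacle, because $\mathcal Y$-convergence is not pointwise convergence. I would handle it in two cases:
\begin{itemize}
\item If $\mathcal Y=C^s$, norm convergence implies pointwise convergence, and the identity passes to the limit directly.
\item If $\mathcal Y=W^{s,p}$, I would pass to an a.e.\ convergent subsequence. It is then more robust to phrase pointwise dependence as: $\Phi(u)=\Phi(v)$ a.e.\ on the set $\{u=v\}$. That property is preserved under $L^p$ limits, since $\Phi_n(u)-\Phi_n(v)$ vanishes on $\{u=v\}$ and converges in $L^p$ to $\Phi(u)-\Phi(v)$.
\end{itemize}

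\textbf{Step 3: conclusion.} The infimum over the pointwise class equals the infimum over its closure, and the \mlponly{} networks form a subset of that class. Hence
$$\inf_{\text{\mlponly{}}}\ \sup_{u\in\mathsf K}\|\Psi^\dagger(u)-\Phi(u)\|_{\mathcal Y}\;\ge\;\delta_{\mathrm{loc}}(\Psi^\dagger)>0$$
at every depth and width. By Step 2, the bound also holds for any uniform limit of such networks.
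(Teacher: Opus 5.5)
Your proposal is correct and follows the paper's own proof. It establishes that \mlponly{} is pointwise at every depth and width by induction on composition (the paper's Lemma~\ref{lem:locality}), proves closure of the pointwise class by passing $\Phi_n(u_1)(x)=\Phi_n(u_2)(x)$ to the uniform limit, and reads the bound off the infimum in Definition~\ref{def:localitygap}.

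Your reformulation for $\mathcal{Y}=W^{s,p}$ is a worthwhile tightening: you take pointwise to mean $\Phi(u)=\Phi(v)$ a.e.\ on $\{u=v\}$, which is preserved under $L^p$ limits, whereas the paper's argument evaluates at individual points as if $\mathcal{Y}=C^s$. As in the paper, closure is not needed for the inequality itself, since each network is already admissible in the infimum. It is needed only to guarantee $\delta_{\mathrm{loc}}(\Psi^\dagger)>0$ whenever $\Psi^\dagger$ is not pointwise on $\mathsf{K}$.
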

Thus, \mlponly{} cannot be universal as long as the operator is not entirely pointwise. Moreover, this is the sense in which the \mlponly{} gap is not a capacity deficit
that \mlpwide{} could close. It also orders the tasks:
$\delta_{\mathrm{loc}}$ is largest for operators with
global support, the elliptic pressure problem of low-Mach aerodynamics
being the canonical case, and smallest for the short-horizon map of Taylor--Green.

Finally, the following proposition is suggestive of some of the other observations of Sec.~\ref{sec:experiments},
\begin{proposition}[Nonlocal budgets]
\label{prop:p3}
Let $c$ be the width of the point stream. (i)~If $\Psi^\dagger$ is $\epsilon$-approximable by an ANO \eqref{eq:ano} of encoding
width $J\le c-k$, with $k$ channels carrying the coordinates, then a
constant-core stack of layers of the form~(\ref{eq:lowrank}) attains accuracy
$\epsilon$ for \emph{every} $G\ge1$. (ii)~One application of
$\mathcal{K}$ in \eqref{eq:lowrank} transmits at most $G\,c$ real functionals of the field, and
every point receives a fixed linear image of them. A stack that couples at
each of $L$ layers carries at most $L\,G\,c$; \sliceonce{} carries at most
$G\,c$ whatever the depth of its token stack; \mlponly{} carries $0$.
\end{proposition}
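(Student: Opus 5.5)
For part (i), we build the constant-core stack explicitly. No density argument is needed, because the hypothesis already gives an ANO \eqref{eq:ano} $\Psi(u)(x)=Q(\sigma(\int_\Omega R(u(y),y)\,\mathrm{d}y),x)$ with encoding width $J\le c-k$. The construction has three stages.

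\emph{Lifting.} The pointwise lifting and the first MLP write $k$ channels carrying the coordinate $x$, keep $u(x)$ in further channels as needed, and place $R(u(x),x)\in\mathbb{R}^J$ in $J$ reserved channels. The requirement is $J+k\le c$. If $u(x)$ must also be carried, we expect it to fit once $R$ has been evaluated, because that channel can then be overwritten.

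\emph{Averaging.} In the first coupling layer we set all slice logits to zero, as in the discussion after Proposition~\ref{prop:p1}. Then $s_g\equiv1/G$, and every token equals the $\mu$-mean of $v$. We choose the value projection to select the $J$ encoding channels. With $A=I$ and a per-token linear map $M$, deslicing returns
\[
\sum_g s_g(x)\,M\bar v \;=\; M\bar v,\qquad \bar v=\frac{1}{\mu(\Omega)}\int_\Omega R(u(y),y)\,\mathrm{d}\mu(y),
\]
at every point $x$. The identity $\sum_g s_g\equiv1$ is the key fact, and it holds for every $G\ge1$. We rescale $M$ by $\mu(\Omega)$ to recover the integral, and we write the output residually into the $J$ encoding channels after the MLP has zeroed them. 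If zeroing is awkward, we instead write into the coordinate-free complementary channels. This is where the bookkeeping of the $c-k$ free channels is tight.

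\emph{Decoding.} A subsequent MLP applies $\sigma$ and then $Q(\cdot,x)$, reading $x$ from the coordinate channels. The remaining layers are switched off by zero projections. Every step is exact apart from MLP approximation of $R$ and $Q$ on compact sets, so the stack attains accuracy $\epsilon$, up to an arbitrary slack, for each $G$.

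For part (ii), we count functionals. One application of $\mathcal{K}$ depends on the field only through the $G$ tokens $z_h=\int s_h v\,\mathrm{d}\mu/\int s_h\,\mathrm{d}\mu\in\mathbb{R}^c$, which amounts to $Gc$ real numbers. The output at $x$ is $\sum_{g,h}s_g(x)A_{gh}z_h$, a linear image of these numbers with coefficients depending on $x$ only through $s(x)$. This is the precise meaning of ``fixed linear image'': linear in the transmitted functionals, and nonlinear only through the weights already present at $x$. Everything else in a layer is pointwise, and by Corollary~\ref{cor:mlponly-depth} pointwise maps carry no cross-point information. So the nonlocal information available at any point after $L$ layers factors through at most $L$ such bottlenecks, giving at most $LGc$ functionals. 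This is a counting of the arguments through which cross-point dependence enters, not a dimension count of a function space.

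The two remaining cases follow directly. \sliceonce{} has a single slice and a single deslice, and its token transformer acts only on the $Gc$ numbers already extracted, so it cannot create new functionals of the field; the bound is $Gc$ regardless of depth. \mlponly{} is pointwise, so it carries $0$.

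The main obstacle is making ``carries at most'' rigorous, since it requires a precise notion of information flow. I would formalize it by showing inductively that $f^{(\ell)}(x)=F_\ell\big(u(x),x,\Lambda_1(u),\dots,\Lambda_\ell(u)\big)$, where each $\Lambda_j$ takes values in $\mathbb{R}^{Gc}$. The caveat is that $\Lambda_j$ may itself depend on earlier $\Lambda$'s, so the count bounds the cumulative real dimension of these variables.
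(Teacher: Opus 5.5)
Both parts follow the paper's own proof. In (i), uniform weights $s_g\equiv1/G$ make the coupling the global mean for every $G$, the surrounding pointwise MLPs supply $R$ and $Q$, and $k$ reserved channels carry $x$. In (ii), you count the $G\,c$ token entries, use that deslice is linear in them, and invoke locality of the pointwise sublayers. Your inductive form $f^{(\ell)}(x)=F_\ell(u(x),x,\Lambda_1(u),\dots,\Lambda_\ell(u))$ makes ``carries'' somewhat more precise than the paper does.

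The step that fails as written is the extra channel bookkeeping you add in (i). In $\bar f^{(\ell)}=f^{(\ell-1)}+\mathcal{K}_\ell f^{(\ell-1)}$ the coupling reads and writes in one residual step, with no MLP in between. So the averaged channels still contain $R(u(x),x)$ when the mean is added to them, and if an earlier MLP had zeroed them there would be nothing to average. Your fallback of writing into complementary channels needs $2J+k\le c$, which the hypothesis $J\le c-k$ does not provide.

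Storing $u(x)$ is unnecessary, since the lifting sees $(u(x),x)$ and can emit $R(u(x),x)$ directly. The real conflict is therefore only between $R(u(x),x)$ and its mean. The paper's proof skips this point by treating the coupling output as the mean itself.

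One way to close it within $J+k$ channels is to scale the per-token map by a large $\alpha$:
\begin{itemize}
\item The encoding channels then hold $R(u(x),x)+\alpha\bar v$.
\item The decoding MLP reads them through a factor $1/\alpha$, and $\bar v+R(u(x),x)/\alpha$ is uniformly within $\alpha^{-1}\sup|R|$ of $\bar v$ over $u\in\mathsf{K}$, provided $R$ is bounded on the values taken by $(u(x),x)$.
\item Uniform continuity of $(y,x)\mapsto Q(\sigma(\mu(\Omega)y),x)$ on a compact set then gives accuracy $\epsilon$ up to arbitrary slack in sup-type norms.
\item The last MLP can cancel its own residual input, because it sees it.
\end{itemize}
In output norms involving derivatives, the leaked term $R(u(x),x)/\alpha$ would still need separate control.
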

Part~(i) of the above proposition suggests why accuracy may be flat over a $32\times$ range of $G$
(Figure~\ref{fig:controls}(b)): universality is not bought with the slice
count. Part~(ii) provides the asymmetry behind \sliceonce{}: all nonlocal
information reaching its output passes through one round of $G\,c$
numbers, whereas variants that couple at every layer have $L$ rounds,
each re-estimated from transformed features, and only these improve with
depth. The bound is rather loose ($G\,c\approx8\times10^{3}$ here); the size of the
penalty comes from a second missing ingredient, the universal pointwise
decoder $Q$ of \eqref{eq:ano}, which \sliceonce{} replaces by an
affine readout in the tokens with coefficients $s_g(x)$
(Table~\ref{tab:budgets} in Appendix~\ref{app:theory} summarizes these results).
\section{\flashslice{}: The Reduced Layer as a Kernel}
\label{sec:flashslice}

\paragraph{The bottleneck.} After removing attention, a physics-attention
sublayer is simply a slice $\to$ per-token linear map $\to$ deslice operation, and the coupling is what dominates compute and memory. The currently available eager implementation of Transolver forms
the slice weights $w\in\mathbb{R}^{B\times H\times N\times G}$ explicitly
and keeps them for the backward pass. For $N$ in the millions, this is the dominant activation of the layer and
the only per-layer term that also grows with $G$, so slice count and depth
compete for the same memory budget (Figure~\ref{fig:systems}(b,~c)). Slice and deslice cost
$O(NGD)$ multiply-adds for head width $D$, comparable to the pointwise
MLPs at $G{=}32$ and most of the layer at $G{=}1024$. Hence, our aim in this section is to describe \flashslice{}, a much more efficient implementation of the slicing kernel. 

\paragraph{Design.} \flashslice{} never writes $w$. Each pass streams
over the points in tiles, forms the membership tile in registers, applies
it and moves on, and the backward pass recomputes the tile from what was
saved: the trade FlashAttention \citep{dao2022flashattention} makes for
softmax attention, applied to a pooling that is not attention. The query
is a point, the keys are the $G$ slot projections, the softmax runs over
the slots, and the token state between the passes is $H\times G\times D$
floats ($32$\,KB at our configuration). Two things differ from attention: slice is the \emph{transpose} of
deslice, accumulating per slot over points rather than per point over
slots, so each point's normalizer must be known before its contribution is
added; and the temperature $\tau$ is learned, its gradient cancelling row
by row in exact arithmetic, which makes it the most rounding-sensitive
quantity in the layer. Two kernel families sit behind
one switch and are chosen by shape.
\begin{itemize}
\item \emph{Single-tile kernels}, for $G$ and $D$ powers of two in
$[16,128]$, hold the whole slot axis in one tile, so that the softmax over
$G$ never leaves registers and no statistics are stored. Tiles are tuned
per $G$ (carrying the $G{=}32$ table to $G{=}128$ costs up to
$40$--$80\times$ through register spilling); they serve every shape of
our accuracy experiments.
\item \emph{$G$-blocked kernels}, for any $G\ge1$ and head or value widths up to $256$,
take the approach of FlashAttention's \emph{backward}. One online pass
forms the per-point softmax statistics $(m_n,l_n)$, two floats per point
and head, i.e.\ $2/G$ of $w$, and every kernel then recomputes
$w_{ng}=\exp(\mathrm{logit}_{ng}-m_n)/l_n$ one $G$-block at a time. A
deslice with no statistics in hand runs FlashAttention's forward instead,
an online softmax over $G$, and hands its statistics to a tied slice;
slice cannot run that loop, since its accumulators live on slots and mix
many points, so its programs own a $G$-block, stream over $N$ with the
finished statistics, and write partials that the host reduces once. The price is the statistics pass and one more
logits recompute per backward, so routing prefers the single-tile kernels
wherever they apply.
\end{itemize}
Neither family uses atomics, so two runs are bitwise identical, and both
are registered as PyTorch custom operators \citep{ansel2024pytorch2},
which makes the model one compiled graph instead of $34$ and lets every
ablation variant of this paper run through the same kernels. Shapes
outside both families ($D>256$) fall back to the eager path with a
warning recorded on the model, so that a flag left on cannot silently
train a baseline replica.

\paragraph{Relation to Transolver-3.} \citet{zhou2026transolver3} also avoid
the full $N\times G$ weight tensor, by tiling the points in PyTorch and
recomputing each tile under gradient checkpointing; each tile of $w$ is
still written to HBM, and the backward reruns the tile's whole forward.
\flashslice{} does the same inside the kernel, where the tile never leaves
registers. Their second change, moving the input and output projections
to token space by associativity, is orthogonal to ours and composes with
it.

\paragraph{Precision.} We benchmark \flashslice{} under both fp32 and bf16
autocast (Appendix~\ref{app:systems}). Triton's default dot uses an fp32
FMA path, which becomes a compute bottleneck as $G$ grows. We therefore
provide several dot-precision levels, all with fp32 accumulation
(Table~\ref{tab:dotmodes}). The default uses fp32 dots; under bf16
autocast, tensor cores can be used for the value products and, at the
\texttt{bf16} level, for the logits as well. At the other levels the
logits remain in fp32 because the softmax Jacobian amplifies errors in
the slice weights. In a $50$k-step large-$G$ training run, bf16 dots
tracked a tf32 control throughout training while reducing step time by
$3.5\times$ (Appendix~\ref{app:systems}).

\paragraph{Exactness.} \flashslice{} is a drop-in, not an approximation: against an fp64
reference the fused error is at or below eager's on every output and every
parameter gradient, for every ablation variant and both families, under
bf16 autocast it matches the numerical class of the baseline's own
tensor-core matmuls, and a $60$-step optimizer trajectory from a shared
initialization stays within $2\times10^{-6}$ relative of eager. No
accuracy result of this paper changes when the kernel is on
(Appendix~\ref{app:systems}).

\begin{figure}[t]
\centering
\includegraphics[width=\linewidth]{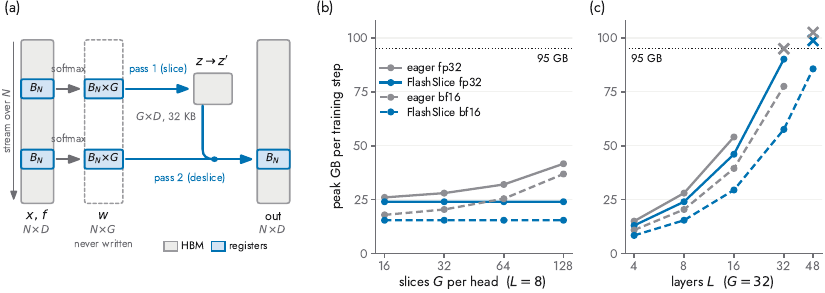}
\caption{\textbf{\flashslice{} turns the reduced layer into a systems
object.} (a)~The point tensors and the $32$\,KB token block live in HBM;
the slice weights exist only as one $B_N{\times}G$ tile in registers,
formed on the fly in each of two passes over the points and recomputed in
the backward, so the $N{\times}H{\times}G$ tensor is never written.
(b,~c)~Peak memory of a training step against slice count and depth
($N{=}262$k, eager PyTorch, no compilation or checkpointing); $\times$
marks the first configuration that does not fit in $95$\,GB. Step times of
the same sweeps are in Figure~\ref{fig:systems-time}
(Appendix~\ref{app:systems}).}
\label{fig:systems}
\end{figure}

\paragraph{\flashslice{} is faster and smaller, and the advantage grows
with the slice count.} Table~\ref{tab:sys} measures both families against
eager PyTorch on one GH200. At the configuration of our experiments
($L{=}8$, $G{=}32$), training steps are $1.35\times$ faster in fp32 and
$1.5$--$1.7\times$ faster under bf16 autocast at $14$--$25\%$ less peak
memory, inference is $1.5$--$1.7\times$ faster at $20$--$26\%$ less, and
the largest sample one GPU processes grows from $8.4$M to $10.5$M points
in fp32 and from $10.5$M to $14.7$M in bf16. The saving is structural: our memory is flat in $G$ ($15.42$\,GB
at $G{=}16$, $15.43$\,GB at $G{=}128$) where eager's rises by $0.17$\,GB
per slice (Figure~\ref{fig:systems}b), and at a fixed $95$\,GB budget we
train $48$ layers where eager fits $32$ (bf16) and $32$ where it fits
$16$ (fp32). At large slice counts the $G$-blocked kernels turn this into
a qualitative change: one layer holds $2$\,GB whether $G$ is $48$ or
$1024$, eager grows linearly and runs out of memory at $G{=}512$,
$N{=}1$M, and with bf16 tensor-core dots the fused layer is
$4.0$--$6.5\times$ faster. Time at large $G$ is a dot-throughput
question: a point does eight times the multiply-adds at $G{=}256$ that it
does at $G{=}32$, and Triton's exact fp32 dot is an FMA path at roughly
$15$\,TFLOP/s, so in the default mode the blocked kernels are
compute-bound and land near eager ($0.85$--$1.36\times$), whereas
tensor-core dots pull them ahead by $3$--$6\times$ in bf16 and up to
$1.9\times$ for fp32 inputs via a three-term tf32 split. Against \emph{compiled}
eager the kernels keep $1.24$--$1.26\times$, and they compose with
activation checkpointing and point-sharding (Appendix~\ref{app:systems}).

\begin{table}[t]
\caption{\textbf{\flashslice{} against eager PyTorch on one GH200}
(PyTorch~2.5, Triton~3.0; medians of $\ge10$ iterations measured in one
job; no compilation or checkpointing; ``OOM'' $=$ does not fit in
$95$\,GB). Top: the model of our experiments ($L{=}8$, $C{=}256$, $H{=}8$,
$G{=}32$), single-tile kernels, exact fp32 dots in fp32 and bf16
tensor-core dots in bf16. Bottom: one layer ($H{=}8$, $D{=}32$) at large
slice counts, $G$-blocked kernels, bf16 inputs and dots.}
\label{tab:sys}
\begin{center}\footnotesize
\setlength{\tabcolsep}{4.5pt}
\begin{tabular}{llrrrcrrc}
\toprule
 & & & \multicolumn{3}{c}{time (ms)} & \multicolumn{3}{c}{peak memory (GB)} \\
\cmidrule(lr){4-6}\cmidrule(lr){7-9}
 & & $N$ & eager & ours & speedup & eager & ours & saved \\
\midrule
\multicolumn{9}{@{}l}{\emph{full model, $G{=}32$, single-tile kernels}} \\
training  & fp32 & $262$k  & $171$  & $126$  & $1.35\times$ & $28.0$ & $24.0$ & $14\%$ \\
training  & bf16 & $262$k  & $143$  & $93.1$ & $1.53\times$ & $20.4$ & $15.4$ & $24\%$ \\
training  & bf16 & $1$M    & $585$  & $345$  & $1.70\times$ & $81.3$ & $61.3$ & $25\%$ \\
inference & fp32 & $8.4$M  & $1944$ & $1160$ & $1.68\times$ & $80.5$ & $64.5$ & $20\%$ \\

inference & bf16 & $12.6$M & OOM    & $1411$ & ---          & ---    & $66.7$ & ---    \\
\midrule
\multicolumn{9}{@{}l}{\emph{one layer, bf16, $G$-blocked kernels}} \\
training  & $G{=}256$  & $262$k & $47.5$  & $11.9$ & $3.98\times$ & $13.7$ & $2.0$ & $85\%$ \\
training  & $G{=}256$  & $1$M   & $293.7$ & $45.5$ & $6.45\times$ & $54.5$ & $8.1$ & $85\%$ \\
training  & $G{=}512$  & $262$k & $90.6$  & $20.0$ & $4.53\times$ & $26.7$ & $2.0$ & $92\%$ \\
training  & $G{=}512$  & $1$M   & OOM     & $80.3$ & ---          & ---    & $8.1$ & ---    \\
training  & $G{=}1024$ & $262$k & $180.2$ & $37.6$ & $4.79\times$ & $52.7$ & $2.1$ & $96\%$ \\
inference & $G{=}256$  & $4.2$M & $157.3$ & $53.2$ & $2.95\times$ & $68.0$ & $8.3$ & $88\%$ \\
\bottomrule
\end{tabular}
\end{center}
\end{table}
\section{Discussion}
\paragraph{Summary.} Our aim was to present a thorough empirical and theoretical analysis of the widely used Transolver architecture in order to uncover the mechanisms that underwrite its accuracy in operator learning on arbitrary domains. By very careful and controlled ablations on a challenging suite of nine benchmarks in fluid dynamics and industrial aerodynamics, we demonstrated that i) token attention can be completely removed and replaced by a constant linear map implying that the Transolver does not require a Transformer ii) the slicing/deslicing global mixing operators are the key source of accuracy and removing them leads to collapse in performance as pointwise MLPs cannot recover global operators and iii) repeating slicing/deslicing in every layer is necessary. We use the theory of averaging neural operators to explain all these findings, providing a clear understanding of the causal mechanisms for Transolver's function.  Finally, we provide a novel and significantly more efficient implementation of the slicing operator by proposing \flashslice{} as a computational kernel. 
\paragraph{Related Work.} Transolver was originally proposed in \cite{wu2024transolver} and several extensions have been provided by \citep{luo2025transolverpp,zhou2026transolver3,adams2025geotransolver}, among others. The focus of most of these papers is on modifications of the token attention module of Transolver in order to improve performance. On the other hand, there is very little literature on the analysis and understanding of the model itself with the notable exception of \citet{hu2026linearno}, who observe
that the physics-attention is a special case of linear attention and find that
deleting the slice attention often improves accuracy on six standard PDE
benchmarks. However, they attribute this fact to the tying of the projections, and propose LinearNO,
which unties the projections and removes the attention; \citet{lrsa2026}
reach a compatible reading from a low-rank-kernel viewpoint. As argued in Sec.~\ref{sec:experiments}, we also investigate this claim and this is clearly not the case. Rather, tying is orthogonal to the redundancy of the attention: with or without it, removing the attention costs nothing measurable, and all four cells of Figure~\ref{fig:controls}(a) lie within $1.05\%$ of one another. Moreover, to the best of our knowledge, our paper is the first to provide theoretical justifications of the extensive empirical findings, leading to a holistic understanding of the mechanisms that govern Transolver's behavior. 

\paragraph{Limitations and Future Work.} Our paper focuses on one, albeit very widely used, neural operator architecture and the findings about the role of global mixing vs. attention might not carry over to other unrelated neural operators. Moreover, our theory is based on model expressivity. It does not take generalization and optimization into account, nor are the bounds sharp. Yet, we could use the theory to suggest hypotheses which were validated with extensive empirical evidence. Going forward, we would like to leverage the insights presented here to develop novel neural operators that have optimized and non-redundant kernels, leading to further accuracy on AI for industrial engineering. 

\newpage
\bibliography{iclr2027_conference}
\bibliographystyle{iclr2027_conference}

\clearpage
\appendix
% appendix floats are numbered per section: Figure A.1, Table B.1, ...
\counterwithin{figure}{section}
\counterwithin{table}{section}
\addtocontents{toc}{\protect\setcounter{tocdepth}{2}}
\renewcommand{\contentsname}{Appendix}
\tableofcontents
\clearpage
% appendix layout: floats stay inside their section, and float pages are
% filled from the top instead of being centred vertically
\let\appsection\section
\renewcommand{\section}{\FloatBarrier\appsection}
\makeatletter
\setlength{\@fptop}{0pt}
\setlength{\@fpbot}{0pt plus 1fil}
\makeatother

\section{Slice/deslice as a learnable basis: statements and proofs}
\label{app:theory}

This section of the appendix provides details for \S\ref{sec:theory}: the
placement of Eq.~(\ref{eq:lowrank}) within the nonlocal neural operator
template, the proofs of Proposition~\ref{prop:p1},
Corollary~\ref{cor:mlponly-depth} and Proposition~\ref{prop:p3} and further implications and limitations of these results. 

\subsection{Equation~(\ref{eq:lowrank}) as a low-rank nonlocal neural operator}
\label{app:theory-nno}

The $\ell$-th hidden layer of the nonlocal neural operator (NNO) of
\citet{kovachki2023neural} takes the form
$\mathcal{L}_\ell(v)(x)=\sigma\big(W_\ell v(x)+b_\ell+
\sum_{m}\langle T_{\ell,m}v,\psi_{\ell,m}\rangle_{L^{2}}\,\phi_{\ell,m}(x)\big)$;
Eq.~(\ref{eq:lowrank}) is the instance with $M=G$,
$\psi_{\ell,g}=\tilde s_g:=s_g/\!\int_\Omega s_g\,\mathrm{d}\mu$, $\phi_{\ell,g}=s_g$ and $T_{\ell,gh}=A_{gh}$,
the special case that \citet[\S4.2]{kovachki2023neural} call the
\emph{low-rank neural operator}. Two features distinguish
physics-attention within this template, and both matter below: (i)~the
``basis'' $\{s_g\}$ is a learned, data-dependent partition of unity
re-estimated from $f^{(\ell-1)}$ at every layer, not a fixed dictionary, nor is it a basis in a rigorous sense, we simply use the term here for narrative convenience;
(ii)~the coupling is interleaved with a pointwise nonlinearity and a
persistent full-resolution pointwise stream, rather than just applied once.

The averaging neural operator (ANO) of \citet{lanthaler2023nonlocality}
is the simplest member of the family: a single hidden layer with $G=1$ and
a constant, non-adaptive slice function, so that the normalized token is
the global mean and Eq.~(\ref{eq:ano}) results, with pointwise lifting
and projection networks $R,Q$ of width $J$ (their $d_c$). 

As \cite{lanthaler2023nonlocality} argue, this minimal averaging neural operator suffices for universality. Their precise statement is reproduced verbatim:
\begin{theorem}[Universality of the ANO; \citealt{lanthaler2023nonlocality}, Thms.~2.1--2.2]
\label{thm:ano}
Let $\Omega\subset\mathbb{R}^d$ be a bounded Lipschitz domain and
$\Psi^\dagger:\mathcal{X}\to\mathcal{Y}$ a continuous operator between
$C^{s}$ or $W^{s,p}$ spaces of functions on $\Omega$. For every compact
$\mathsf{K}\subset\mathcal{X}$ and $\epsilon>0$ there exist $J\in\mathbb{N}$
and networks $R,Q$ of width $J$ such that $\Psi$ of the
form~(\ref{eq:ano}) satisfies
$\sup_{u\in\mathsf{K}}\|\Psi^\dagger(u)-\Psi(u)\|_{\mathcal{Y}}\le\epsilon$.
\end{theorem}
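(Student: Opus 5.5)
The plan is the classical encoder--approximator--decoder factorization. We realize the encoder by the single averaging integral in Eq.~(\ref{eq:ano}), and the approximator and decoder by the pointwise network $Q$. Fix $\mathsf{K}$ and $\epsilon$.

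\textbf{Step 1: reduce to a finite-dimensional problem.}
\begin{itemize}
\item Since $\mathsf{K}$ is compact and $\Psi^\dagger$ is continuous, the image $\Psi^\dagger(\mathsf{K})$ is compact in $\mathcal{Y}$.
\item Choose smooth functions $\phi_1,\dots,\phi_m$ on $\bar\Omega$ (for instance a smooth partition of unity times polynomials) and a continuous map $P:\mathcal{Y}\to\mathbb{R}^m$. We want $\|v-\sum_k P_k(v)\phi_k\|_{\mathcal{Y}}\le\epsilon/4$ for all $v\in\Psi^\dagger(\mathsf{K})$. Such a $P$ exists because smooth functions are dense in $C^s$ and in $W^{s,p}$ on Lipschitz domains, and a compact set admits a uniformly good finite-dimensional approximate projection, e.g.\ via a partition of unity on a finite $\epsilon$-net.
\item Symmetrically, choose test functions $e_1,\dots,e_n\in C_c^\infty(\Omega)$ and the linear encoder $E(u)=(\int_\Omega u\,e_j\,\mathrm{d}y)_{j\le n}$. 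Take $n$ large enough that $E$ is injective up to $\epsilon$ on $\mathsf{K}$. Concretely, use a mollify-and-sample construction: the functionals $\int u\,e_j$ determine $u*\rho_\delta$ on a fine grid. Then there is a continuous $F:\mathbb{R}^n\to\mathbb{R}^m$ with $\sup_{u\in\mathsf{K}}|F(E(u))-P(\Psi^\dagger(u))|$ as small as needed, by uniform continuity of $P\circ\Psi^\dagger$ on $\mathsf{K}$ and Tietze extension of the induced map on the compact set $E(\mathsf{K})$.
\end{itemize}

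\textbf{Step 2: realize the encoder and the decoder by the networks $R$ and $Q$.}
\begin{itemize}
\item \emph{Encoder.} Let $R(u,y)$ approximate $(u\,e_j(y))_{j\le n}$.
  \begin{itemize}
  \item In the $C^s$ case, the values of $u\in\mathsf{K}$ lie in a fixed compact interval. Standard MLP universality gives uniform approximation on $[-M,M]\times\bar\Omega$, hence uniform approximation of the averages.
  \item In the $W^{s,p}$ case with unbounded values, approximate on $[-M,M]$ and make the network saturate outside, using a bounded or clipped activation. The resulting error is controlled by $\int_{|u|>M}|u|\,|e_j|$, which is uniformly small over $\mathsf{K}$: compactness in $L^p\subset L^1(\Omega)$ gives uniform integrability.
  \end{itemize}
\item \emph{Nonlinearity.} The outer $\sigma$ is harmless: pick an input scaling so that $\sigma$ is injective and continuous on the compact range of the averages, and compose its inverse into $Q$.
\item \emph{Decoder.} Let $Q(\alpha,x)$ approximate $\sum_k F_k(\sigma^{-1}\alpha)\,\phi_k(x)$ on the compact set of attainable $\alpha$, times $\bar\Omega$.
\end{itemize}

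\textbf{Step 3: error budget.} Use a triangle inequality over three terms: the projection error, the error of $F$, and the network errors. For the network errors we need the encoder errors to be small enough relative to the modulus of continuity of $F$ on a neighbourhood of $E(\mathsf{K})$. This forces the order of choices: first $P$, then $E$ and $F$, then $R$, then $Q$.

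\textbf{Main obstacle.} The hardest step is the decoder estimate in the norm of $\mathcal{Y}$. When $\mathcal{Y}=C^s$ or $W^{s,p}$ with $s\ge1$, uniform approximation of $Q$ is not enough: the $x$-derivatives up to order $s$ must be approximated as well. Here I would invoke simultaneous approximation of a function and its derivatives by MLPs with a smooth non-polynomial activation (Pinkus' theorem on compacta). This applies because the target $\sum_k F_k(\cdot)\phi_k(x)$ is smooth in $x$ and only continuous in $\alpha$, and no $\alpha$-derivatives are needed. On a Lipschitz domain one extends the $\phi_k$ to a ball, so that the approximation holds up to the boundary. The width $J$ is then simply the maximum of the widths produced for $R$ and $Q$.
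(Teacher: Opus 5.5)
Your proposal is correct and is essentially the encoder--decoder argument the paper summarizes from \citet{lanthaler2023nonlocality}. All nonlocality is spent in the single average of a pointwise lift $R(u(y),y)\approx u(y)\,e_j(y)$, which yields finitely many linear functionals of $u$, while the continuous finite-dimensional map $F$ and the fixed decoder basis are realized pointwise by $Q$. You also correctly note that only $x$-derivatives are needed there, so Pinkus-type simultaneous approximation handles the $C^{s}$ or $W^{s,p}$ output norm.

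The only loose step is the Tietze extension of the ``induced map'' on $E(\mathsf{K})$, which is not well defined when $E$ is merely injective up to $\epsilon$. Instead, build $F$ by a partition of unity over a finite net of $\mathsf{K}$, or as $P\circ\Psi^\dagger\circ D$ for a continuous approximate inverse $D$ of your mollify-and-sample encoder. Either way, use the compactness fact that for some $\eta>0$, $|E(u)-E(u')|<\eta$ with $u,u'\in\mathsf{K}$ already forces $\|\Psi^\dagger(u)-\Psi^\dagger(u')\|_{\mathcal{Y}}$ to be small.
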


The proof of
Theorem~\ref{thm:ano} (\citealt{lanthaler2023nonlocality}, \S2.5)
proceeds in two steps. (1)~Any target operator is $\epsilon/2$-approximated
by a finite sum $\sum_{j=1}^{J}\alpha_j(u)\,\eta_j$ of an \emph{encoder},
$J$ continuous nonlinear functionals
$\alpha_j:L^{1}(\Omega;\mathbb{R}^{k})\to\mathbb{R}$, against a fixed
\emph{decoder} basis $\eta_j\in\mathcal{Y}$. (2)~Each functional is
$\epsilon/2J$-approximated by one global average of a pointwise-lifted
input, $\alpha_j(u)\approx\sigma\big(\int_\Omega R_j(u(y),y)\,\mathrm{d}y\big)$,
because averaging against $R_j(u(y),y)=u(y)\cdot\xi_{j}(y)$ for a smooth
$\xi_j$ recovers a mollified inner product with the $j$-th element of a
basis of $L^{2}(\Omega)$. Step~(2) is where all of the nonlocality is
spent; step~(1), the decoding, is pointwise. The encoding width
$J=J(\epsilon)$ is generically unbounded as $\epsilon\to0$: nothing in
the theorem caps how much must be encoded through the single average, and
a small fixed $G$ is not the regime it addresses directly.
\S\ref{app:theory-p3} locates where that width lives in a Transolver
stack, namely in the persistent point stream rather than in the slice
count.

\subsection{Proof of Proposition~\ref{prop:p1}, and what it implies}
\label{app:theory-p1}

\begin{proof}[Proof of Proposition~\ref{prop:p1}]
Take the $G$ slice logits identically zero, so that
$s_1=\cdots=s_G\equiv1/G$, a legal partition of unity reached exactly
rather than in a limit. Every normalized token is then the global mean,
$z_h=\big(\tfrac1G\int_\Omega v\,\mathrm{d}\mu\big)\big/\big(\tfrac1G|\Omega|\big)=\bar v$,
and with the constant core scaled so that $\sum_{g,h}A_{gh}/G=1$,
Eq.~(\ref{eq:lowrank}) reduces exactly to the ANO coupling
$(\mathcal{K}v)(x)=\bar v$. The pointwise MLP sublayers that precede and
follow it realize $R$ and $Q$ up to the usual universal approximation of
ordinary networks (\citealt{lanthaler2023nonlocality}, Lemma~A.5, after
\citealt{pinkus1999approximation}), the coordinate $x$ being available to
$Q$ through the residual stream, so Theorem~\ref{thm:ano} applies
verbatim. The construction uses no property of $G$ and is available at
every $G\ge1$.
\end{proof}

Proposition~\ref{prop:p1} is deliberately not the range argument that a
similar claim tempts one to make. It is true, and used in
\S\ref{sec:anatomy}, that the range of $\mathcal{K}$ is
$\mathrm{span}\{s_g\}$ regardless of $A$, so that a content-dependent
core cannot enlarge what a single coupling expresses. That alone shows
only that the attention cannot \emph{enlarge} the layer's range, not that
it is dispensable, since a constant and a content-dependent core
parameterize different maps within the same range; the density statement
is what removes the expressivity-based reason to expect the attention to
be necessary.

What the proposition does not imply is the stronger reading that the
attention therefore costs nothing at the budget a practitioner trains at.
Universality is shared by the two classes: \notoken{}'s class is contained in
\baselinev{}'s, both are universal, and this density cannot separate them; and the
construction spends unbounded pointwise width, which no experiment has.
Our own results supply the cautionary case: by the same argument the
frozen-basis class is universal (\S\ref{app:theory-p3}), and \frozensl{}
nonetheless costs up to $17\%$ (Table~\ref{tab:main}). Universality predicts
the absence of a ceiling, not the absence of a cost. That the cost of
removing the attention is indistinguishable from zero on nine benchmarks
at matched capacity is an empirical finding; the theory makes it
unsurprising rather than deriving it from first principles.

\subsection{Locality, and the proof of Corollary~\ref{cor:mlponly-depth}}
\label{app:theory-p2}

\begin{lemma}[Locality is closed under composition]
\label{lem:locality}
Let $\Phi_1,\ldots,\Phi_L$ be pointwise operators,
$\Phi_\ell(v)(x)=\sigma(W_\ell v(x)+b_\ell)$, possibly with pointwise
residual connections. Then $\Phi_L\circ\cdots\circ\Phi_1$ is pointwise:
its output at $x$ depends on the input only through the input's value at
$x$.
\end{lemma}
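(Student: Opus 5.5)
The plan is an induction on the depth $L$, after first making precise what ``pointwise'' means at the level of a single layer. Call an operator $\Phi$ pointwise if there is a function $\phi:\mathbb{R}^{c}\times\Omega\to\mathbb{R}^{c'}$ with $\Phi(v)(x)=\phi(v(x),x)$ for every input $v$ and every $x$. We allow the explicit $x$-dependence so that coordinate channels and position-dependent biases are covered. This is equivalent to the informal statement of Definition~\ref{def:localitygap}: if $v_1(x)=v_2(x)$, then $\Phi(v_1)(x)=\Phi(v_2)(x)$.

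The first step checks the base case, namely that each layer is pointwise. The layer $\Phi_\ell(v)(x)=\sigma(W_\ell v(x)+b_\ell)$ has the form $\phi_\ell(v(x))$ with $\phi_\ell(\xi)=\sigma(W_\ell\xi+b_\ell)$. The residual variant $v(x)+\sigma(W_\ell v(x)+b_\ell)$ is $\tilde\phi_\ell(v(x))$ with $\tilde\phi_\ell(\xi)=\xi+\phi_\ell(\xi)$. A two-layer MLP block with a residual connection is a finite composition of such maps together with a sum of the identity and a pointwise map. So the first step reduces to a small closure observation: sums, compositions and concatenations of pointwise maps are pointwise, because each is again given by a function of the single vector $v(x)$. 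The same holds for any normalisation that acts per point, such as LayerNorm over channels. I would note this explicitly, since the \mlponly{} stack contains LayerNorm. It should not be confused with the analysis-side normalisation along $N$ used in \untiedln{}, which is not pointwise.

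The second step is the inductive step. Suppose $\Phi_{L-1}\circ\cdots\circ\Phi_1(v)(x)=\psi_{L-1}(v(x),x)$. Then
\[
\Phi_L\circ\cdots\circ\Phi_1(v)(x)=\phi_L\big(\psi_{L-1}(v(x),x),x\big)=:\psi_L(v(x),x),
\]
which is again a function of $(v(x),x)$ only. Here the key point is that the value of $\Phi_L(w)$ at $x$ reads $w$ only at the same point $x$, and $w(x)=\psi_{L-1}(v(x),x)$. Induction on $L$ then gives the lemma.

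There is no substantive obstacle. The only point needing care is the first step: making sure every ingredient that the \mlponly{} variant actually uses is included in the closure list, namely residuals, channel-wise normalisation and position inputs. This is why I would state the definition with the explicit $x$-argument. The topological half of Corollary~\ref{cor:mlponly-depth} is a separate matter. It says that uniform limits on $\mathsf{K}$ of pointwise maps remain pointwise, in the sense that the limit map inherits the property that $\Phi(v_1)(x)=\Phi(v_2)(x)$ whenever $v_1(x)=v_2(x)$. That argument is not needed for this lemma.
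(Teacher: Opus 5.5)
Your proof is correct and follows the paper's argument: an induction on $L$ whose step observes that $\Phi_L(w)(x)$ reads $w$ only at $x$, and $w(x)$ is itself a function of $v(x)$ by the inductive hypothesis. Your explicit form $\Phi(v)(x)=\phi(v(x),x)$ and your closure check for residual blocks and per-point LayerNorm just spell out what the paper assumes when it takes each $\Phi_\ell$ to be pointwise.
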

\begin{proof}
By induction on $L$: if $\Phi_1(v)(x)$ depends on $v$ only through $v(x)$
and $\Phi_2$ is pointwise, then $\Phi_2(\Phi_1(v))(x)$ depends on
$\Phi_1(v)$ only through $\Phi_1(v)(x)$, hence on $v$ only through $v(x)$.
\end{proof}

\begin{proof}[Proof of Corollary~\ref{cor:mlponly-depth}]
\mlponly{} is the pointwise MLP-with-residual stack that remains once
$\mathcal{K}$ is removed, so by Lemma~\ref{lem:locality} it is pointwise
at every depth $L$ and every width. The infimum in
Definition~\ref{def:localitygap} is taken over a class that is closed in
the topology of uniform convergence on $\mathsf{K}$: if $\Phi_n$ are
pointwise and $\Phi_n\to\Psi$ uniformly on $\mathsf{K}$, then any
$u_1,u_2\in\mathsf{K}$ with $u_1(x)=u_2(x)$ satisfy
$\Phi_n(u_1)(x)=\Phi_n(u_2)(x)$ for every $n$, hence
$\Psi(u_1)(x)=\Psi(u_2)(x)$, and $\Psi$ is pointwise. The gap is therefore
a genuine obstruction: every pointwise operator, \mlponly{} at any size
included, has error at least $\delta_{\mathrm{loc}}(\Psi^\dagger)$ on
$\mathsf{K}$.
\end{proof}

Following \citet[\S2.1]{lanthaler2023nonlocality}, $\delta_{\mathrm{loc}}>0$
for every operator with genuine nonlocal dependence, the shift
$\Psi^\dagger(u)(x)=u(x+h)$ being the canonical example. More relevant
here, no pointwise map can represent the solution operator of an elliptic
boundary value problem, whose Green's function has support equal to all
of $\Omega$. This is exactly the pressure field in incompressible or
low-Mach aerodynamics. The \mlponly{} penalty of Table~\ref{tab:main} is
thus a finite-sample, finite-capacity estimate of $\delta_{\mathrm{loc}}$,
and the per-task and per-channel ranking of that penalty in
\S\ref{sec:experiments} is a ranking by $\delta_{\mathrm{loc}}$: largest
on DrivAerML, smallest for Taylor--Green, whose
short-horizon map is closest to the identity restricted to a neighborhood
of $x$, and within every aerodynamic task larger for the pressure than
for the velocity or shear. One caveat keeps this an ordering rather than a
precise prediction: velocity is not local either, being coupled to the same global
pressure through $\nabla p$. The empirical results agree with this interpretation
where the comparison is cleanest, on SHIFT-SUV (same cars, fixed condition,
geometry the only input): the pressure/velocity separation in the volume
($4.5\times$ against $2.6\times$) is narrower than the pressure/shear
separation on the surface ($4.2\times$ against $1.3\times$), where the
companion quantity is a wall trace far more nearly determined by local
geometry (Table~\ref{tab:pervar-l1}). On DrivAerNet++ and DrivAerML the two
separations are about equal, and the transonic SHIFT-Wing volume is
the exception, at $14.8\times$ against $1.6\times$ for the velocity. Quantifying $\delta_{\mathrm{loc}}$ directly,
for instance through the decay of the governing Green's function, rather
than reading it off a trained \mlponly{}, is a natural next step we leave
open.

Corollary~\ref{cor:mlponly-depth} also yields a prediction beyond what we
test: parameter-matched \mlponly{} models at $L=4,8,16,32$ should show no
improvement trend at all, unlike \notoken{} or \baselinev{}. We suggest
this depth sweep as a companion to the \mlpwide{} capacity control of
Figure~\ref{fig:controls}(c), isolating locality from capacity a second,
orthogonal way.

\subsection{Proof of Proposition~\ref{prop:p3}, the looseness of the budget, and the frozen basis}
\label{app:theory-p3}

Theorem~\ref{thm:ano} reaches universality with a single coupling but a
possibly unbounded encoding width $J$. It is tempting to say that Transolver,
which holds $G$ fixed and grows $L$ instead, trades depth for that width.
The results do not support this trade: what a coupling
transmits is bounded not by $G$ but by $G$ times the channel width, and
the encoding is carried, between couplings, in the persistent point
stream.

\begin{proof}[Proof of Proposition~\ref{prop:p3}]
(i)~is the construction of Proposition~\ref{prop:p1} read at finite
width: the uniform basis $s_g\equiv1/G$ makes the coupling the global mean
in all $c$ channels at once, the preceding pointwise MLP realizes the ANO
lifting $R$ into $J\le c-k$ of them, and the following one realizes $Q$
from those channels together with the coordinates carried in the reserved
$k$. No step varies with $G$. (ii)~The tokens are
$z_h=\int s_h v\,\mathrm{d}\mu/\int s_h\,\mathrm{d}\mu\in\mathbb{R}^{c}$
for $h=1,\ldots,G$, that is $G\,c$ scalars; deslice returns
$\sum_{g,h}s_g(x)A_{gh}z_h$, a linear image of them at each $x$, and by
Lemma~\ref{lem:locality} no pointwise map applied afterwards adds
information about $v$ at a distance. In \sliceonce{} those $G\,c$ scalars
are produced once and every later layer acts on the $G$ tokens alone, so
the bound is independent of that depth; the point-level skip that
survives in \sliceonce{} carries the pointwise part of the input, not
further nonlocal content.
\end{proof}

\paragraph{The budget is an upper bound, and a loose one.} At our
configuration $G\,c=32\times256\approx8\times10^{3}$ scalars pass through
a single coupling, which is not obviously too few, so part~(ii) does not
on its own account for the size of the \sliceonce{} penalty
($2.2$--$6.8\times$). A second structural difference does, and
Theorem~\ref{thm:ano} makes it explicit: the ANO needs a universal
pointwise decoder $Q$ acting at full resolution on (encoding, $x$).
\sliceonce{} spends its depth transforming the $G$ tokens and then reads
out affinely at the points, so what it realizes is a nonlinear map of the
encoding followed by a map that is linear in the tokens with
$x$-dependent coefficients $s_g(x)$; the second ingredient of the theorem
is missing, not merely narrow. What part~(ii) establishes is the
qualitative asymmetry, budgets $0$, $G\,c$ and $L\,G\,c$, and the
prediction that only the third improves with depth; the magnitude is not
something we derive.

\paragraph{The frozen basis.} Part~(i) uses a constant, non-adaptive
basis, $s_g\equiv1/G$ fixed once as in \frozensl{}, and still attains
universality, because all the adaptivity it needs comes from the
pointwise networks between rounds. This matches the ANO itself, whose
single nonlocal ingredient is a fixed uniform average. Read literally, it
predicts that a frozen basis is not subject to a hard ceiling analogous
to \mlponly{} or \sliceonce{}: given enough depth, a fixed basis reused
at every layer can still reach any target. What re-estimating $\{s_g\}$
from $f^{(\ell-1)}$ buys is efficiency: a data-dependent basis can spend
each of its $L$ rounds probing whichever $G$-dimensional projection of
the current field is most informative, whereas a frozen basis probes the
same projections every round and relies on the pointwise nonlinearity
between rounds to make each repetition informative. This is the
distinction of \S\ref{app:theory-p1}, no ceiling but a cost, and the cost
is what Table~\ref{tab:main} measures, at a few percent to $17\%$. It also
names the discriminating experiment: under a depth sweep the \mlponly{}
and \sliceonce{} gaps should persist at every $L$, while the \frozensl{}
gap should close. We state this as a prediction; the depth sweep is not
among the runs we report.

\subsection{From the continuum to the mesh}
\label{app:theory-disc}

Theorem~\ref{thm:ano} and Propositions~\ref{prop:p1}--\ref{prop:p3} are
stated for the continuum operator $\mathcal{K}$; the implementation sees
only $N$ mesh points. Writing the discrete slice weights as quadrature
weights, $w_{i,g}\,\omega_i\approx s_g(x_i)\,\mathrm{d}\mu(x_i)$ for a per-point
volume element $\omega_i$ with $\sum_i \omega_i=|\Omega|$, the discrete coupling
$\sum_i w_{i,g}v_i/\sum_i w_{i,g}$ is a Riemann-sum quadrature of
$\int_\Omega s_g(y)v(y)\,\mathrm{d}\mu(y)/\int_\Omega s_g(y)\,\mathrm{d}\mu(y)$,
with error controlled, for Lipschitz $s_g$ and $v$, by the fill distance
of $\{x_i\}$ in $\Omega$, the same collocation argument used to pass from
continuum to discrete FNOs in \citet{kovachki2021universal}. We do not
develop explicit rates here, since our point clouds are highly
non-uniform unstructured surface and volume meshes for which
fill-distance bounds would be conservative. Instead we note that the
subsampling-noise measurement of Table~\ref{tab:noise} is an empirical
measurement of exactly this discretization error, and that its magnitude
($\le0.18\%$ relative, three orders of magnitude below every ablation
effect in Table~\ref{tab:main}) is consistent with the discretization
term being negligible relative to the architectural effects this paper
studies.

\subsection{The learned basis, empirically}
\label{app:basis}
We compare the slice functions learned by independently trained
\baselinev{} and \notoken{} on DrivAerML. At the last layer, both models produce spatially coherent patterns corresponding to recognizable parts of the car, including the greenhouse and upper body, front fascia, and flanks (Figure~\ref{fig:basis}). Matching slices by cosine similarity over points gives similarities of $0.82$, $0.72$, and $0.69$ for the three closest pairs.

At the first layer, assignments are nearly uniform in both models:
the mean per-point entropies are $3.47$ and $3.46$ nats, close to the maximum $\ln 32 = 3.47$, and all $32$ slices are effectively used. The spatial patterns therefore develop in later layers.
At the last layer, \notoken{} uses more effective slices than
\baselinev{} ($20.4$ vs.\ $15.4$), but has lower mean per-point entropy ($2.22$ vs.\ $2.48$ nats). Thus, it uses more slices overall while making more concentrated assignments at individual points. This is consistent with the interpretation in \S\ref{app:theory-p3}: when the core is held constant, the learned
basis may take on a greater role in input-dependent routing.

\begin{figure}[htbp]
\centering
\includegraphics[width=\linewidth]{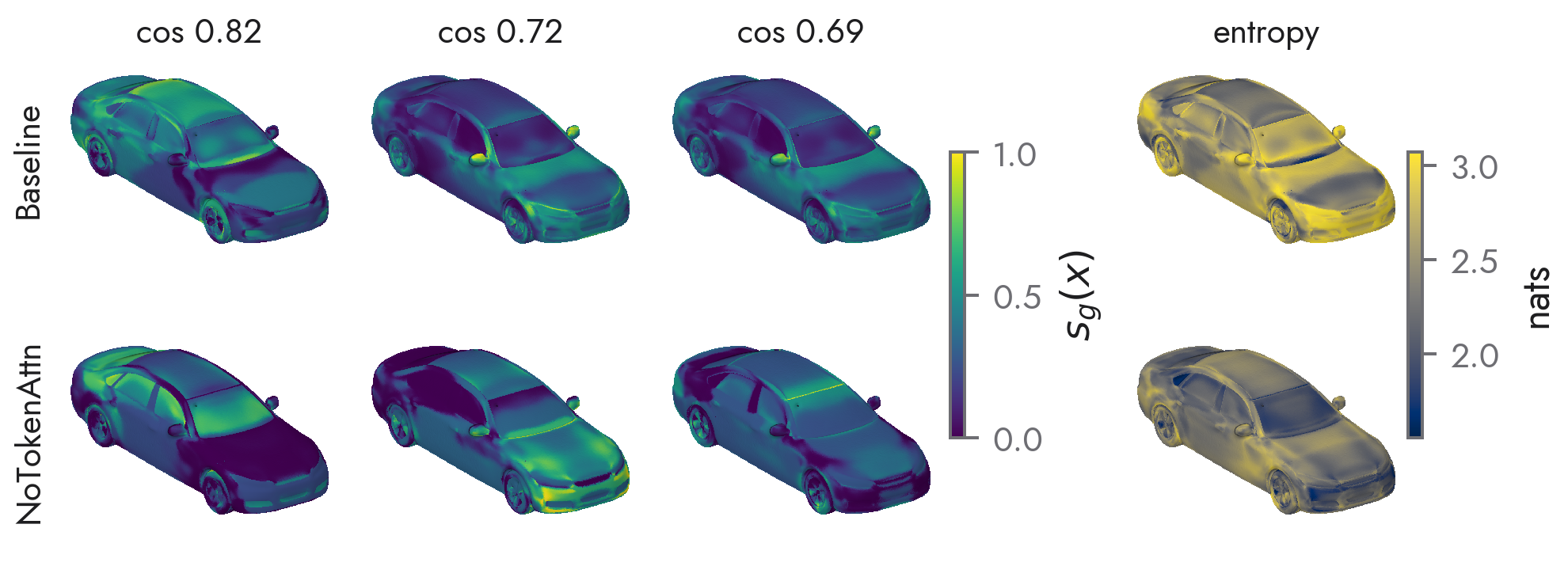}
\caption{\textbf{Learned slice functions with and without attention.}
Last-layer slice functions $s_g(x)$ for \baselinev{} (top) and
\notoken{} (bottom), shown on the DrivAerML test car from
Figure~\ref{fig:fields}. The first three columns show the closest
pairs under Hungarian matching by cosine similarity over points
($0.82$, $0.72$, and $0.69$). Each slice is normalized by its own
maximum. The rightmost column shows per-point assignment entropy
on a shared color scale.}
\label{fig:basis}
\end{figure}

\subsection{Summary}
\label{app:theory-summary}

Table~\ref{tab:budgets} separates what the theory settles from what only
experiments can.

\begin{table}[htbp]
\caption{\textbf{What the theory settles, and what it leaves to
experiment.} Nonlocal budget and hard ceiling per variant, against the
measured cost relative to \baselinev{} in Table~\ref{tab:main}.}
\label{tab:budgets}
\begin{center}\small
\begin{tabular}{llll}
\toprule
variant & nonlocal budget & hard ceiling? & measured cost \\
\midrule
\baselinev{} / \untied{} & $L\,G\,c$ & no (Prop.~\ref{prop:p1}) & reference \\
\notoken{}   & $L\,G\,c$ & no (Prop.~\ref{prop:p1})            & $-2.6\%$ to $+7.9\%$ \\
\frozensl{}  & $L\,G\,c$ & no (\S\ref{app:theory-p3})          & $-0.5\%$ to $+17\%$ \\
\sliceonce{} & $G\,c$    & yes, rank (Prop.~\ref{prop:p3})     & $2.2$--$6.8\times$ \\
\mlponly{}   & $0$       & yes, locality (Cor.~\ref{cor:mlponly-depth}) & $1.04$--$5.6\times$ \\
\bottomrule
\end{tabular}
\end{center}
\end{table}

\section{Datasets, metrics, and hyperparameters}
\label{app:datasets}

\paragraph{Benchmarks.}
Table~\ref{tab:datasets} summarizes the nine benchmark configurations, including data splits, training budgets, point counts after the \texttt{physical\_domain} crop, and input and output fields. Splits follow the standard protocol for each dataset. During training, a fresh random subset of points is drawn each time a sample is loaded, using the ratios listed in the table.
The measured relative variation in the evaluation metric due to
subsampling is at most $0.18\%$ (Table~\ref{tab:noise}). All listed input features, including freestream conditions where
provided, are available at every point. SHIFT-SUV provides no freestream attributes and uses a single fixed flow condition, so its input features consist only of normals (surface) or signed distance (volume).

\begin{table}[htbp]
\caption{\textbf{Benchmark configurations.}
Training steps, split sizes, points per sample after cropping and
before subsampling, training subsampling ratios, and input/output
channels. $^{\ast}$Evaluated at the common checkpoint of 140k steps, reached within one 12-hour run using a 250k-step cosine schedule (Appendix~\ref{app:budgets}).}
\label{tab:datasets}
\begin{center}
\scriptsize
\setlength{\tabcolsep}{2.3pt}
\resizebox{\linewidth}{!}{%
\begin{tabular}{lcccccll}
\toprule
Benchmark & steps & train/val/test & points/sample & subsample & in$\to$out ch. &
inputs & outputs \\
\midrule
Taylor--Green & 250k & 1.55M/19.3k/38.6k & $32^{3}$ & $1.0$ & $5{\to}5$ &
  state at $t$ (+ lead time) & state at $t{+}\Delta t$ \\
SHIFT-Wing surface & 250k & 8377/1000/1000 & 1.6M & $0.1$ & $8{\to}4$ &
  normals, freestream (varying) & $p$, $\tau$ \\
SHIFT-Wing volume & 140k$^{\ast}$ & 8377/1000/1000 & 5.85M & $0.046$ & $6{\to}4$ &
  SDF, freestream (varying) & $p$, $u,v,w$ \\
DrivAerNet++ surface & 184k & 5825/1148/1154 & 459k & $0.5$ & $8{\to}4$ &
  normals, freestream (const.) & $p$, $\tau$ \\
DrivAerNet++ volume  & 180k & 5825/1148/1154 & 11.0M & $0.02$ & $6{\to}4$ &
  SDF, freestream (const.) & $p$, $u,v,w$ \\
SHIFT-SUV surface & 50k & 798/100/100 & 2.57M & $0.092$ & $3{\to}4$ &
  normals \emph{only} & $p$, $\tau$ \\
SHIFT-SUV volume  & 50k & 798/100/100 & 46.5M & $0.0059$ & $1{\to}4$ &
  SDF \emph{only} & $p$, $u,v,w$ \\
DrivAerML surface & 50k & 400/34/50 & 7.8M & $0.03$ & $8{\to}4$ &
  normals, freestream (const.) & $p$, $\tau$ \\
DrivAerML volume  & 19k & 400/34/50 & 133M & $0.002$ & $6{\to}4$ &
  SDF, freestream (const.) & $p$, $u,v,w$ \\
\bottomrule
\end{tabular}}
\end{center}
\end{table}

\paragraph{Metric.} We report relative $L^1$ error, computed separately for each sample $s$ and output group $g$, then averaged: $\mathrm{err} = \mathrm{mean}_{s,g}\, \|\hat{u}_{s,g} -
u_{s,g}\|_{1} / \|u_{s,g}\|_{1}$. For the aerodynamic benchmarks, the groups are pressure and the velocity or wall-shear vector. For Taylor--Green, they are density and the velocity vector.

\paragraph{Model.}
The baseline uses a Transolver backbone with $8$ layers, hidden
width $256$, $8$ heads, $32$ slices, an MLP ratio of $2$, GELU
activations, and no dropout (approximately $3.87$M parameters).
\mlponly{} removes the attention sublayer ($2.25$M parameters),
and \mlpwide{} increases the MLP ratio to $4$ ($4.35$M).
\sliceonce{} applies slicing once, followed by an $8$-layer token
transformer and a single deslicing operation ($4.37$M).

\paragraph{The \untiedln{} variant.}
For \untiedln{}, we use the released layer implementation of
\citet{hu2026linearno}, integrated without changes into our training
framework (Appendix~\ref{app:linearno}).
Relative to the baseline, this layer gives deslicing its own
projection and temperature, removes token self-attention, and
normalizes the analysis weights with a separate softmax over
the $N$ points (Figure~\ref{fig:linearno}(a)).
The last change distinguishes \untiedln{} from combining
\untied{} and \notoken{}, which normalizes the per-point slice
weights by their sums over points.
The resulting model has $3.85$M parameters.

\paragraph{Training.}
We use AdamW with a learning rate of $10^{-3}$, cosine decay,
$1\%$ warmup, weight decay $5\times10^{-5}$, and gradient clipping
at $5.0$. Training uses a per-device batch size of $1$--$5$, depending on the benchmark. Each run uses data parallelism on a single node with $4$ H100 GPUs. Table~\ref{tab:datasets} lists the training step budgets.

\section{Experimental protocol}
\label{app:protocol}

\subsection{Training budgets}
\label{app:budgets}
Table~\ref{tab:datasets} lists the training budget for each benchmark. Variants use the same number of steps unless noted below.

We assess budget sensitivity on SHIFT-SUV by comparing results
from $10$k to $50$k steps. Over this range, the error ratio of
\mlponly{} to the baseline increases from $1.52$ to $1.62$ on
the surface and from $2.43$ to $3.24$ in the volume.
The corresponding ratios for \notoken{} remain close to one:
$0.996$--$1.009$ on the surface and $1.015$--$1.035$ in the volume.
Thus, on this benchmark, extending training does not close the
\mlponly{} gap or substantially change the \notoken{} comparison.

The following runs differ from the budgets in
Table~\ref{tab:datasets}:
\begin{itemize}
\item On DrivAerNet++, \mlponly{} reaches $250$k steps on the
surface and $204$k in the volume, compared with approximately
$184$k and $180$k for the baseline, under equal wall-clock budgets.
\item On DrivAerML volume, the \mlponly{} run ends at $16.5$k
steps after its validation error plateaus. At the common
checkpoint of $12.5$k steps, its error is $4.1$ times the baseline.
\item On SHIFT-Wing surface, \untied{} ends at $234$k of the
planned $250$k steps.
\end{itemize}

On SHIFT-Wing volume, all variants are evaluated at $140$k steps,
the last evaluation reached by every variant within one 12-hour
run. These runs use a $250$k-step cosine schedule.
On SHIFT-SUV volume, the error ratios of \mlponly{} and
\sliceonce{} to the baseline are still increasing at the
$50$k-step cutoff. Figure~\ref{fig:allcurves} shows the validation curves for all benchmarks.

\subsection{Implementation checks}
\label{app:verification}
We verify each ablation using model structure, checkpoint contents,
and parameter updates. Module and parameter-key checks confirm
that the intended components are present or absent.
For \notoken{} and \frozensl{}, we also check that parameters
excluded from the computation remain unchanged between training
checkpoints, while active parameters update.
Assertions during model construction check that the requested
ablation matches the instantiated architecture.
Baseline replicas provide a reference for run-to-run variation.

\subsection{Full training curves}
\label{app:curves}
\begin{figure}[htbp]
\centering
\includegraphics[width=\linewidth]{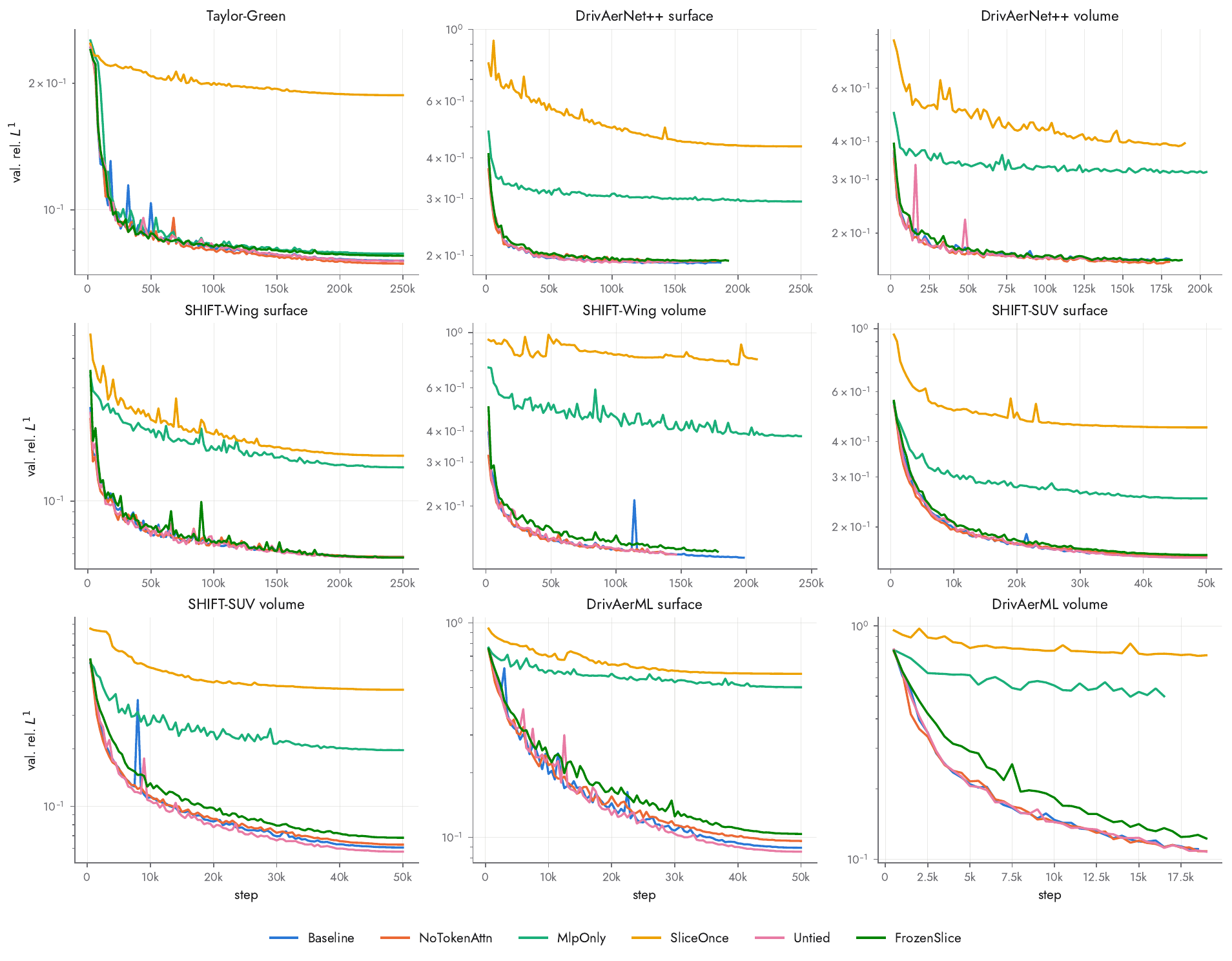}
\caption{Validation curves for all variants on the nine benchmark
configurations (seed 42). Training budgets and exceptions are
described in \S\ref{app:budgets}.}
\label{fig:allcurves}
\end{figure}

\section{Additional results}
\label{app:results}

\subsection{Pointwise predictions on aerodynamic benchmarks}
\label{app:mechanism}
A pointwise model receives the input features at each query point,
including freestream conditions where provided (Table~\ref{tab:datasets}), but cannot combine information across
points. Local geometric features, such as surface normals and
signed distance, do not generally determine the surrounding flow:
the field also depends on the rest of the body.
This dependence remains when the inflow is fixed but the geometry
varies between samples. Consistent with this, the largest
\mlponly{} error increase occurs on DrivAerML, which uses fixed
inflow, rather than on the varying-inflow SHIFT-Wing benchmark
(Table~\ref{tab:main}).

On the DrivAerML example in Figure~\ref{fig:fields}, \mlponly{}
smooths the wheel-arch suction and front stagnation patterns,
with errors around the wheels, mirror, and A-pillar.
The per-variable results in Figure~\ref{fig:controls}(d) show
a larger relative error increase for pressure than for velocity
or wall shear across the aerodynamic benchmarks.
These observations are consistent with a loss of information
about the overall geometry.
\sliceonce{} shows a different distribution of errors across
fields, suggesting that its limitations are not fully explained
by the pointwise model's lack of spatial coupling.

\begin{figure}[htbp]
\centering
\includegraphics[width=\linewidth]{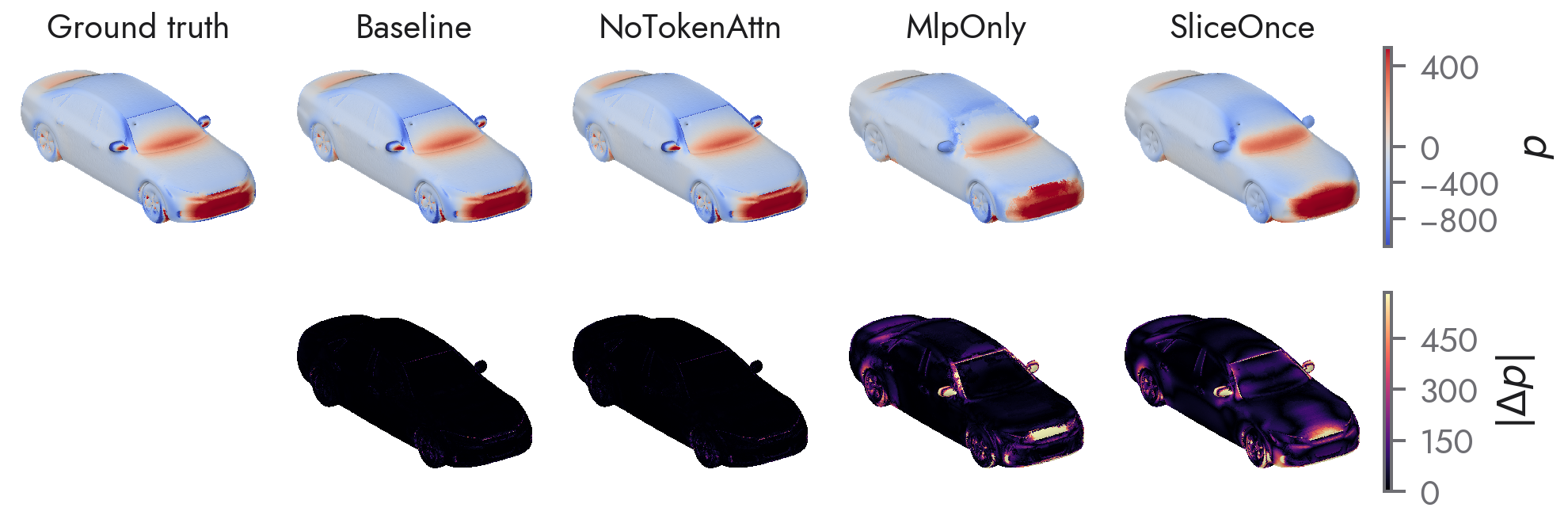}
\caption{\textbf{Surface pressure on a DrivAerML test car.} Top: ground
truth and the four variants. Bottom: each variant's
absolute error. The car is \texttt{run\_472}. Appendix~\ref{app:fieldset} states the protocol and repeats the figure on
the other eight benchmarks.}
\label{fig:fields}
\end{figure}

\subsection{Field predictions across benchmarks}
\label{app:fieldset}

Figures~\ref{fig:f-tg}--\ref{fig:f-dmlv-u} extend the comparison
in Figure~\ref{fig:fields} to the remaining benchmarks and output
fields. For each aerodynamic benchmark, we show pressure and
either wall shear or velocity; for Taylor--Green, we show velocity.
On the selected aerodynamic samples, the error of \mlponly{}
relative to the baseline is approximately $1.8$--$20$ times
larger for pressure and $1.3$--$5.1$ times larger for the
corresponding vector field.

\paragraph{Sample selection and evaluation.}
For each benchmark, we select the sample with median baseline
relative $L^1$ error among the first fifty test samples
(the full test split for DrivAerML).
All evaluations use fp32 and the training subsampling ratios
in Table~\ref{tab:datasets}.
For visualization, we pool predictions from several subsample
seeds to obtain a denser point cloud while retaining the
training point density in each model evaluation.

\paragraph{Rendering.}
Surfaces are rendered by depth-buffered point splatting with
depth-based shading. Volume sections interpolate points within
a thin slab around the displayed plane; the body is grey,
and regions without sufficient point coverage are white.
Within each figure, prediction panels share a color scale,
and error panels share a separate scale spanning all variants.
Reported errors are for the selected sample and therefore
differ from the test-set averages in
Table~\ref{tab:pervar-l1}.

\begin{figure}[tbp]
\centering
\includegraphics[width=\linewidth]{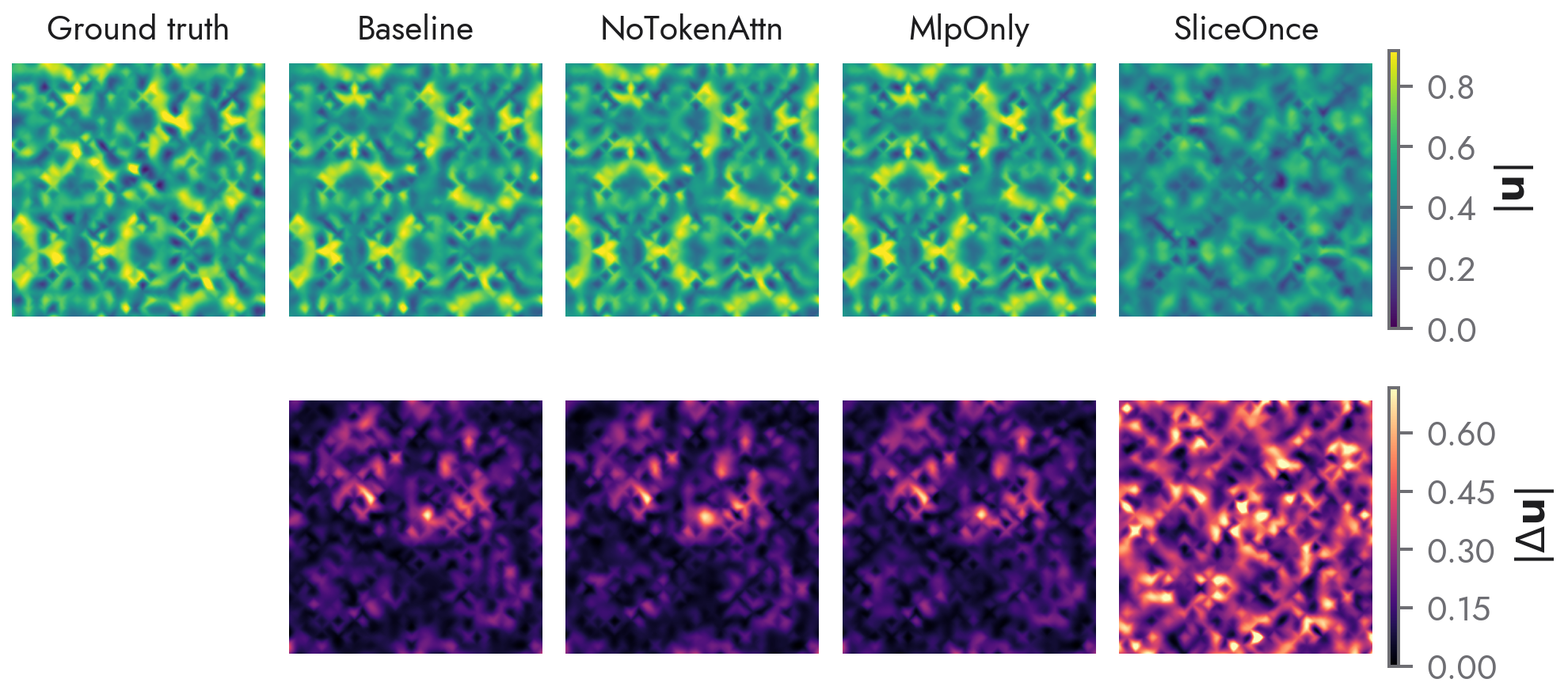}
\caption{\textbf{Taylor--Green velocity magnitude} on a mid-plane of the
$32^3$ cube, predicted from $t=1.28$ to $t=1.36$. Relative $L^1$ errors:
\baselinev{} $0.149$, \notoken{} $0.139$, \mlponly{} $0.148$,
\sliceonce{} $0.450$. \sliceonce{} smooths the fine vortex structure.}
\label{fig:f-tg}
\end{figure}

\begin{figure}[p]
\centering
\includegraphics[width=\linewidth]{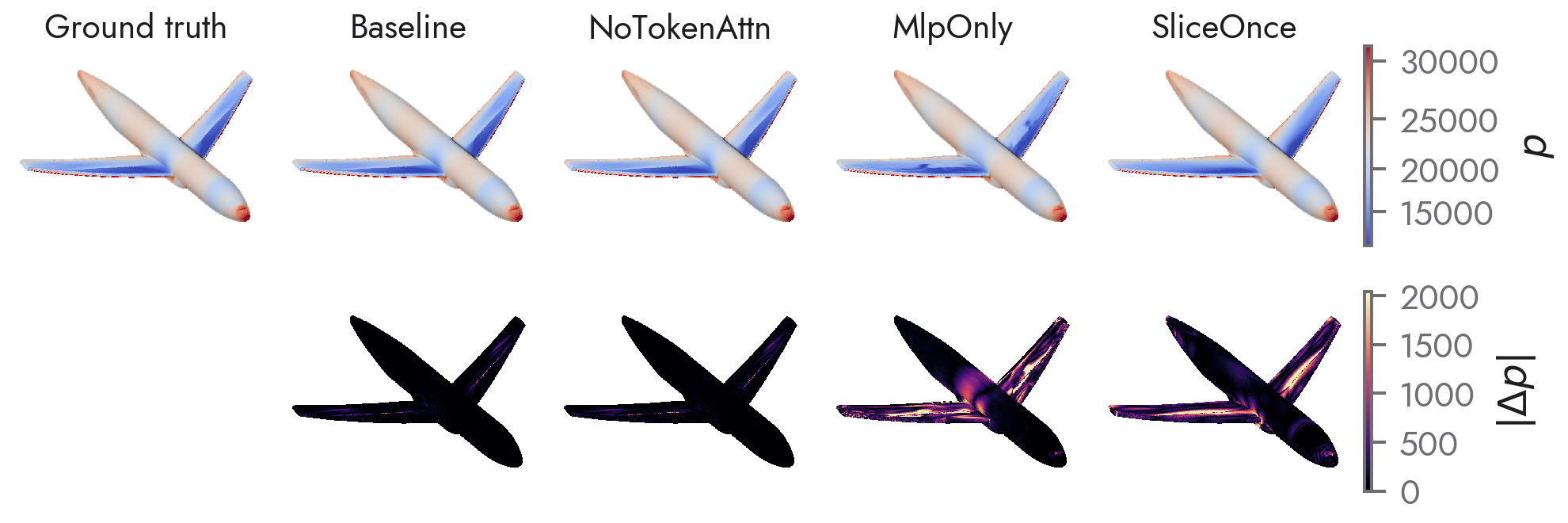}
\caption{\textbf{SHIFT-Wing surface pressure} for sample \texttt{016040}.
Relative $L^1$ errors: \baselinev{} $0.00085$, \notoken{} $0.00088$,
\mlponly{} $0.0098$, \sliceonce{} $0.0086$. The latter two smooth the
leading-edge pressure pattern.}
\label{fig:f-sws}
\end{figure}

\begin{figure}[p]
\centering
\includegraphics[width=\linewidth]{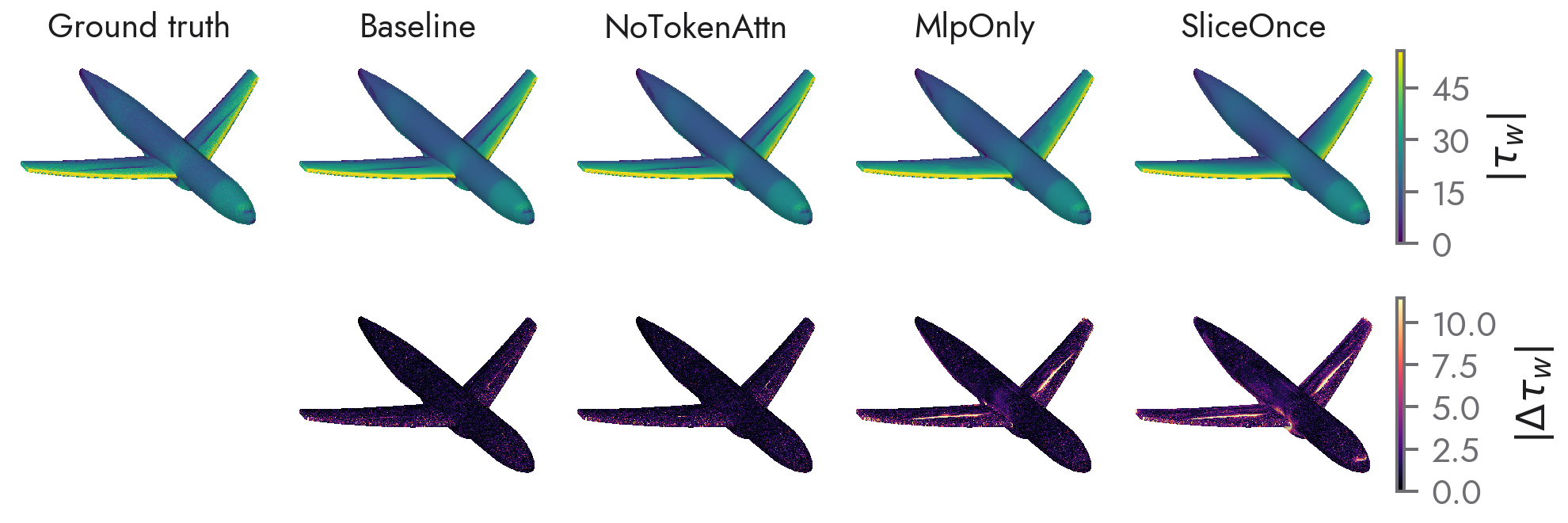}
\caption{\textbf{SHIFT-Wing surface wall shear} for the sample in
Figure~\ref{fig:f-sws}. Relative $L^1$ errors: \baselinev{} $0.057$,
\notoken{} $0.057$, \mlponly{} $0.082$, \sliceonce{} $0.104$.}
\label{fig:f-sws-t}
\end{figure}

\begin{figure}[p]
\centering
\includegraphics[width=\linewidth]{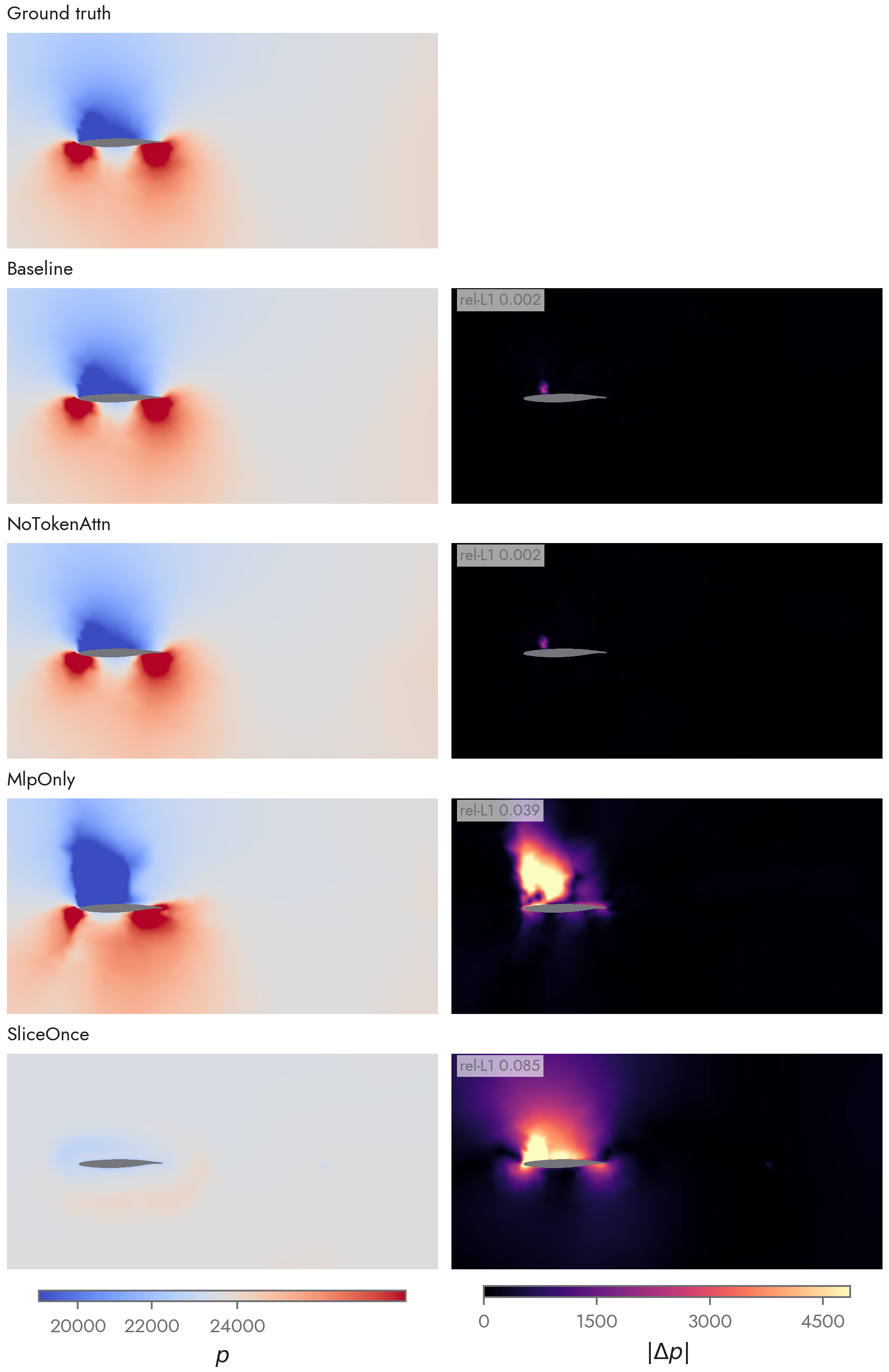}
\caption{\textbf{SHIFT-Wing volume pressure} at $y=15.3$ for sample
\texttt{016032}. Relative $L^1$ errors: \baselinev{} $0.0019$,
\notoken{} $0.0022$, \mlponly{} $0.039$, \sliceonce{} $0.085$.
\sliceonce{} produces a nearly uniform field.}
\label{fig:f-swv}
\end{figure}

\begin{figure}[p]
\centering
\includegraphics[width=\linewidth]{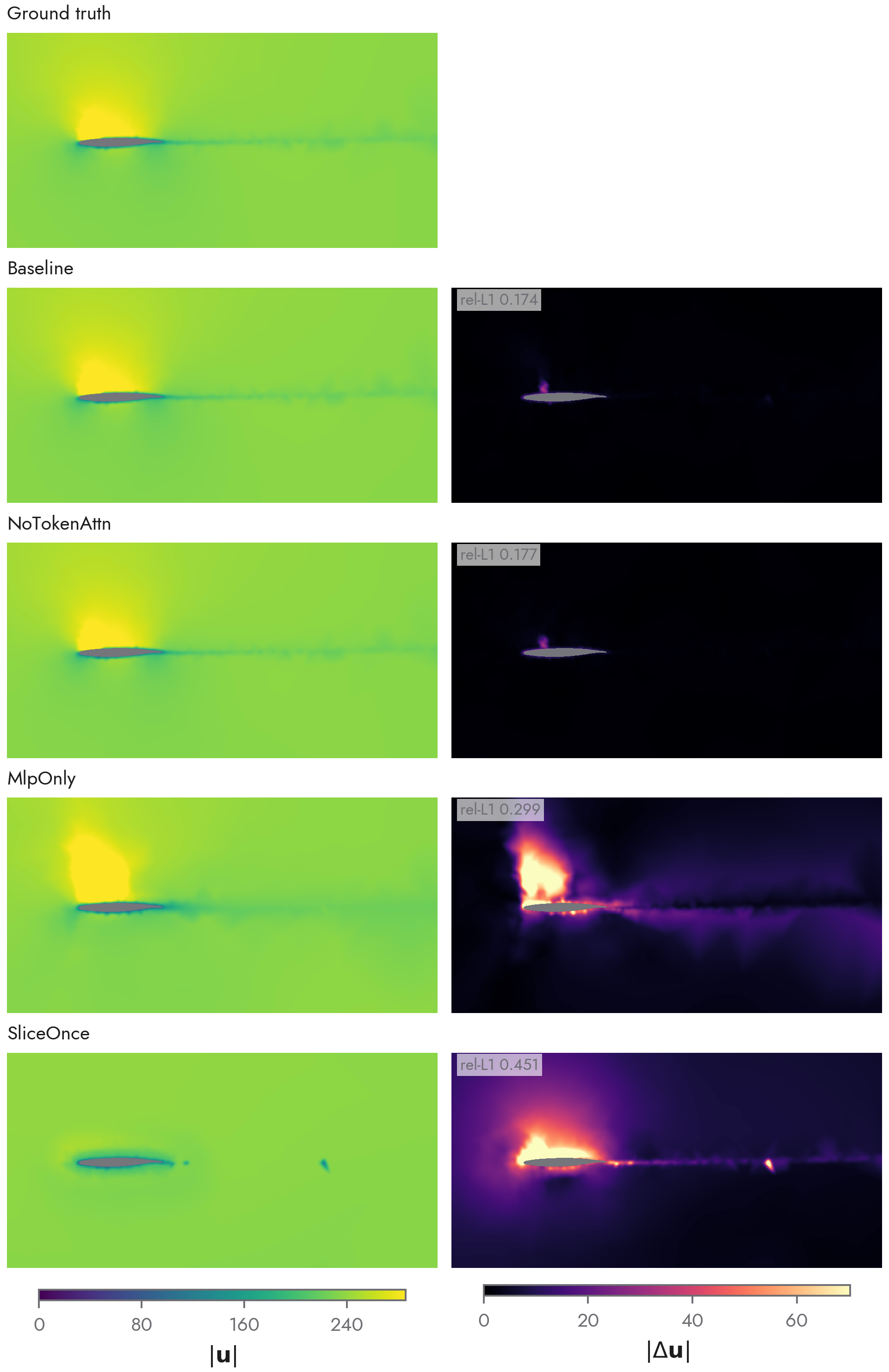}
\caption{\textbf{SHIFT-Wing volume velocity magnitude} on the section
in Figure~\ref{fig:f-swv}. Relative $L^1$ errors: \baselinev{} $0.174$,
\notoken{} $0.177$, \mlponly{} $0.299$, \sliceonce{} $0.451$.}
\label{fig:f-swv-u}
\end{figure}

\clearpage

\begin{figure}[p]
\centering
\includegraphics[width=\linewidth]{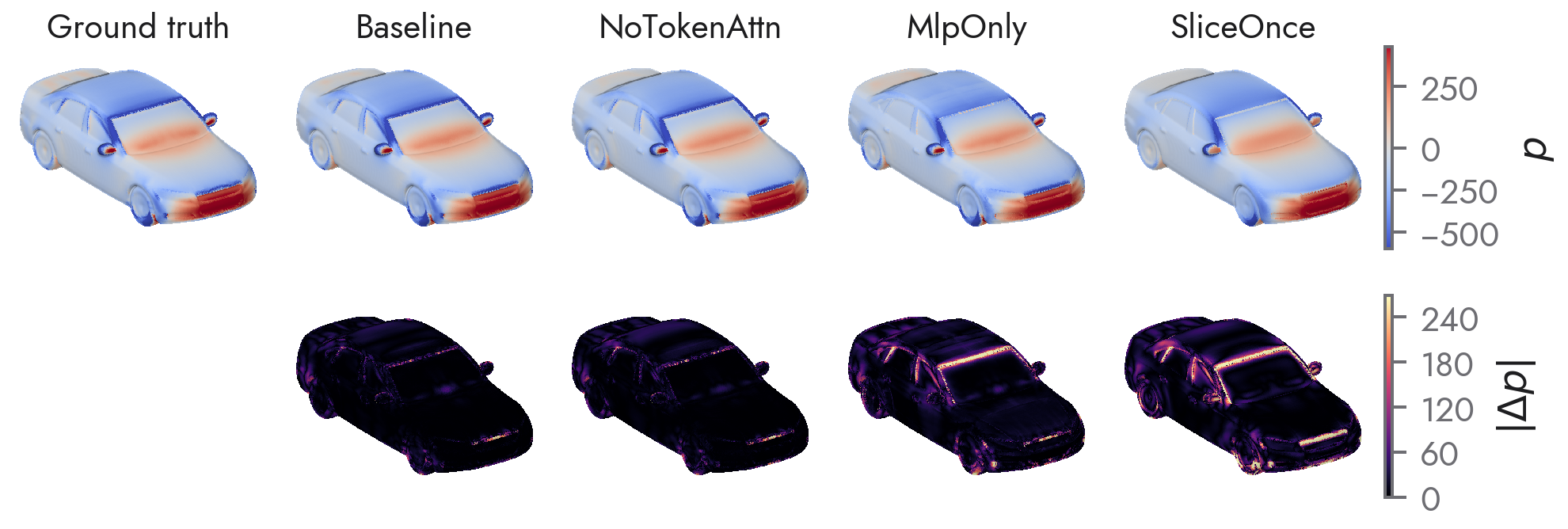}
\caption{\textbf{DrivAerNet++ surface pressure} for car
\texttt{N\_S\_WWC\_WM\_388}. Relative $L^1$ errors: \baselinev{}
$0.095$, \mlponly{} $0.168$, \sliceonce{} $0.210$.}
\label{fig:f-dns}
\end{figure}

\begin{figure}[p]
\centering
\includegraphics[width=\linewidth]{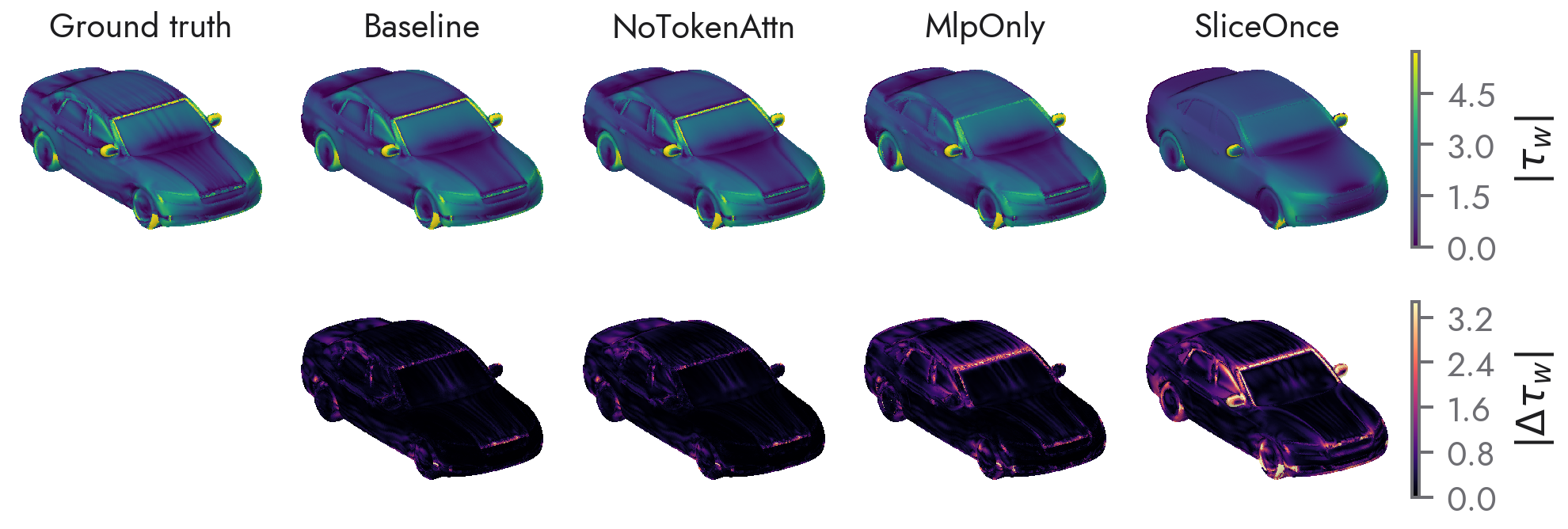}
\caption{\textbf{DrivAerNet++ surface wall shear} for the car in
Figure~\ref{fig:f-dns}. Relative $L^1$ errors: \baselinev{} $0.194$,
\notoken{} $0.204$, \mlponly{} $0.269$, \sliceonce{} $0.476$.}
\label{fig:f-dns-t}
\end{figure}

\begin{figure}[p]
\centering
\includegraphics[width=\linewidth]{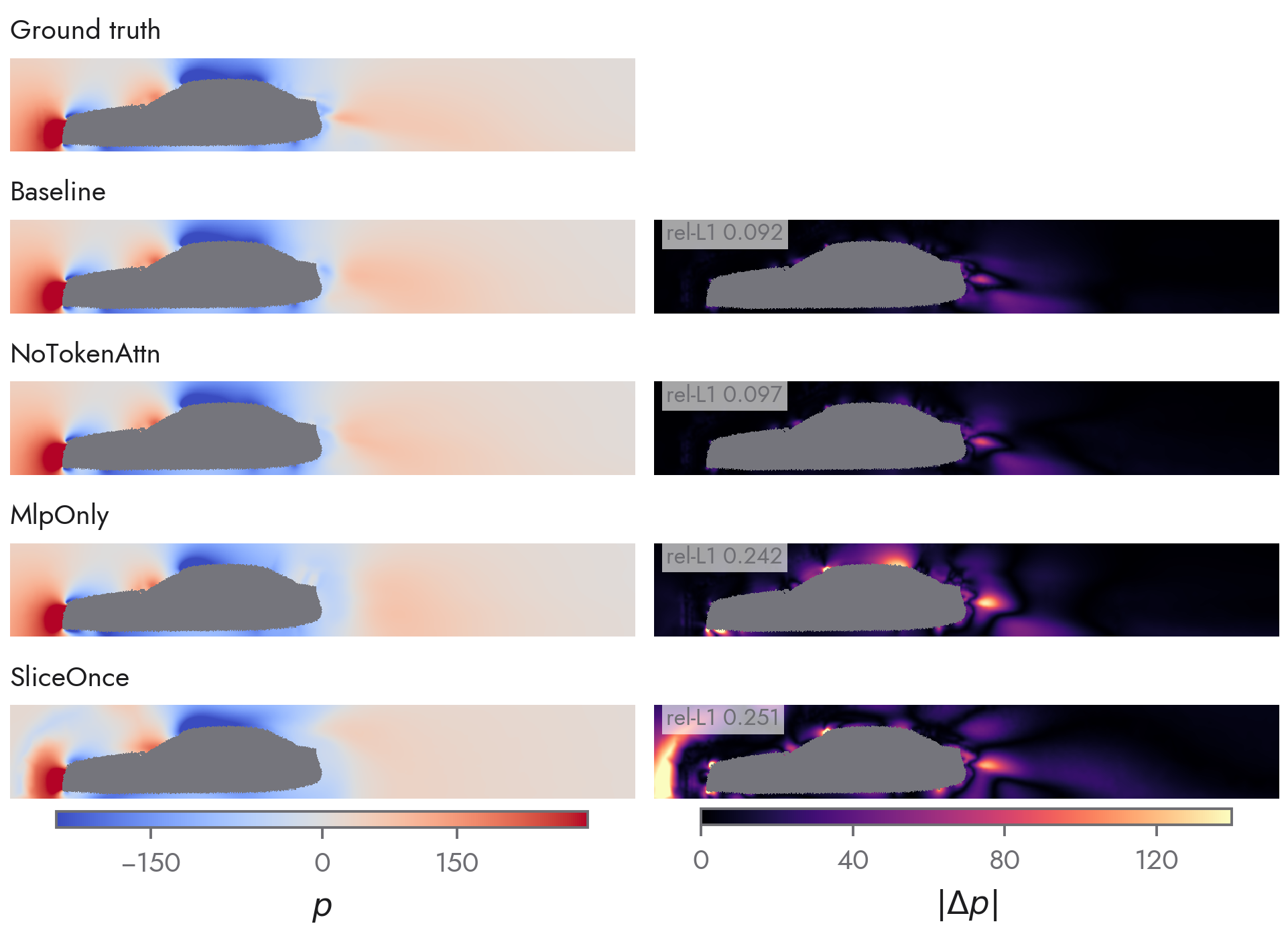}
\caption{\textbf{DrivAerNet++ volume pressure} on the symmetry plane
for car \texttt{N\_S\_WWC\_WM\_391}. Relative $L^1$ errors:
\baselinev{} $0.092$, \notoken{} $0.097$, \mlponly{} $0.242$,
\sliceonce{} $0.251$.}
\label{fig:f-dnv}
\end{figure}

\begin{figure}[p]
\centering
\includegraphics[width=\linewidth]{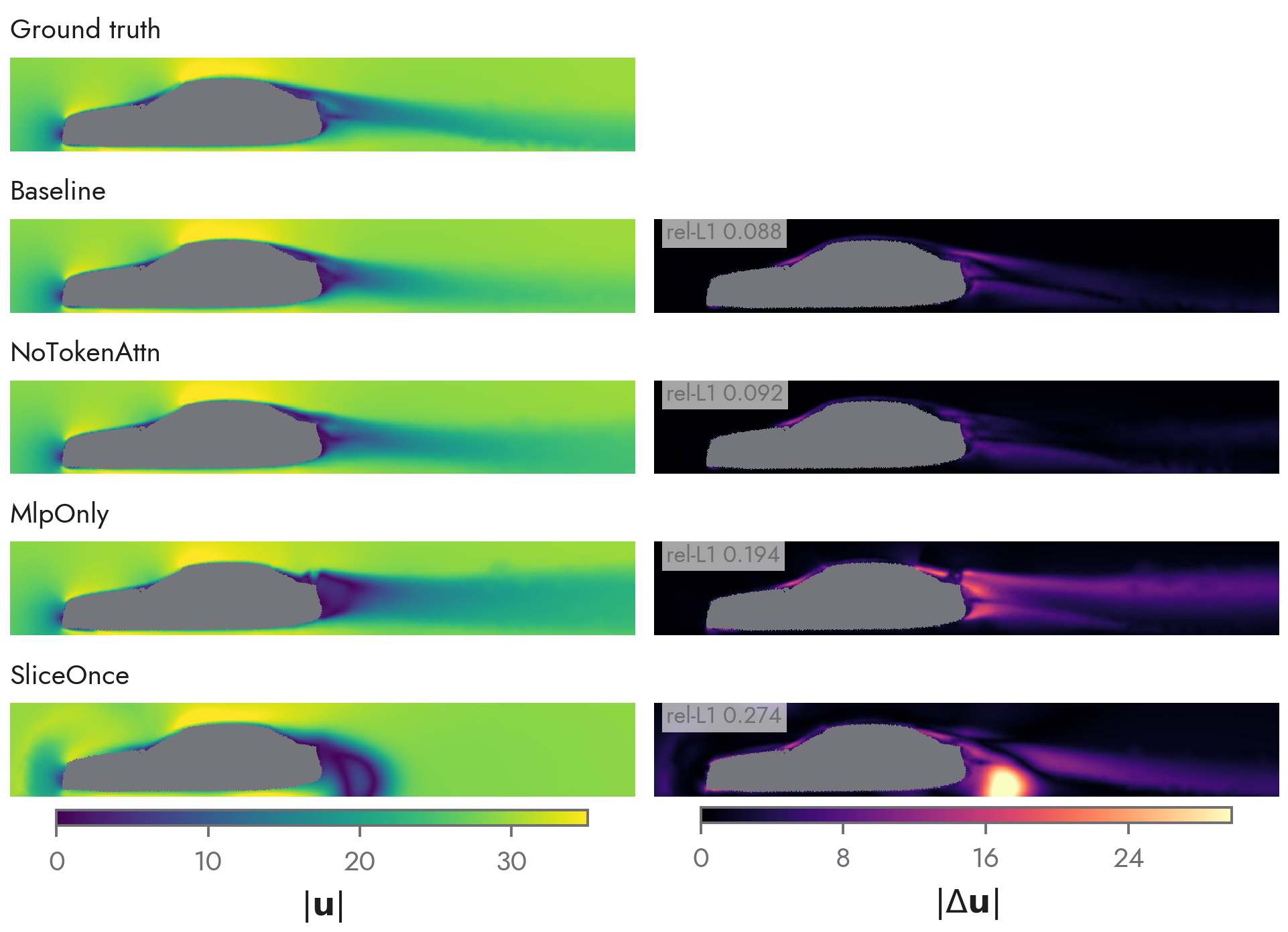}
\caption{\textbf{DrivAerNet++ volume velocity magnitude} on the plane
in Figure~\ref{fig:f-dnv}. Relative $L^1$ errors: \baselinev{} $0.088$,
\notoken{} $0.092$, \mlponly{} $0.194$, \sliceonce{} $0.274$.}
\label{fig:f-dnv-u}
\end{figure}

\begin{figure}[p]
\centering
\includegraphics[width=\linewidth]{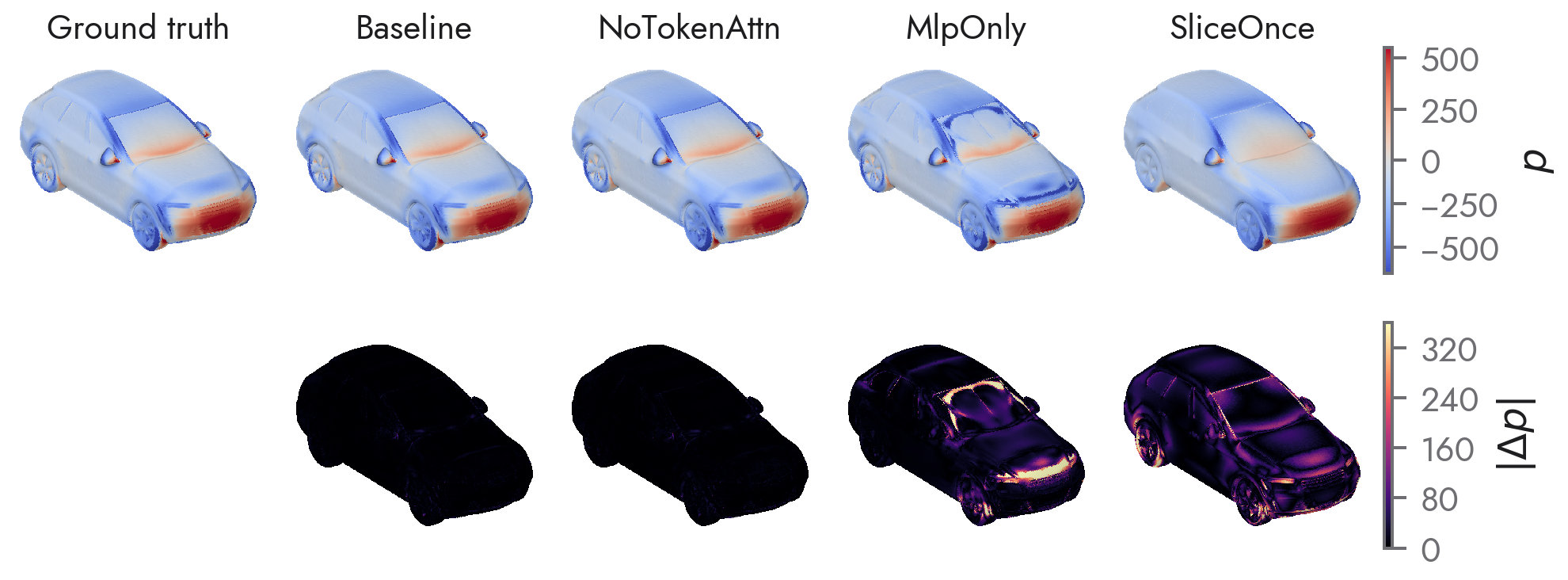}
\caption{\textbf{SHIFT-SUV surface pressure} for car \texttt{run\_00939},
with surface normals as input. Relative $L^1$ errors: \baselinev{}
$0.024$, \notoken{} $0.024$, \mlponly{} $0.113$, \sliceonce{} $0.195$.}
\label{fig:f-suvs}
\end{figure}

\begin{figure}[p]
\centering
\includegraphics[width=\linewidth]{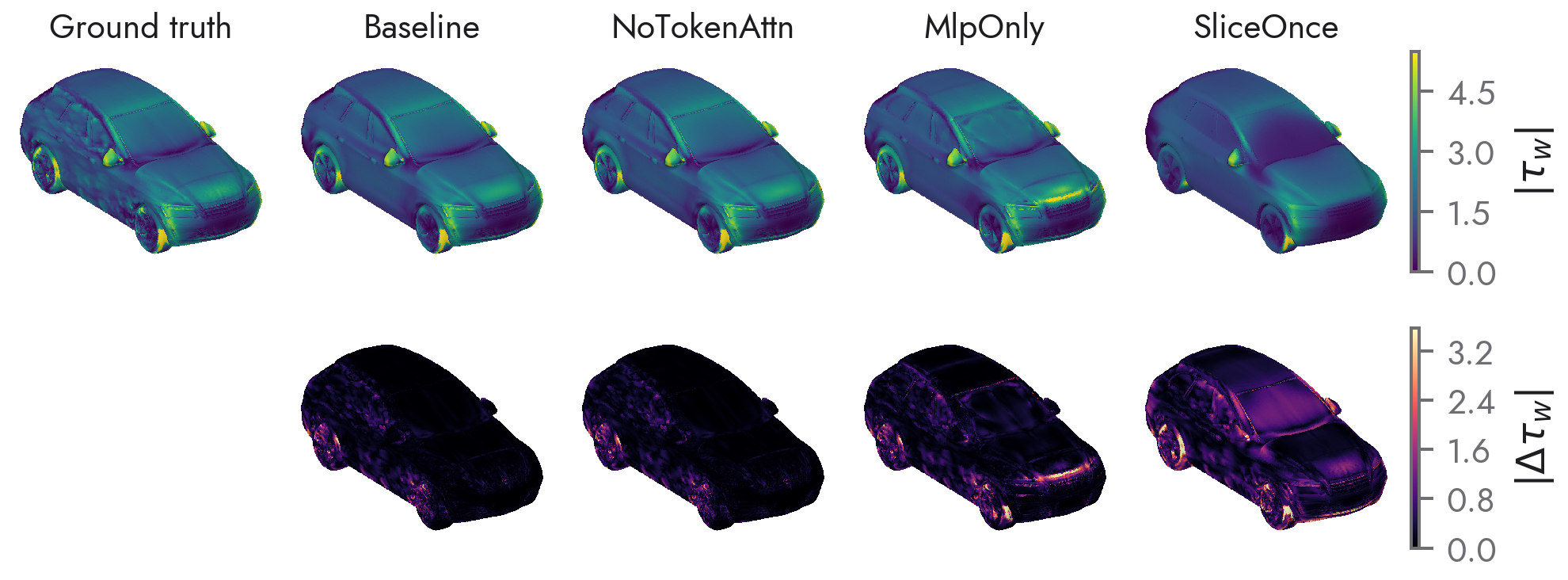}
\caption{\textbf{SHIFT-SUV surface wall shear} for the car in
Figure~\ref{fig:f-suvs}. Relative $L^1$ errors: \baselinev{} $0.256$,
\notoken{} $0.257$, \mlponly{} $0.328$, \sliceonce{} $0.558$.}
\label{fig:f-suvs-t}
\end{figure}

\begin{figure}[p]
\centering
\includegraphics[width=\linewidth]{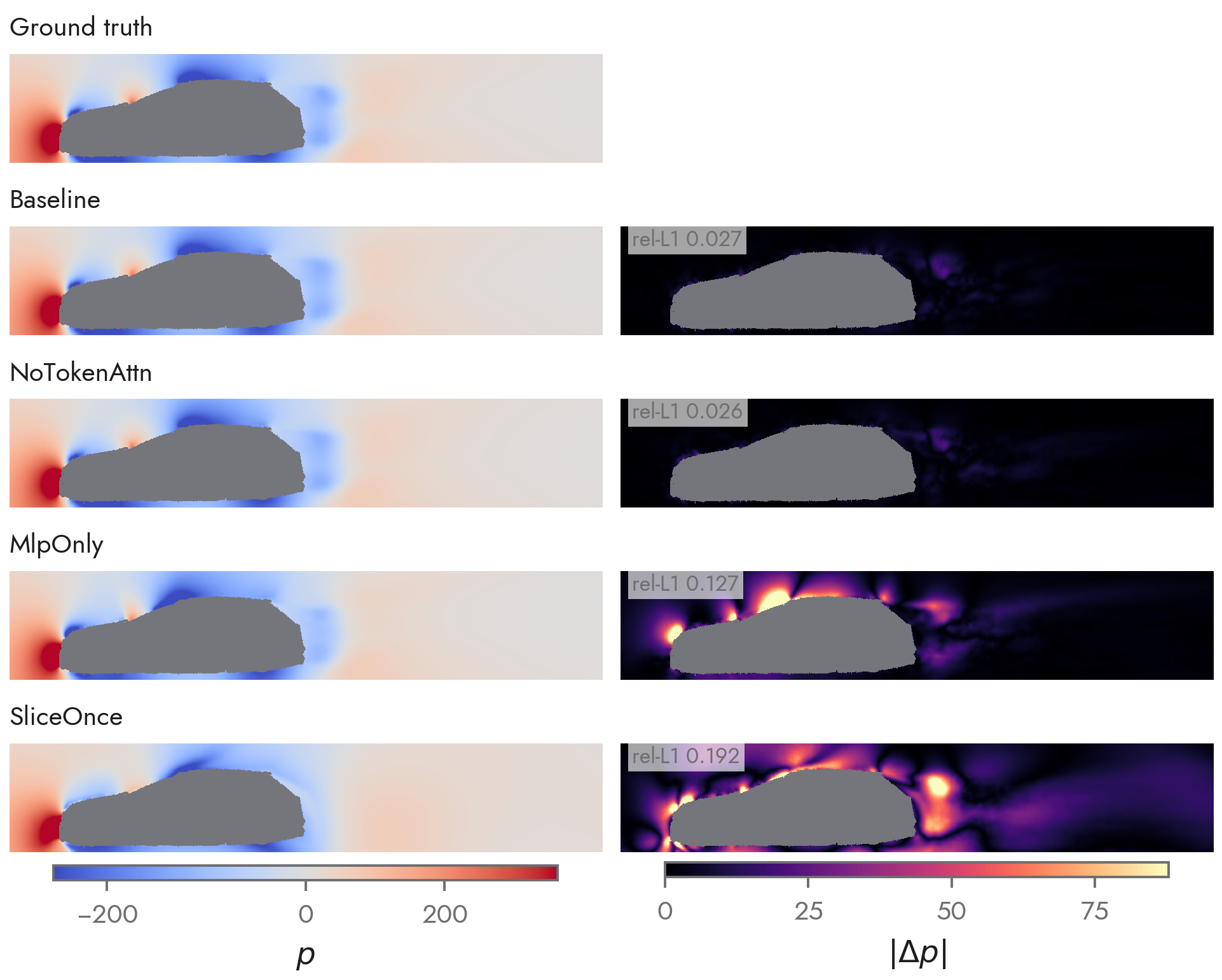}
\caption{\textbf{SHIFT-SUV volume pressure} on the symmetry plane for
car \texttt{run\_00939}, with signed distance as input. Relative $L^1$
errors: \baselinev{} $0.027$, \notoken{} $0.026$, \mlponly{} $0.127$,
\sliceonce{} $0.192$.}
\label{fig:f-suvv}
\end{figure}

\begin{figure}[p]
\centering
\includegraphics[width=\linewidth]{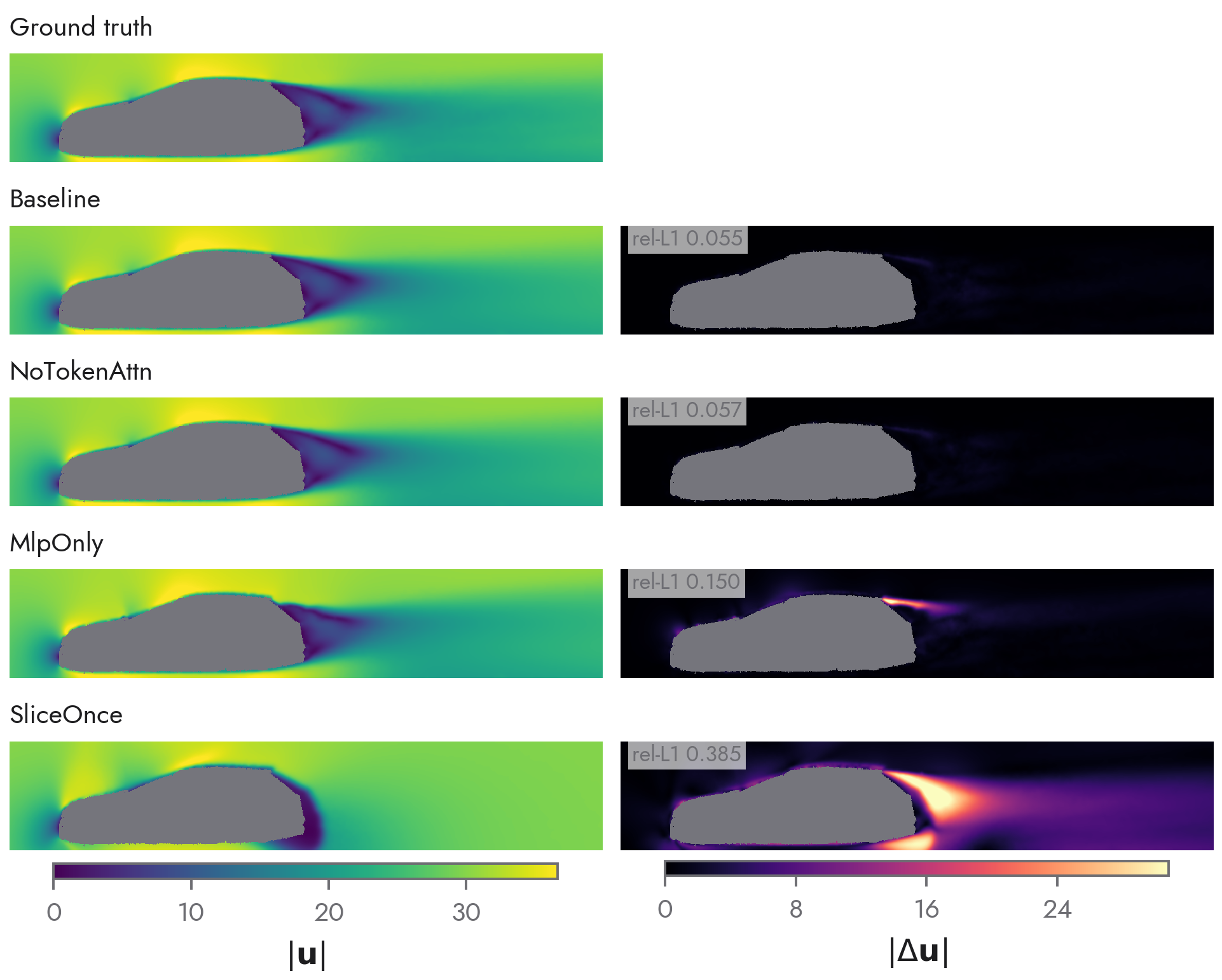}
\caption{\textbf{SHIFT-SUV volume velocity magnitude} on the plane in
Figure~\ref{fig:f-suvv}. Relative $L^1$ errors: \baselinev{} $0.055$,
\notoken{} $0.057$, \mlponly{} $0.150$, \sliceonce{} $0.385$.}
\label{fig:f-suvv-u}
\end{figure}

\begin{figure}[p]
\centering
\includegraphics[width=\linewidth]{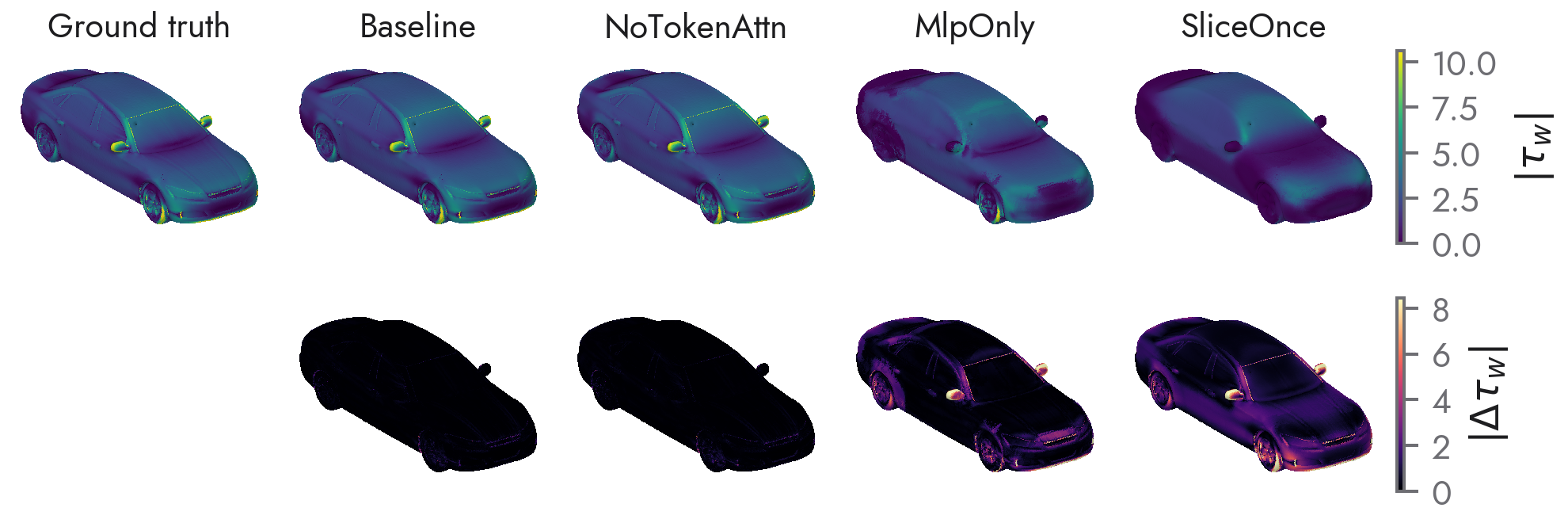}
\caption{\textbf{DrivAerML surface wall shear} for the car in
Figure~\ref{fig:fields}. Relative $L^1$ errors: \baselinev{} $0.107$,
\notoken{} $0.115$, \mlponly{} $0.546$, \sliceonce{} $0.736$.}
\label{fig:f-dmls-t}
\end{figure}

\clearpage

\begin{figure}[p]
\centering
\includegraphics[width=\linewidth]{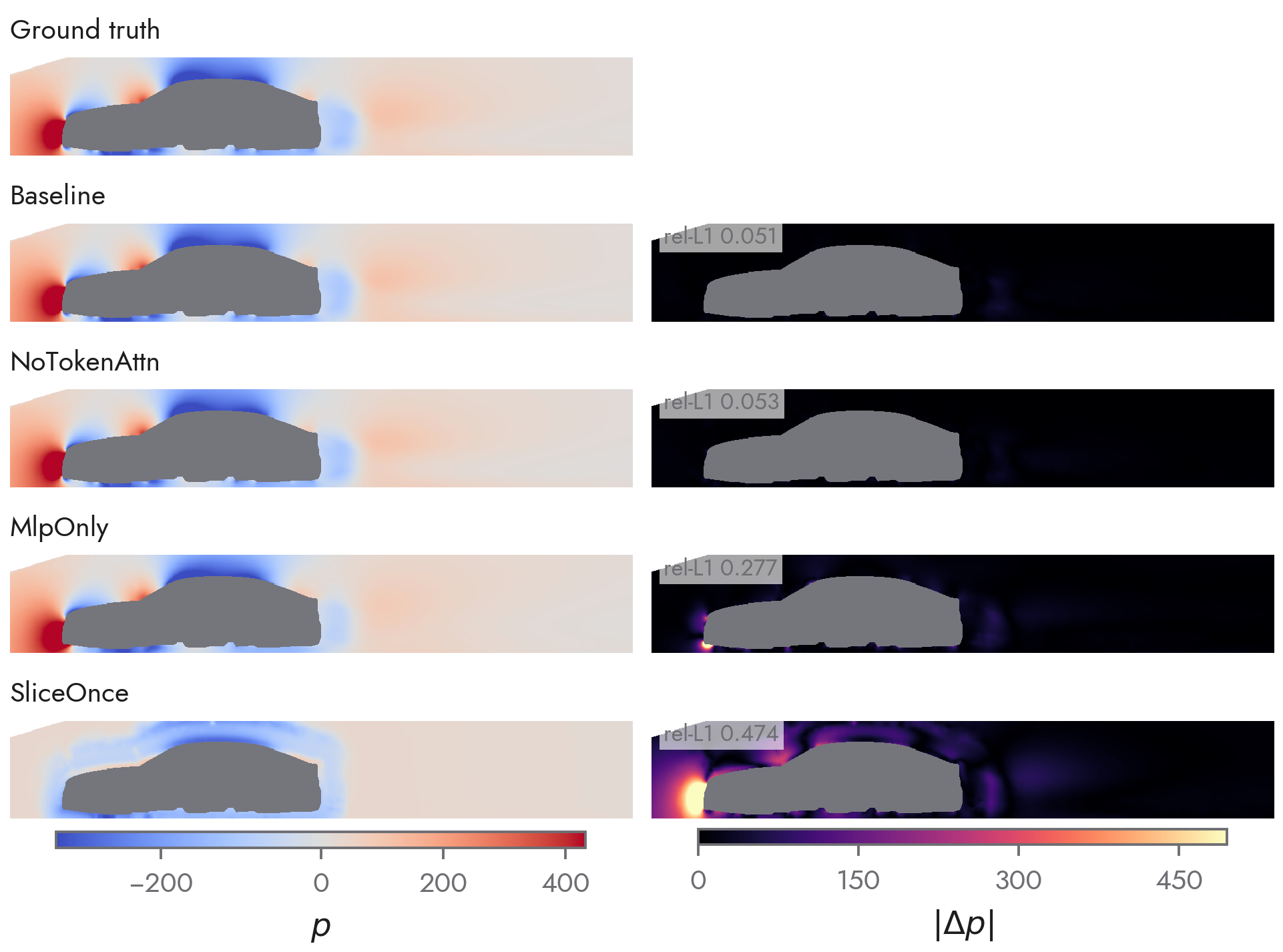}
\caption{\textbf{DrivAerML volume pressure} on the symmetry plane for
car \texttt{run\_108}. Relative $L^1$ errors: \baselinev{} $0.051$,
\notoken{} $0.053$, \mlponly{} $0.277$, \sliceonce{} $0.474$.}
\label{fig:f-dmlv}
\end{figure}

\begin{figure}[p]
\centering
\includegraphics[width=\linewidth]{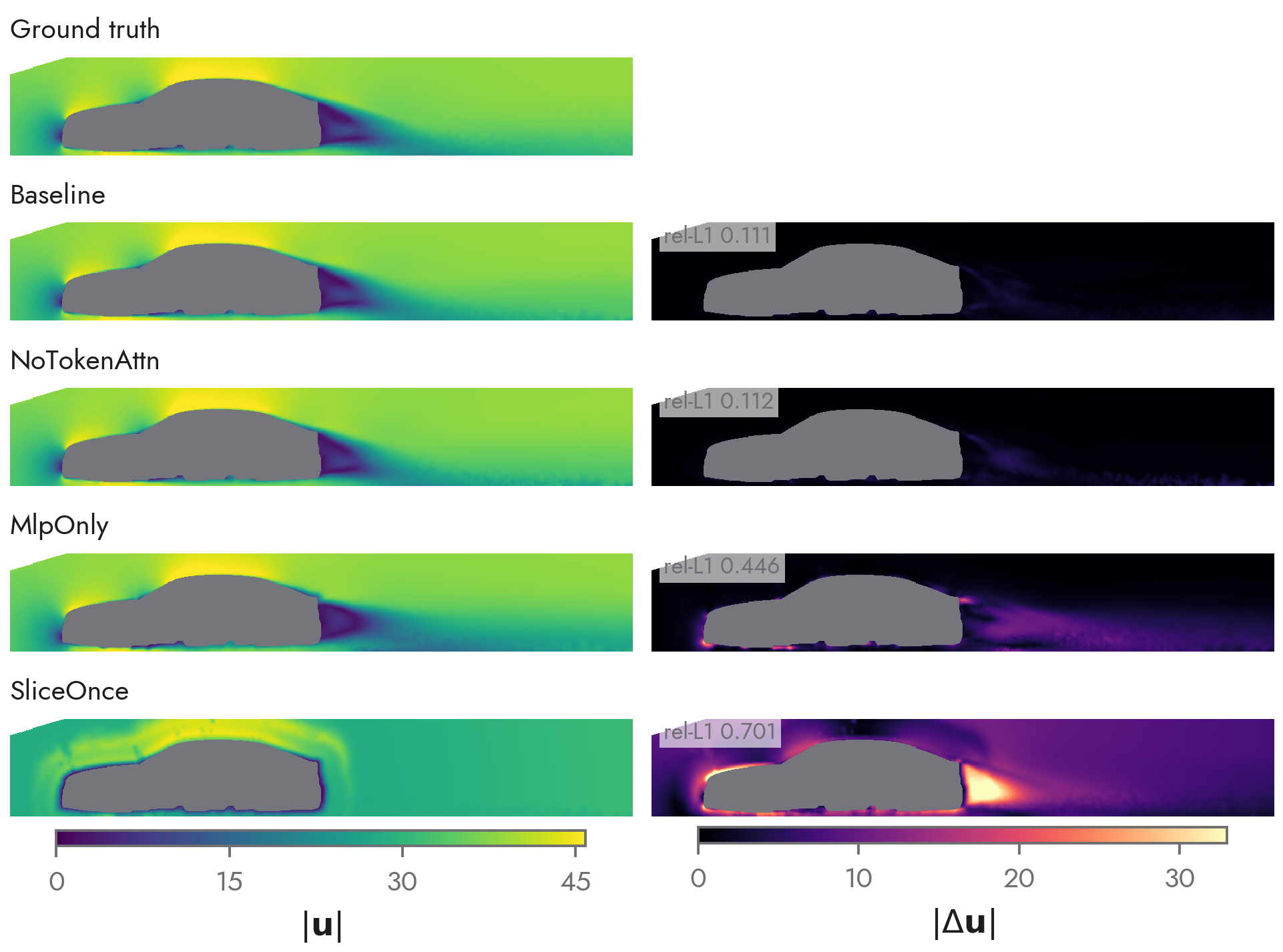}
\caption{\textbf{DrivAerML volume velocity magnitude} on the plane in
Figure~\ref{fig:f-dmlv}. Relative $L^1$ errors: \baselinev{} $0.111$,
\notoken{} $0.112$, \mlponly{} $0.446$, \sliceonce{} $0.701$.}
\label{fig:f-dmlv-u}
\end{figure}

\clearpage

\FloatBarrier

\subsection{Per-variable errors}
\label{app:pervar}

Table~\ref{tab:main} reports a composite relative error on
standardized outputs. Here we evaluate each physical variable
separately after reversing the output normalization.
The resulting relative errors are dimensionless.
Table~\ref{tab:pervar-l1} reports relative $L^1$ errors,
and Table~\ref{tab:pervar-l2} reports relative $L^2$ errors,
with scalar fields evaluated individually and vector fields
as groups. Both metrics support the main comparison:
\notoken{} remains close to the baseline across variables,
whereas \mlponly{} shows larger error increases for pressure
than for velocity or wall shear.

\paragraph{Evaluation.}
We evaluate the test split at the training subsampling ratio,
using the selected checkpoint for each run.
Results use seed 42, except for SHIFT-Wing volume, which uses
seed 43. Taylor--Green is evaluated at full resolution.

\paragraph{Per-variable comparisons.}
Across the aerodynamic benchmarks, the pressure error of
\mlponly{} is $1.8$--$14.8$ times the baseline error,
compared with $1.3$--$5.0$ times for velocity or wall shear.
The largest contrast occurs on SHIFT-Wing volume:
$14.8$ times for pressure and $1.6$ times for velocity.
Taylor--Green shows a smaller effect; unlike the aerodynamic
benchmarks, its inputs include the current field state.

For SHIFT-Wing, pressure errors are computed using dimensional
pressure, whose large ambient offset
($p_\infty \approx 23.8$\,kPa) dominates the denominator.
This makes the relative pressure errors small.
Comparisons between variants on the same benchmark are
therefore more informative than comparisons of absolute
relative errors across benchmarks.

\paragraph{Subsampling variability.}
We repeat each aerodynamic baseline evaluation with six
independent subsample seeds (Table~\ref{tab:noise}).
The relative standard deviation of the per-variable
relative $L^1$ error is at most $0.18\%$.
This indicates low sensitivity to the sampled point set
at the evaluation ratios used here, but does not measure
the difference from full-resolution evaluation.

\begin{table}[htbp]
\caption{\textbf{Subsampling noise of the evaluation metric.} Baseline
checkpoint (seed 43 on SHIFT-Wing volume), test split, six independent
subsample seeds; mean$\pm$std of the per-variable relative $L^{1}$ error.}
\label{tab:noise}
\begin{center}\small
\begin{tabular}{lccc}
\toprule
Benchmark & $p$ & $\mathbf{u}$ / $\boldsymbol{\tau}_w$ & max rel.\ std \\
\midrule
DrivAerNet++ surface & $0.10170\pm.00003$ & $0.19222\pm.00004$ & $0.02\%$ \\
DrivAerNet++ volume  & $0.10634\pm.00004$ & $0.10096\pm.00002$ & $0.04\%$ \\
DrivAerML surface    & $0.03876\pm.00006$ & $0.11286\pm.00009$ & $0.16\%$ \\
DrivAerML volume     & $0.05184\pm.00007$ & $0.11112\pm.00008$ & $0.14\%$ \\
SHIFT-Wing surface   & $0.001018\pm.000001$ & $0.05863\pm.00001$ & $0.10\%$ \\
SHIFT-Wing volume    & $0.002251\pm.000001$ & $0.17355\pm.00002$ & $0.03\%$ \\
SHIFT-SUV surface    & $0.02709\pm.00002$ & $0.26064\pm.00006$ & $0.07\%$ \\
SHIFT-SUV volume     & $0.02986\pm.00006$ & $0.05827\pm.00005$ & $0.18\%$ \\
\bottomrule
\end{tabular}
\end{center}
\end{table}

% ============================================================================
% Appendix E tables -- per-variable test errors in physical units.
% Bodies are GENERATED (do not hand-edit) by
%   cameleon:transolver-studies scripts/eval_pervar/gen_pervar_tables.sh
% from the collector TSV (test split, best checkpoint per Table-1 cell,
% seed 42). Numbers regenerated 2026-08-19 (TEAM rows removed, SHIFT-SUV added).
% ============================================================================
% LaTeX's \input leaves stray tokens behind when it returns at a tabular row
% boundary (\bottomrule then reports "Misplaced \noalign"), so pull the
% generated bodies in with TeX's primitive \input instead.
\makeatletter
\newcommand{\pervarbody}[1]{\input{#1} }
\makeatother

\begin{table}[p]
\caption{\textbf{Per-variable relative $L^{1}$ test error in physical
units.} Best per row in bold; protocol and reading notes in
\S\ref{app:pervar}.}
\label{tab:pervar-l1}
\begin{center}
\scriptsize
\setlength{\tabcolsep}{4pt}
\begin{tabular}{llcccccc}
\toprule
Dataset & Variable & baseline & no-token-attn & MLP-only & untied & frozen & slice-once \\
\midrule
Taylor--Green & $\rho$ & $0.00023$ & $0.00035$ & $0.00024$ & $0.00029$ & $\mathbf{0.00022}$ & $0.00026$ \\
 & $\mathbf{u}$ & $0.1420$ & $\mathbf{0.1397}$ & $0.1474$ & $0.1416$ & $0.1459$ & $0.3581$ \\
\midrule
DrivAerNet++ surf. & $p$ & $\mathbf{0.1017}$ & $0.1057$ & $0.1830$ & $0.1050$ & $0.1047$ & $0.2172$ \\
 & $\boldsymbol{\tau}_w$ & $\mathbf{0.1922}$ & $0.1981$ & $0.2759$ & $0.1928$ & $0.1992$ & $0.4635$ \\
\midrule
DrivAerNet++ vol. & $p$ & $0.1064$ & $\mathbf{0.0953}$ & $0.2188$ & $0.0979$ & $0.0963$ & $0.2253$ \\
 & $\mathbf{u}$ & $0.1010$ & $0.0940$ & $0.1639$ & $0.0934$ & $\mathbf{0.0933}$ & $0.2418$ \\
\midrule
DrivAerML surf. & $p$ & $0.0387$ & $0.0417$ & $0.2622$ & $\mathbf{0.0373}$ & $0.0440$ & $0.2592$ \\
 & $\boldsymbol{\tau}_w$ & $0.1128$ & $0.1224$ & $0.5645$ & $\mathbf{0.1088}$ & $0.1325$ & $0.7440$ \\
\midrule
DrivAerML vol. & $p$ & $0.0517$ & $0.0512$ & $0.2885$ & $\mathbf{0.0509}$ & $0.0573$ & $0.4612$ \\
 & $\mathbf{u}$ & $0.1112$ & $0.1089$ & $0.4470$ & $\mathbf{0.1083}$ & $0.1242$ & $0.6873$ \\
\midrule
SHIFT-Wing surf. & $p$ & $0.00102$ & $0.00104$ & $0.0107$ & $\mathbf{0.00099}$ & $0.00100$ & $0.00849$ \\
 & $\boldsymbol{\tau}_w$ & $0.0586$ & $0.0586$ & $0.0870$ & $\mathbf{0.0584}$ & $0.0584$ & $0.1125$ \\
\midrule
SHIFT-Wing vol. & $p$ & $0.00225$ & $0.00222$ & $0.0333$ & $\mathbf{0.00215}$ & $0.00225$ & $0.0766$ \\
 & $\mathbf{u}$ & $\mathbf{0.1735}$ & $0.1755$ & $0.2843$ & $0.1758$ & $0.1802$ & $0.4471$ \\
\midrule
SHIFT-SUV surf. & $p$ & $\mathbf{0.0271}$ & $0.0280$ & $0.1131$ & $0.0271$ & $0.0286$ & $0.2043$ \\
 & $\boldsymbol{\tau}_w$ & $\mathbf{0.2607}$ & $0.2618$ & $0.3296$ & $0.2608$ & $0.2637$ & $0.5728$ \\
\midrule
SHIFT-SUV vol. & $p$ & $0.0298$ & $0.0305$ & $0.1330$ & $\mathbf{0.0279}$ & $0.0328$ & $0.2035$ \\
 & $\mathbf{u}$ & $0.0582$ & $0.0595$ & $0.1537$ & $\mathbf{0.0550}$ & $0.0647$ & $0.3843$ \\
\bottomrule
\end{tabular}
\end{center}
\end{table}

\begin{table}[p]
\caption{\textbf{Per-variable relative $L^{2}$ test error in physical
units} (same protocol as Table~\ref{tab:pervar-l1}).}
\label{tab:pervar-l2}
\begin{center}
\scriptsize
\setlength{\tabcolsep}{4pt}
\begin{tabular}{llcccccc}
\toprule
Dataset & Variable & baseline & no-token-attn & MLP-only & untied & frozen & slice-once \\
\midrule
Taylor--Green & $\rho$ & $0.00034$ & $0.00044$ & $0.00035$ & $0.00039$ & $\mathbf{0.00031}$ & $0.00054$ \\
 & $\mathbf{u}$ & $0.1850$ & $\mathbf{0.1826}$ & $0.1906$ & $0.1845$ & $0.1888$ & $0.4470$ \\
\midrule
DrivAerNet++ surf. & $p$ & $\mathbf{0.1509}$ & $0.1563$ & $0.2510$ & $0.1553$ & $0.1564$ & $0.3479$ \\
 & $\boldsymbol{\tau}_w$ & $0.2265$ & $0.2316$ & $0.3098$ & $\mathbf{0.2262}$ & $0.2332$ & $0.5272$ \\
\midrule
DrivAerNet++ vol. & $p$ & $0.1621$ & $\mathbf{0.1462}$ & $0.3087$ & $0.1481$ & $0.1474$ & $0.3709$ \\
 & $\mathbf{u}$ & $0.1337$ & $0.1234$ & $0.2088$ & $0.1229$ & $\mathbf{0.1222}$ & $0.2770$ \\
\midrule
DrivAerML surf. & $p$ & $0.0614$ & $0.0656$ & $0.4053$ & $\mathbf{0.0590}$ & $0.0711$ & $0.3840$ \\
 & $\boldsymbol{\tau}_w$ & $0.1204$ & $0.1294$ & $0.6067$ & $\mathbf{0.1163}$ & $0.1385$ & $0.7404$ \\
\midrule
DrivAerML vol. & $p$ & $0.0729$ & $\mathbf{0.0725}$ & $0.4265$ & $0.0726$ & $0.0837$ & $0.6132$ \\
 & $\mathbf{u}$ & $0.1195$ & $\mathbf{0.1177}$ & $0.4749$ & $0.1178$ & $0.1343$ & $0.6550$ \\
\midrule
SHIFT-Wing surf. & $p$ & $0.00356$ & $0.00361$ & $0.0288$ & $0.00352$ & $\mathbf{0.00330}$ & $0.0191$ \\
 & $\boldsymbol{\tau}_w$ & $0.0793$ & $0.0791$ & $0.1202$ & $0.0790$ & $\mathbf{0.0788}$ & $0.1453$ \\
\midrule
SHIFT-Wing vol. & $p$ & $0.0101$ & $\mathbf{0.00939}$ & $0.0782$ & $0.00971$ & $0.0102$ & $0.1318$ \\
 & $\mathbf{u}$ & $\mathbf{0.2041}$ & $0.2069$ & $0.3239$ & $0.2078$ & $0.2124$ & $0.4490$ \\
\midrule
SHIFT-SUV surf. & $p$ & $0.0444$ & $0.0461$ & $0.1859$ & $\mathbf{0.0442}$ & $0.0465$ & $0.3757$ \\
 & $\boldsymbol{\tau}_w$ & $\mathbf{0.2933}$ & $0.2946$ & $0.3649$ & $0.2937$ & $0.2954$ & $0.5754$ \\
\midrule
SHIFT-SUV vol. & $p$ & $0.0537$ & $0.0546$ & $0.2181$ & $\mathbf{0.0501}$ & $0.0595$ & $0.3436$ \\
 & $\mathbf{u}$ & $0.0729$ & $0.0738$ & $0.2049$ & $\mathbf{0.0685}$ & $0.0801$ & $0.4039$ \\
\bottomrule
\end{tabular}
\end{center}
\end{table}

\section{LinearNO implementation and latent width}
\label{app:linearno}
For the \untiedln{} variant in Figure~\ref{fig:controls}(a),
we use the released layer implementation of
\citet{hu2026linearno} within the same model framework as
Transolver. The lifting MLP, pre-layer normalization,
residual connections, and output head are unchanged, so
the comparison isolates the replacement of the attention
layer. This experiment uses one seed on SHIFT-SUV surface.

\paragraph{Normalization.}
Both layers aggregate point features into tokens and map
the tokens back to points, but construct their aggregation
weights differently (Figure~\ref{fig:linearno}(a)).
Transolver applies a softmax over the $G$ slices at each
point and uses the resulting weights in both directions.
Each aggregated token is then divided by the sum of its
weights over points.
LinearNO uses separate weights for slicing and deslicing.
Its deslice weights $q$ are normalized over latent channels,
whereas its slice weights $k$ are normalized over points.
The released implementation explicitly stores both tensors,
each of shape $[B,H,N,M]$, where $M$ is the latent width
per head. Their memory footprint therefore grows linearly
with $M$. For Transolver, \flashslice{} avoids materializing
the corresponding point-to-slice weights
(\S\ref{sec:flashslice}).

\begin{figure}[t]
\centering
\includegraphics[width=\linewidth]{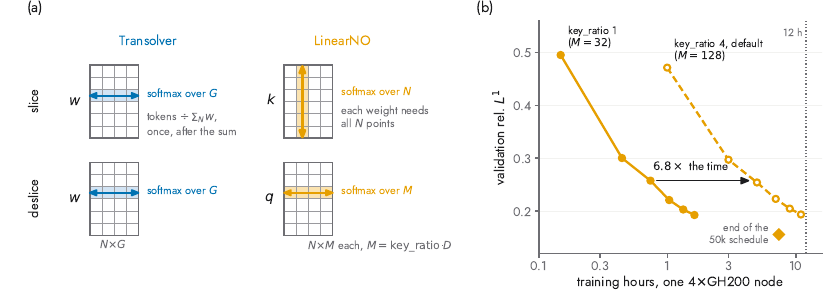}
\caption{\textbf{LinearNO normalization and latent width.}
(a) Slice and deslice weights in Transolver and LinearNO,with their normalization axes. LinearNO uses latent width
$M=\mathrm{key\_ratio}\times d_{\mathrm{head}}$.
(b) Validation errors on SHIFT-SUV surface at shared training
steps, plotted against runtime estimated from measured
iteration times on a four-GH200 node.
The settings $\mathrm{key\_ratio}=1$ and $4$ give $M=32$
and $128$, respectively.
The diamond marks the completion of $50$k steps for
$\mathrm{key\_ratio}=1$.}
\label{fig:linearno}
\end{figure}

\paragraph{Latent width.}
LinearNO sets
$M=\mathrm{key\_ratio}\times d_{\mathrm{head}}$.
With our head dimension $d_{\mathrm{head}}=32$, the published
default $\mathrm{key\_ratio}=4$ gives $M=128$, compared with
Transolver's $G=32$ slices.
We use $\mathrm{key\_ratio}=1$ in the main comparison to
match these widths and run the default setting as an
additional comparison.

\paragraph{Runtime and validation error.}
On SHIFT-SUV surface, using one node with four GH200 GPUs,
$\mathrm{key\_ratio}=1$ runs at $1.88$ iterations per second
and completes $50$k steps in $7$\,h\,$25$\,m.
The default setting runs at $3.59$ seconds per iteration,
approximately $6.8$ times slower.
At that rate, $50$k steps would take approximately $50$ hours.
These timings describe the implementations evaluated here.

The default-width run reaches $11{,}531$ steps within one
12-hour allocation. Both runs use the same $50$k-step
cosine schedule, allowing comparison at the same steps
and learning rates.
Across eleven shared evaluation points, the ratio of
validation error at $\mathrm{key\_ratio}=4$ to that at
$\mathrm{key\_ratio}=1$ averages $0.997$, with a range
of $0.95$--$1.02$.
Table~\ref{tab:linearno-keyratio} lists selected checkpoints.

The two widths therefore give similar validation errors
over the observed training interval, while the narrower
setting is substantially faster.
We use $\mathrm{key\_ratio}=1$ for the width-matched
comparison; the partial default-width run does not
establish how the two settings compare after $50$k steps.

\begin{table}[htbp]
\caption{\textbf{LinearNO latent width on SHIFT-SUV surface.}
Validation relative $L^1$ errors at selected shared checkpoints.
Both runs use a $50$k-step cosine schedule.
The ratio is the error at $\mathrm{key\_ratio}=4$ divided
by that at $\mathrm{key\_ratio}=1$.}
\label{tab:linearno-keyratio}
\begin{center}\small
\begin{tabular}{lcccccc}
\toprule
step & 1k & 3k & 5k & 7k & 9k & 11k \\
\midrule
$\mathrm{key\_ratio}=1$ & $0.4952$ & $0.3005$ & $0.2577$ & $0.2210$ & $0.2032$ & $0.1927$ \\
$\mathrm{key\_ratio}=4$ & $0.4714$ & $0.2973$ & $0.2543$ & $0.2232$ & $0.2049$ & $0.1938$ \\
ratio & $0.95$ & $0.99$ & $0.99$ & $1.01$ & $1.01$ & $1.01$ \\
\bottomrule
\end{tabular}
\end{center}
\end{table}

\section{\flashslice{}: implementation and evaluation}
\label{app:systems}

This appendix describes the kernels, numerical validation, and
performance measurements supporting \S\ref{sec:flashslice}.
Experiments use a node with four NVIDIA GH200 GPUs
($95$\,GB usable HBM per GPU), PyTorch~2.5, and Triton~3.0
\citep{ansel2024pytorch2,tillet2019triton}.
Unless stated otherwise, measurements use one GPU with
$C=256$, $H=8$, $G=32$, $L=8$, and batch size $1$.

Each comparison is measured within the same job using fixed
seeds and at least ten timed iterations after warm-up.
We report median times; observed min--max spreads are below
$2\%$. Point counts are rounded in the tables. The main sweep
uses powers of two from $2^{18}$ to $2^{24}$, with additional
intermediate sizes near the memory limit.

\subsection{Kernel design}
\paragraph{Single-tile kernels.}
For each head, the slice operation computes
\[
w_{ig}
=
\operatorname{softmax}_{G}
\left(
\frac{x_{\mathrm{mid},i}W^\top+b}{\tau}
\right)_g,
\qquad
z_g^{\mathrm{num}}=\sum_i w_{ig}f_i,
\qquad
s_g=\sum_i w_{ig}.
\]
The normalized tokens are
$z_g=z_g^{\mathrm{num}}/(s_g+\varepsilon)$.
Deslicing recomputes the weights and maps the mixed tokens
back to points.

When the slice axis fits within a tile, the softmax is
computed locally for each point. The weights remain in
registers rather than being written to HBM, and are
recomputed during backward.
Reductions use fp32 partial buffers followed by a
deterministic tree reduction without atomics.
The kernels read the projections in their native
$(N,H,D)$ layout, avoiding the eager implementation's
layout copies.

Input projections remain separate GEMMs, and the token
transformation remains in PyTorch.
In a forward-only comparison over 36 tile configurations,
fusing the projections achieved only
$0.18$--$0.75$ times the throughput of separate cuBLAS
projections followed by the kernel.

The single-tile implementation supports power-of-two
$D$ and $G$ in $[16,128]$, with value width equal to $D$.
Other supported shapes use the $G$-blocked kernels;
$D>256$ falls back to eager execution with a warning.
Tile settings are tuned separately for each $G$.
The forward and backward kernels are registered as
PyTorch custom operators with shape inference and
autograd support, allowing the tested model to compile
as a single graph.

\paragraph{$G$-blocked kernels.}
For larger or irregular slice counts, the kernels process
the slice axis in blocks of $G_B$.
A statistics pass computes the maximum logit $m_n$ and
the shifted exponential sum $l_n$ for each point and head.
Subsequent kernels reconstruct weights blockwise as
\[
w_{ng}
=
\frac{\exp(a_{ng}-m_n)}{l_n},
\]
where $a_{ng}$ is the temperature-scaled logit.

Slice programs own a block of slots and reduce over points,
producing partial token numerators and weight sums.
Column normalization is applied after these partials have
been combined.
Deslice programs own a block of points and reduce over slots.
When statistics are unavailable, deslicing uses an online
softmax reduction and saves $(m,l)$ for reuse.
Operations with the same membership weights can share
these statistics.

Backward uses separate point-owned and slot-owned kernels.
The point-owned kernel makes two passes over slots to
compute the softmax correction and input gradients.
The slot-owned kernel accumulates projection, bias, and
token gradients over points.
This requires more logit recomputation than the single-tile
implementation. The blocked family therefore uses its own
tile settings, with $G_B$ selected from $16$--$64$ according
to the dot-product tile size.

\subsection{Numerical validation}

\paragraph{Precision modes and acceptance criteria.}
Table~\ref{tab:dotmodes} lists the supported dot-product
modes. All modes accumulate in fp32.
The default \texttt{ieee} mode uses fp32 FMA operations;
the other modes trade numerical accuracy for tensor-core
throughput.

We compare outputs and parameter gradients against an
fp64 eager reference, using the corresponding eager error
to set the acceptance threshold.
For the default fp32 mode, the threshold is
$1.25$ times the eager fp32 error.
It is relaxed to $3$ times when the eager reference error
exceeds $10^{-4}$.
The bf16-dot modes use a threshold of $2$ times the
eager-autocast error, and the TF32 modes use $10$ times
the eager fp32 error.
For the blocked kernels, the operator-level temperature
gradient uses a $2$-times threshold; the layer-level tests
at the paper configurations retain the $1.25$-times
threshold.
Repeated fused runs must also agree bitwise in these tests.

\begin{table}[htbp]
\caption{\textbf{Dot-precision levels of the kernels.} Accumulation is
fp32 throughout; the gate is the admissible error against an fp64
reference, relative to the corresponding eager computation. \texttt{ieee}
is Triton's FMA path and needs no tensor cores.}
\label{tab:dotmodes}
\begin{center}\footnotesize
\resizebox{\linewidth}{!}{%
\begin{tabular}{lllll}
\toprule
level & logits dot & value dots & inputs & gate \\
\midrule
\texttt{ieee} (default) & fp32 FMA & fp32 FMA & fp32 or bf16 & $1.25\times$ eager fp32 \\
\texttt{tf32}   & fp32 FMA & tf32 tensor cores & fp32 or bf16 & $10\times$ \\
\texttt{bf16v}  & fp32 FMA & bf16 tensor cores & bf16 & $2\times$ eager bf16 autocast \\
\texttt{bf16}   & bf16 tensor cores & bf16 tensor cores & bf16 & $2\times$ eager bf16 autocast \\
\texttt{tf32x3} & three tf32 products & three tf32 products & fp32 & $10\times$; measured $3$--$7\times$ eager fp32 \\
\bottomrule
\end{tabular}}
\end{center}
\end{table}

\paragraph{Temperature gradients.}
The temperature gradient is particularly sensitive to
rounding in the softmax backward:
\[
\frac{\partial L}{\partial\tau}
=
-\sum_{n,g}
\frac{\partial L}{\partial a_{ng}}
\frac{a_{ng}}{\tau}.
\]
In exact arithmetic,
$\sum_g \partial L/\partial a_{ng}=0$ for each point.
Rounding errors in this cancellation can affect the
temperature gradient.

We therefore save $(m,l)$ directly, construct the softmax
correction and logit gradients from the same rounded
products, and use correctly rounded reciprocals for
$1/l_n$ and $1/\tau$.
In our tests, approximate division increased the temperature
gradient error to $8$--$14$ times the eager error.

\paragraph{Layer and optimizer tests.}
With TF32 disabled, the tested single-tile configurations
and ablation variants have fused output and parameter-gradient
errors at or below those of eager fp32, typically
$5\times10^{-7}$ compared with $1$--$4\times10^{-6}$.
Starting from identical parameters, a 60-step optimizer
comparison gives a relative loss difference of at most
$2\times10^{-6}$.

Under bf16 autocast, the output errors are
$2.5\times10^{-3}$ for fused and $2.6\times10^{-3}$
for eager, with gradient errors within $5\%$ of each other.
Table~\ref{tab:sys} uses fp32 dots for the fp32 measurements
and bf16 tensor-core dots for the autocast measurements.

\subsection{Single-GPU performance}

\paragraph{Point-count sweep.}
Table~\ref{tab:sys-eager-full} reports inference and training
measurements against uncompiled eager PyTorch, without
activation checkpointing.
The fused implementation reduces time and peak memory by
avoiding materialized slice weights and layout copies.

Additional measurements near the memory limit increase
the largest tested inference size from $8.4$M to $10.5$M
points in fp32 and from $10.5$M to $14.7$M under bf16
autocast.
For training, the corresponding sizes increase from
$786$k to $917$k points in fp32 and from $1$M to $1.57$M
in bf16.

Precision affects the speedup.
Under autocast, retaining fp32 dots gives
$1.20$--$1.31$ times the eager training throughput at
$262$k and $1$M points, compared with
$1.53$--$1.70$ times using bf16 dots.
At $262$k points with fp32 inputs, the optional TF32 mode
gives a $1.45$-times speedup, compared with approximately
$1.35$ times for fp32 dots.

\begin{table}[t]
\caption{\textbf{Complete single-GPU sweep behind Table~\ref{tab:sys}}
(one job, eager PyTorch, no compilation, no checkpointing). Each cell is
median time in ms with peak memory in GB; ``OOM'' $=$ does not fit in
$95$\,GB. ``ours'' is the exact-fp32-dot tier in the fp32 block and the
bf16 tensor-core tier under autocast.}
\label{tab:sys-eager-full}
\begin{center}\small
\begin{tabular}{lcccc}
\toprule
 & \multicolumn{2}{c}{inference: ms (GB)} & \multicolumn{2}{c}{training step: ms (GB)} \\
\cmidrule(lr){2-3}\cmidrule(lr){4-5}
$N$ & eager & ours & eager & ours \\
\midrule
\multicolumn{5}{@{}l}{\emph{fp32}} \\
$262$k    & $58.4$ ($2.6$)     & $35.3$ ($2.1$)     & $171$ ($28.0$)     & $126$ ($24.0$)     \\
$1$M   & $229$ ($10.1$)     & $138$ ($8.1$)      & OOM                & OOM                \\
$2$M      & $475$ ($20.2$)     & $280$ ($16.2$)     & OOM                & OOM                \\
$4$M      & $980$ ($40.3$)     & $566$ ($32.3$)     & OOM                & OOM                \\
$8.4$M    & $1944$ ($80.5$)    & $1160$ ($64.5$)    & OOM                & OOM                \\
$12.6$M   & OOM                & OOM                & OOM                & OOM                \\
\midrule
\multicolumn{5}{@{}l}{\emph{bf16 autocast}} \\
$262$k    & $45.6$ ($2.0$)     & $29.6$ ($1.5$)     & $143$ ($20.4$)     & $93.1$ ($15.4$)    \\
$1$M   & $174$ ($7.6$)      & $113$ ($5.6$)      & $585$ ($81.3$)     & $345$ ($61.3$)     \\
$2$M      & $349$ ($15.2$)     & $226$ ($11.2$)     & OOM                & OOM                \\
$4$M      & $700$ ($30.3$)     & $456$ ($22.3$)     & OOM                & OOM                \\
$8.4$M    & $1420$ ($60.5$)    & $932$ ($44.5$)     & OOM                & OOM                \\
$12.6$M   & OOM                & $1411$ ($66.7$)    & OOM                & OOM                \\
$16.8$M   & OOM                & OOM                & OOM                & OOM                \\
\bottomrule
\end{tabular}
\end{center}
\end{table}

\paragraph{Slice count and depth.}
Table~\ref{tab:sys-scaling} varies $G$ and $L$ at
$N=262$k points.
From $G=16$ to $128$, fused bf16 peak memory remains
approximately $15.4$\,GB, while eager memory increases
from $17.9$ to $36.9$\,GB.
Over the same range, the training speedup increases
from $1.31$ to $2.18$ times.

Peak memory grows approximately linearly with depth for
both implementations, with a smaller slope for the fused
model.
At $G=32$, the tested configurations fit up to $32$ layers
in fp32 and $48$ in bf16, compared with $16$ and $32$
for eager.
Figure~\ref{fig:systems-time} shows the corresponding
step times.

\begin{table}[t]
\caption{\textbf{Structural scaling}: peak memory of a training step at
$N{=}262$k as the two configuration knobs move, each block measured in one
job. ``OOM'' $=$ does not fit in $95$\,GB. Fused peak memory changes little over the tested range of $G$; both grow linearly in depth, with different slopes.}
\label{tab:sys-scaling}
\begin{center}\small
\begin{tabular}{lrrrrc}
\toprule
 & \multicolumn{2}{c}{fp32, peak GB} & \multicolumn{2}{c}{bf16, peak GB} & bf16 step \\
\cmidrule(lr){2-3}\cmidrule(lr){4-5}
 & eager & ours & eager & ours & speedup \\
\midrule
\multicolumn{6}{@{}l}{\emph{slice count $G$} (depth $L{=}8$)} \\
$G{=}16$  & $26.0$ & $24.0$ & $17.9$ & $\mathbf{15.4}$ & $1.31\times$ \\
$G{=}32$  & $28.0$ & $24.0$ & $20.4$ & $\mathbf{15.4}$ & $1.54\times$ \\
$G{=}64$  & $32.0$ & $24.0$ & $25.4$ & $\mathbf{15.4}$ & $1.83\times$ \\
$G{=}128$ & $41.7$ & $24.0$ & $36.9$ & $\mathbf{15.4}$ & $2.18\times$ \\
\midrule
\multicolumn{6}{@{}l}{\emph{depth $L$} ($G{=}32$)} \\
$L{=}4$   & $15.0$ & $13.0$ & $10.9$ & $8.4$  & $1.52\times$ \\
$L{=}8$   & $28.0$ & $24.0$ & $20.4$ & $15.4$ & $1.51\times$ \\
$L{=}16$  & $54.1$ & $46.1$ & $39.5$ & $29.5$ & $1.42\times$ \\
$L{=}32$  & OOM    & $90.2$ & $77.6$ & $57.6$ & $1.50\times$ \\
$L{=}48$  & OOM    & OOM    & OOM    & $85.7$ & --- \\
\bottomrule
\end{tabular}
\end{center}
\end{table}

\begin{figure}[t]
\centering
\includegraphics[width=0.76\linewidth]{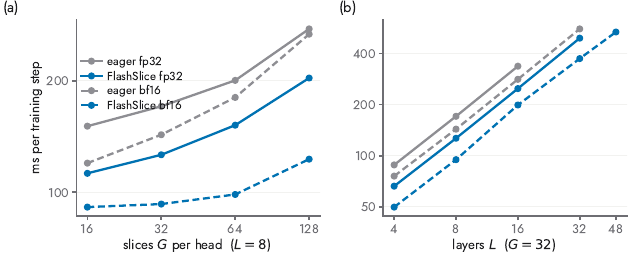}
\caption{\textbf{Step time of the sweeps of Figure~\ref{fig:systems}(b,~c).}
Median time of a training step against (a)~slice count and (b)~depth
($N{=}262$k, eager PyTorch against the fused kernels, no compilation or
checkpointing; log axes). The speedup grows with $G$, from $1.31\times$ at
$G{=}16$ to $2.18\times$ at $G{=}128$ under bf16, and is a constant factor
in depth.}
\label{fig:systems-time}
\end{figure}

\paragraph{Compiled baselines.}
Compilation substantially improves the baseline, so
Table~\ref{tab:sys-single} also compares against
\texttt{torch.compile}.
The compiled fused configuration with fast dots is
$1.24$--$1.26$ times faster per training step than the
compiled baseline in the tested settings.
These are configuration-level comparisons: the fused
rows use TF32 dots for fp32 inputs and bf16 tensor-core
dots for bf16 inputs.
The corresponding inference speedups are $1.4$--$1.5$.

\begin{table}[t]
\caption{\textbf{Single-GPU training step (ms) against compiled
baselines}, medians of $\geq 10$ iterations; each precision block is
measured in one job, and anchors reproduce across jobs within $1\%$.
``fast dot'' $=$ TF32 (fp32 inputs) or bf16 tensor-core (bf16 inputs)
tier.}
\label{tab:sys-single}
\begin{center}\small
\begin{tabular}{lccc}
\toprule
 & fp32, $262$k & bf16, $262$k & bf16, $1$M \\
\midrule
eager                        & $170.5$ & $142.9$ & $586$ \\
eager $+$ compile            & $134.8$ & $79.5$  & $300$ \\
fused (exact fp32 dots)      & $128.4$ & $118.9$ & $448$ \\
fused $+$ compile $+$ fast dot & $\mathbf{106.6}$ & $\mathbf{63.9}$ & $\mathbf{241}$ \\
\midrule
vs.\ strongest baseline      & $1.26\times$ & $1.24\times$ & $1.25\times$ \\
\bottomrule
\end{tabular}
\end{center}
\end{table}

\paragraph{Activation checkpointing.}
Table~\ref{tab:sys-ckpt} compares both implementations with
non-reentrant activation checkpointing and compilation.
The fused configurations are $1.31$--$1.34$ times faster
in bf16 and $1.39$--$1.42$ times faster with fp32 inputs,
with approximately $9\%$ lower memory use.

In these tests, both implementations train up to $4$M
points per GPU in bf16.
Thus, with checkpointing and compilation enabled, fusion
reduces runtime and memory but does not increase the
largest tested training size.

\begin{table}[t]
\caption{\textbf{Checkpointed training (ms), one job.}
\texttt{torch.compile} traverses non-reentrant activation checkpointing in this stack, halving the checkpointed baseline; the baseline uses checkpointing and compilation.}
\label{tab:sys-ckpt}
\begin{center}\small
\begin{tabular}{lcccc}
\toprule
 & bf16, $1$M & bf16, $4$M & fp32, $1$M & fp32, $2$M \\
\midrule
ckpt                     & $740$ & $2999$ & $921$ & $1874$ \\
ckpt $+$ compile         & $365$ & $1479$ & $696$ & $1432$ \\
fused ckpt $+$ compile $+$ fast dot & $\mathbf{278}$ & $\mathbf{1106}$ & $\mathbf{500}$ & $\mathbf{1006}$ \\
\midrule
vs.\ ckpt $+$ compile    & $1.31\times$ & $1.34\times$ & $1.39\times$ & $1.42\times$ \\
\bottomrule
\end{tabular}
\end{center}
\end{table}

\paragraph{Large slice counts.}
Table~\ref{tab:sys-blocked} evaluates one attention layer
using the $G$-blocked kernels.
At $N=262$k, fused bf16 training memory remains about
$2$\,GB from $G=48$ to $1024$.
At $G=512$ and $N=1$M, eager runs out of memory while
the fused layer uses $8.1$\,GB.

Runtime depends strongly on dot precision.
For the reported large-$G$ training configurations,
bf16 tensor-core modes give speedups of
$3.98$--$6.45$ times, whereas fp32 FMA modes can be
slower than eager.
The \texttt{tf32x3} mode gives speedups of up to
$1.92$ times with fp32 inputs.

The kernels are not faster for every shape.
At $D=256$, the tested blocked configuration reaches
$0.79$ times eager throughput with $4\%$ lower memory.
At the paper configuration, $G=32$, the single-tile
implementation is faster than the blocked implementation,
which requires additional statistics and recomputation.

\begin{table}[htbp]
\caption{\textbf{One layer at large slice counts, eager against the
$G$-blocked kernels} ($H{=}8$, $D{=}32$, one GH200, medians of ten;
training step unless marked). ``inputs / dots'' is the activation dtype
and the dot-precision level of Table~\ref{tab:dotmodes}; ``OOM'' $=$ does
not fit in $95$\,GB.}
\label{tab:sys-blocked}
\begin{center}\small
\setlength{\tabcolsep}{4.5pt}
\resizebox{\linewidth}{!}{%
\begin{tabular}{llrrrcrrc}
\toprule
inputs / dots & $G$ & $N$ & eager (ms) & ours (ms) & speedup & eager (GB) & ours (GB) & saved \\
\midrule
bf16 / bf16 & $256$  & $262$k & $47.5$  & $11.9$  & $3.98\times$ & $13.7$ & $2.0$ & $85\%$ \\
bf16 / bf16 & $256$  & $1$M   & $293.7$ & $45.5$  & $6.45\times$ & $54.5$ & $8.1$ & $85\%$ \\
bf16 / bf16 & $512$  & $262$k & $90.6$  & $20.0$  & $4.53\times$ & $26.7$ & $2.0$ & $92\%$ \\
bf16 / bf16 & $512$  & $1$M   & OOM     & $80.3$  & ---          & ---    & $8.1$ & ---    \\
bf16 / bf16 & $1024$ & $262$k & $180.2$ & $37.6$  & $4.79\times$ & $52.7$ & $2.1$ & $96\%$ \\
bf16 / bf16, inference & $256$ & $1$M   & $39.4$  & $13.3$ & $2.96\times$ & $17.0$ & $2.1$ & $88\%$ \\
bf16 / bf16, inference & $256$ & $4.2$M & $157.3$ & $53.2$ & $2.95\times$ & $68.0$ & $8.3$ & $88\%$ \\
bf16 / bf16 & $48$   & $262$k & $13.0$  & $6.1$   & $2.12\times$ & $3.1$  & $2.0$ & $35\%$ \\
\midrule
fp32 / tf32x3 & $256$ & $262$k & $45.4$  & $37.4$  & $1.21\times$ & $14.8$ & $2.0$ & $86\%$ \\
fp32 / tf32x3 & $256$ & $1$M   & $283.6$ & $147.7$ & $1.92\times$ & $59.0$ & $8.1$ & $86\%$ \\
fp32 / tf32   & $256$ & $262$k & $45.1$  & $40.5$  & $1.11\times$ & $14.8$ & $2.0$ & $86\%$ \\
fp32 / ieee   & $256$ & $262$k & $45.1$  & $53.2$  & $0.85\times$ & $14.8$ & $2.0$ & $86\%$ \\
fp32 / ieee   & $256$ & $1$M   & $284.2$ & $209.6$ & $1.36\times$ & $59.0$ & $8.1$ & $86\%$ \\
bf16 / tf32   & $256$ & $262$k & $47.5$  & $40.0$  & $1.19\times$ & $13.7$ & $2.0$ & $85\%$ \\
bf16 / ieee   & $256$ & $262$k & $47.4$  & $64.0$  & $0.74\times$ & $13.7$ & $2.0$ & $85\%$ \\
bf16 / ieee   & $256$ & $1$M   & $294.3$ & $253.4$ & $1.16\times$ & $54.5$ & $8.1$ & $85\%$ \\
\bottomrule
\end{tabular}}
\end{center}
\end{table}

\subsection{Multi-GPU execution and compilation}

\paragraph{Point sharding.}
We shard one sample's points across four GPUs.
In the forward pass, each GPU computes local token
numerators and weight sums, which are all-reduced before
normalization.
At the paper configuration, this reduction contains
approximately $33$\,KB per layer.
Sharded and single-GPU forward results agree within
fp32 reduction-order differences.

Table~\ref{tab:sys-shard} reports training at $16.8$M
points with checkpointing and bf16.
The fused compiled configuration takes $1152$\,ms per
step, compared with $3043$\,ms for the compiled eager
pipeline, giving approximately $2.6$ times the throughput.
Peak memory decreases from $86.6$ to $78.6$\,GB per GPU.
Step timings include parameter-gradient synchronization.

Compilation has little effect on the eager sharded
pipeline in this software stack.
The custom-operator implementation composes with
compilation and checkpointing in the tested configuration.
The measured speedup therefore reflects the full pipeline,
including its compilation behavior.

\begin{table}[t]
\caption{\textbf{One node, $4\times$GH200, $16.8$M points, bf16}, all
configurations in one job; DDP-equivalent step including the gradient
all-reduce. Compilation gives little improvement on the eager sharded pipeline; the
operator form composes.}
\label{tab:sys-shard}
\begin{center}\small
\begin{tabular}{lccc}
\toprule
 & step (ms) & Mpts/s & GB/GPU \\
\midrule
ckpt $+$ sharding                   & $3048$ & $5.5$ & $86.5$ \\
ckpt $+$ sharding $+$ compile       & $3043$ & $5.5$ & $86.6$ \\
fused ckpt $+$ compile $+$ bf16 dot & $\mathbf{1152}$ & $\mathbf{14.6}$ & $78.6$ \\
\bottomrule
\end{tabular}
\end{center}
\end{table}

\paragraph{Implementation settings.}
All compiled measurements use static per-shape graphs.
For Triton~3.0, the TF32 path uses one pipeline stage,
and tensor-core configurations with $G=16$ use four
warps to avoid compilation failures encountered during
tuning.
These settings and the kernel tile tables are included
in the implementation.

\end{document}